\documentclass{article}

% if you need to pass options to natbib, use, e.g.:
% \PassOptionsToPackage{numbers, compress}{natbib}
% before loading nips_2016
%
% to avoid loading the natbib package, add option nonatbib:
% \usepackage[nonatbib]{nips_2016}

%\usepackage{nips_2016}

% to compile a camera-ready version, add the [final] option, e.g.:
 \usepackage[final,nonatbib]{nips_2016}

\usepackage[utf8]{inputenc} % allow utf-8 input
\usepackage[T1]{fontenc}    % use 8-bit T1 fonts
\usepackage{hyperref}       % hyperlinks
\usepackage{url}            % simple URL typesetting
\usepackage{booktabs}       % professional-quality tables
\usepackage{amsfonts}       % blackboard math symbols
\usepackage{nicefrac}       % compact symbols for 1/2, etc.
\usepackage{microtype}      % microtypography

% For Math
\usepackage{amsmath}
\usepackage{amsthm}
\usepackage{amssymb}
% For algorithms
\usepackage{algorithm}
\usepackage{algorithmic}
%\newcommand{\theHalgorithm}{\arabic{algorithm}}
%%%%%%%%%%%%%% Figures
\usepackage{graphicx}
\usepackage{subcaption}
\usepackage{rotating}
%%%%%%%%%%%%%%%% For Theorems
\newtheorem{thm}{Theorem}
\newtheorem{lm}{Lemma}
\newtheorem{claim}{Claim}
\newtheorem{defn}{Definition}

\newtheorem{prop}{Proposition}

\title{Convergence rate of stochastic $k$-means}

% The \author macro works with any number of authors. There are two
% commands used to separate the names and addresses of multiple
% authors: \And and \AND.
%
% Using \And between authors leaves it to LaTeX to determine where to
% break the lines. Using \AND forces a line break at that point. So,
% if LaTeX puts 3 of 4 authors names on the first line, and the last
% on the second line, try using \AND instead of \And before the third
% author name.

\author{
  Cheng Tang\\
  Department of Computer Science\\
  George Washington University\\
  Washington, DC 20052 \\
  \texttt{tangch@gwu.edu} \\
  %% examples of more authors
  \And
Claire Monteleoni \\
  Department of Computer Science\\
  George Washington University\\
  Washington, DC 20052 \\
  \texttt{cmontel@gwu.edu} \\
  %% \texttt{email} \\
  %% \AND
  %% Coauthor \\
  %% Affiliation \\
  %% Address \\
  %% \texttt{email} \\
  %% \And
  %% Coauthor \\
  %% Affiliation \\
  %% Address \\
  %% \texttt{email} \\
  %% \And
  %% Coauthor \\
  %% Affiliation \\
  %% Address \\
  %% \texttt{email} \\
}

\begin{document}
% \nipsfinalcopy is no longer used

\maketitle

\begin{abstract}
We analyze online \cite{BottouBengio} and mini-batch \cite{Sculley} $k$-means variants. Both scale up the widely used $k$-means algorithm via stochastic approximation, and have become popular for large-scale clustering and unsupervised feature learning.
We show, for the first time, that starting with any initial solution, they converge to a ``local optimum'' at rate $O(\frac{1}{t})$ (in terms of the $k$-means objective) under general conditions.  
In addition, we show if the dataset is clusterable, when initialized with a simple and scalable seeding algorithm, mini-batch $k$-means converges to an optimal $k$-means solution at rate $O(\frac{1}{t})$ with high probability. 
The $k$-means objective is non-convex and non-differentiable: we exploit ideas from recent work on stochastic gradient descent for non-convex problems \cite{ge:sgd_tensor, balsubramani13} by providing a novel characterization of the trajectory of $k$-means algorithm on its solution space, and circumvent the non-differentiability problem via geometric insights about $k$-means update.
%We combine ideas from  to analyze the NP-hard $k$-means clustering problem, which has a non-convex and non-differentiable objective.
%Our analysis resembles that of stochastic gradient descent, while circumventing the non-differentiability problem via geometric insights of $k$-means update.
%Combining techniques from stochastic approximation with a precise characterization of the trajectory of batch Lloyd's algorithm on the solution space, 
\end{abstract}
%%%%%%%%%%%%%%%%%%%%%%% INTRODUCTION
\section{Introduction}
\label{sec:intro}
Lloyd's algorithm (batch $k$-means) \cite{lloyd} is one of the most popular heuristics for clustering \cite{jain:kmeans}. 
However, at every iteration it requires computation of the closest centroid to every point in the dataset. Even with fast implementations such as \cite{elkan:kmeans_fast}, which reduces the computation for finding the closest centroid of each point, the per-iteration running time still depends linearly on $n$, making it a computational bottleneck for large datasets.  
%However, in many applications, the goal of ``clustering'' is to find a set of $k$ centroids that represents data well.
%For example, in computer vision, Lloyd's algorithm is widely used for learning dictionaries \cite{locality_JJW, lazebnik, feifei:bayes}, where each learned centroid is treated as an dictionary item. In such cases, centroids learned by Lloyd's algorithm are used to represent a dataset \cite{agarwal:overcomplete,overcomplete, bengio_represent}.
 To scale up the centroid-update phase, a plausible recipe is the ``stochastic approximation'' scheme \cite{bottou:SA}: the overall idea is, at each iteration, the centroids are updated using one (online \cite{BottouBengio}) or a few (mini-batch \cite{Sculley}) randomly sampled points instead of the entire dataset. In the rest of the paper, we refer to both as stochastic $k$-means, which we formally present as Algorithm \ref{alg:MBKM}. 
Empirically, stochastic $k$-means has gained increasing attention for large-scale clustering and is included in widely used machine learning packages, such as \texttt{Sofia-ML} \cite{Sculley} and \texttt{scikit-learn} \cite{scikit-learn}.
Figure \ref{fig:demo} demonstrates the efficiency of stochastic $k$-means against batch $k$-means on the \texttt{RCV1} dataset \cite{data:rcv1}. The advantage is clear, and the results raise some natural questions:
Can we characterize the convergence rate of stochastic $k$-means?
Why do the algorithms appear to converge to different ``local optima''?
Why and how does mini-batch size affect the quality of the final solution?
Our goal is to address these questions rigorously.

Our main contribution\footnote{
An improved and better (according to the authors' opinion) version of this paper since the NIPS submission 
is available at 
\url{https://arxiv.org/abs/1610.04900}
} is the $O(\frac{1}{t})$ global convergence of stochastic $k$-means. Our key idea is to keep track of the distance from the algorithm's current clustering solution to the set of all ``local optima'' of batch $k$-means: when the distance is large with respect to all local optima, the drop in $k$-means objective after one iteration of stochastic $k$-means update is lower bounded; when the algorithm is close enough to a local optimum, the algorithm will be trapped by it and maintains the $O(\frac{1}{t})$ convergence rate thanks to a geometric property of local optima.
%%%%%%%%%%%%% Figure: solution spaces
\begin{figure}
\begin{subfigure}{0.46\textwidth}
\includegraphics[height = \linewidth, width = 0.6\linewidth, angle = -90]{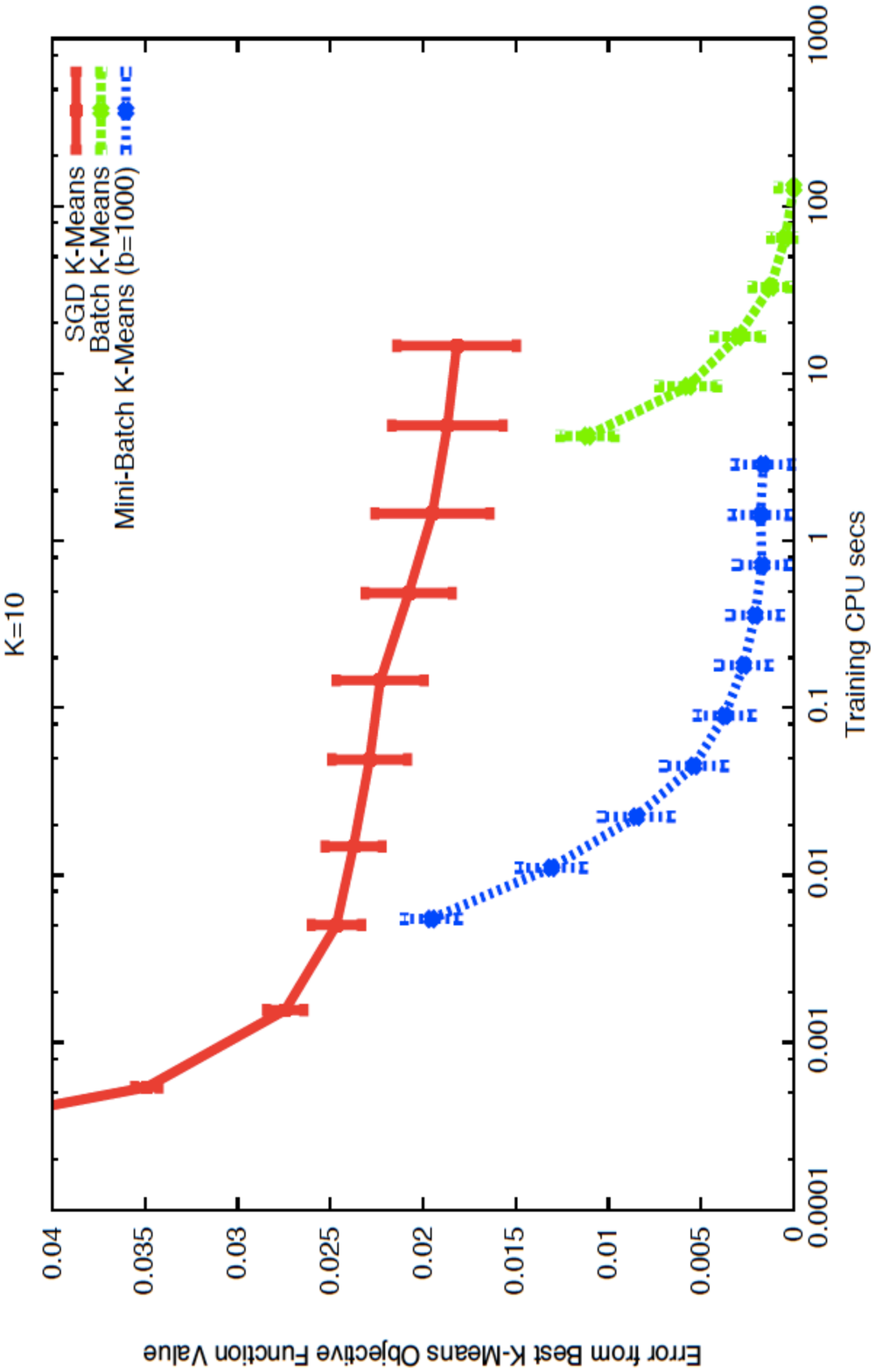}
\caption{Figure from \cite{Sculley}, demonstrating the relative performance
of online, mini-batch, and batch $k$-means.
}
\label{fig:demo}
\end{subfigure}\hfill
\begin{subfigure}{0.5\textwidth}
\includegraphics[height=\linewidth, width = 0.55\linewidth, angle = -90]{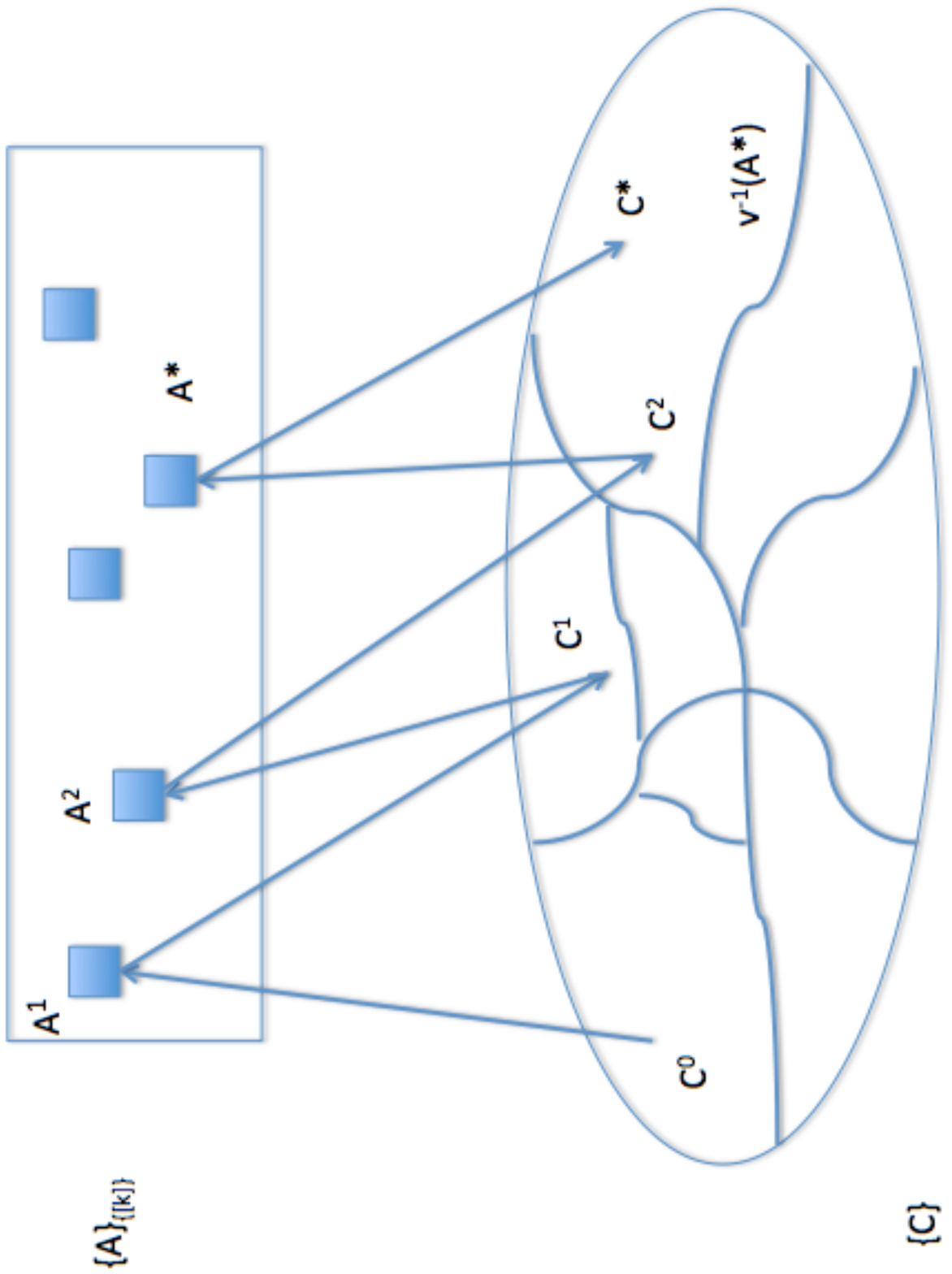}
\caption{An illustration of one run of Lloyd's algorithm:
the arrows represent k-means updates in action.
}
\label{fig:solution_spaces}
\end{subfigure}
%\begin{subfigure}{0.33\textwidth}
%\includegraphics[height=0.7\linewidth, width = \linewidth]{}
%\caption{The intuition of our geometric assumption
%}
%\label{fig:solution_spaces}
%\end{subfigure}
\caption{}
\vspace{-0.5cm}
\end{figure}
%%%%%%%%%%%%%
\paragraph{Notation}
The input to our clustering problem is a discrete dataset $X$; $\forall x\in X$, $x\in \mathbb{R}^d$. 
We use letter $A$ to denote clusterings of $X$;
we use letter $C$ to denote a set of $k$-centroids.
Superscripts are used to indicate a particular clustering, e.g., $A^t$ denotes the clustering at the $t$-th iteration in Algorithm \ref{alg:MBKM} (or batch $k$-means);
subscripts indicate individual clusters in a clustering: $c_r$ denotes the $r$-th centroid in $C$ corresponding to the $r$-th cluster $A_r$.
We use letter $n$ to denote cardinality, $n = |X|$, $n_r=|A_r|$, etc.
%%%%%%%%%%% Duality between clustering and centroid assignment
%\paragraph{Mappings}
Fix a point set $Y$, we let $m(Y)$ denote the mean of $Y$. Each clustering $A:=\{A_s,s\in [k]\}$ induces a unique set of centroids $m(A):=\{m(A_s), s\in[k]\}$. 
Fix $C=\{c_r,r\in [k]\}$, we let $V(c_r)$ denote the Voronoi cell belonging to centroid $c_r$, i.e., $\{x\in \mathbb{R}^d, \|x-c_r\|\le\|x-c_s\|,\forall s\ne r\}$, and we use $V(C)$ to denote the Voronoi diagram induced by a set of centroids $C$.
%($\|\cdot\|$ denotes the $l_2$ norm).
%%%%%%%%%%%%%% k-means cost related notation
%\paragraph{$K$-means cost} 
%For any point set $Y$, we denote its $k$-means cost with respect to an arbitrary set of $k$-centroids $C$ by $\phi(C,Y):=\sum_{y\in Y}\|y-C(y)\|^2$.
%For a $k$-clustering $A$, we denote its $k$-means cost with respect to an arbitrary set of $k$ centroids $C$ by 
%$\phi(C,A):=\sum_{r=1}^{k}\phi(C,A_r)$ 
Fix $C$ with $k$-centroids, we denote its $k$-means cost with respect to a $k$-clustering $A$ by
$\phi(C,A):=\sum_{r=1}^{k}\sum_{x\in A_r}\|x-c_r\|^2$ 
(or simply $\phi(C)$ when $A$ is induced by the Voronoi diagram of $C$).
We let $\phi^{*}:=\phi(C^*)$, $\phi^t:=\phi(C^t)$, and let $\phi^{*}_r$ ($\phi^t_r$) denote the $k$-means cost of cluster $A^*_r$ ($A^t_r$). $\phi^{opt}$ and $\phi^{opt}_r$ are similarly defined for the cost of an optimal $k$-means clustering.
We use $\pi: [k]\rightarrow [k]$ to denote permutations between two sets of the same cardinality.
%%%%%%%%%%%%%%%%%%%%%%%%%%%%%%%%%%%%%%%%%%%%%%
%%%%%%%%%%%%%%%%%%%%%%% Framework and Notation
\section{A framework for tracking batch $k$-means in the solution space}
\label{sec:framework}
First, we develop insights for batch $k$-means to facilitate our analysis of stochastic $k$-means.\footnote{Due to space limit, we move the detailed construction of the framework to Appendix A, where more formal definitions and omitted discussions of issues such as degenerate cases can be found.} 
Batch $k$-means initializes the position of $k$ centroids $C^0$ via a seeding algorithm. Then $\forall t\ge 1$,
\begin{enumerate}
\vspace{-0.2cm}
\item
Obtain $A^{t}$ by assigning each $x$ to its closest centroid (Voronoi cell).
\item
For all $A_r^t$ that is not empty, obtain $c_r^t:=m(A_r^t)$.
\vspace{-0.2cm}
\end{enumerate}
The algorithm alternates between two solution spaces: the continuous space of all $k$-centroids, which we denote by $\{C\}$, and the finite set of all $k$-clusterings, which we denote by $\{A\}$. One problem with $k$-means is it may produce degenerate solutions: if the solution $C^t$ has $k$ centroids, it is possible that data points are mapped to only $k^{\prime}<k$ centroids. To handle degenerate cases, starting with $|C^0|=k$, we
consider an enlarged clustering space $\{A\}_{[k]}$, which is the union of all $k^{\prime}$-clusterings 
%which we denote by $\{A\}_{k^{\prime}}$,
with $1\le k^{\prime}\le k$. 
Our key idea is that $\{C\}$ can be partitioned into equivalence classes by the clustering they induce on $X$, and the algorithm stops if and only if two consecutive iterations stay within the same equivalence class in \{C\}. 
%That is, we can relate each set of $k$ centroids, $C$, to its $k$-clustering of $X$, $V(C)\cap X$.
Specifically, for any $C$, let 
$
v(C):= V(C)\cap X \in \{A\}_{[k]}
$ denote the clustering they induce on $X$ via their Voronoi diagram.
%We know $v$ is well-defined, since the Voronoi partition of a set of centroids is unique; 
%$v$ is onto by definition.
Then $C_1,C_2$ are in the same equivalence class in $\{C\}$ if $v(C_1)=v(C_2)=A$.
In lieu of this construction, the algorithm goes from $\{C\}$ to $\{A\}_{[k]}$ via mapping $v:\{C\}\rightarrow \{A\}_{[k]}$;
it goes from $\{A\}_{[k]}$ to $\{C\}$ via the mean operation $m:\{A\}_{[k]}\rightarrow \{C\}$.
Figure \ref{fig:solution_spaces} illustrates how batch $k$-means alternates between
two solution spaces until convergence.
We use the pre-image $v^{-1}(A)\in \{C\}$ to denote the equivalence class induced by clustering $A$. When batch $k$-means visits a clustering $A^t$, if $m(A^t)\notin v^{-1}(A^t)$, the algorithms jumps to another clustering $A^{t+1}$.
If $m(A^t)\in v^{-1}(A^t)$, the algorithm stops because $A^{t+1}=A^t$ and $m(A^{t+1})=m(A^t)$. 
We thus formalize the idea of ``local optima'' of batch $k$-means as below.
%%%
 \begin{defn}[Stationary clusterings]
We call $A^*$ a stationary clustering of $X$, if $m(A^*)\in Cl(v^{-1}(A^*))$.
We let $\{A^*\}_{[k]}\subset \{A\}_{[k]}$ denote the set of all stationary clusterings of $X$
with number of clusters $k^{\prime}\in [k]$.
\end{defn}
%%%
The operator $Cl(\cdot)$ denotes the ``closure'' of an equivalence class $v^{-1}(A^*)$, which includes its ``boundary points'', a set of centroids that induces ambiguous clusterings (this happens when there is a data point on the bisector of two centroids in a solution; see Appendix A for details). For each $A^*$, we define a matching centroidal solution $C^*$.
%\begin{defn}[Stationary points]
%We call $C^{*}$ a stationary point of dataset $X$, if $v(C^*)\in A^*$ for some 
%$A^*\in \{A^*\}_{[k]}$, and $\forall c_{r}^{*}\in C^{*}$, $c_r^{*}=m(A_r^{*})$
%for some matching $A_r^*$.  
%%with $A_r^{*}:=V(c_r^{*})\cap X$.
%We let $\{C^*\}_{[k]}$ denote the corresponding set of all stationary points of $X$ with.
%\end{defn}
\begin{defn}[Stationary points]
For a stationary clustering $A^*$ with $k^{\prime}$ clusters, we define $C^{*}=\{c_r^*, r\in [k^{\prime}]\}$ to be a stationary point corresponding to $A^*$, so that $\forall A_{r}^{*}\in A^{*}$, $c_r^*:=m(A_r^{*})$.  
%with $A_r^{*}:=V(c_r^{*})\cap X$.
We let $\{C^*\}_{[k]}$ denote the corresponding set of all stationary points of $X$ with $k^{\prime}\in [k]$.
\end{defn}
%Note that the correspondence between $A^*$ and $C^*$ is one to one. And
%by our definition, stationary points cannot be degenerate.
%%%%%%%%%%% Metrics
As preluded in the introduction, defining a distance measure on $\{C\}$ is important to our subsequent analysis; For $C^{\prime}$ and $C$, we let
$\Delta(C^{\prime},C):=\min_{\pi: [k]\rightarrow [k]}\sum_r n_r\|c^{\prime}_{\pi(r)}-c_{r}\|^2$, where $n_r=|A_r|$. 
This distance is asymmetric and non-negative, and evaluates to zero if and only if two sets of centroids coincide. In addition, if $C^*$ is a stationary point, then for any solution $C$, $\Delta(C,C^*)$ upper bounds the difference of $k$-means objective, $\phi(C)-\phi(C^*)$ (Lemma \ref{lm:kmdist_cdist}).
%Recall that the $k$-means objective is not everywhere differentiable so the standard gradient-based analysis is not directly applicable to our analysis, however, $\Delta^t$ serves as a proxy to gradient in our analysis of Algorithm \ref{alg:MBKM} with the following properties:
%\begin{itemize}
%\item
%Optimality condition:
%$\Delta^t = 0$ if and only if $C^t=C^*$ for some stationary point $C^*$.
%\item
%Indicating stage of convergence: When $\Delta^t$ is small, $\phi^t-\phi^*$ is small (Lemma \ref{lm:kmdist_cdist}); 
%when $\Delta^t$ is large, the per-iteration decrease $\phi^{t-1}-\phi^t$ is large (Lemma \ref{lm:kmeans}+Lemma \ref{centroidal}).
%\end{itemize}
%%%%%%%%%%%%%%%%%%%
%%%%%%   ALGORITHM
%%%%%%%%%%%%%%%%%%%
\subsection{Stochastic $k$-means}
Algorithm \ref{alg:MBKM} and stochastic $k$-means \cite{BottouBengio,Sculley} are equivalent up to the choice of learning rate and sampling scheme (the proof of equivalence is in Appendix A).
In \cite{BottouBengio,Sculley}, the per-cluster learning rate is chosen as 
$\eta^t_r:=\frac{\hat{n}_r^t}{\sum_{i\le t}\hat{n}_r^i}$; in our analysis, we choose a flat learning rate $\eta^t=\frac{c^{\prime}}{t_o+t}$ for all clusters, where $c^{\prime}, t_o>0$ are some fixed constants (empirically, no obvious differences are observed; see Section \ref{sec:related_work} for more discussion and Section \ref{sec:EXP} for empirical comparison).
%%%%%
\begin{algorithm}[t]
   \caption{Stochastic $k$-means}
   \label{alg:MBKM}
\begin{algorithmic}
   \STATE {\bfseries Input:} dataset $X$, number of clusters $k$, mini-batch size $m$, learning rate $\eta_r^t, r\in [k]$,
     \texttt{convergence\_criterion}
   \STATE{\bfseries Seeding:} Apply seeding algorithm $\mathcal{T}$ on $X$ and obtain seeds $C^0=\{c_1^0,\dots,c_k^0\}$; 
   \REPEAT
   \STATE At iteration $t$ ($t\ge 1$), obtain sample $S^t\subset X$ of size $m$ uniformly at random with replacement;
     set count $\hat{n}^t_r \leftarrow 0$ and set $S^t_{r}\leftarrow \emptyset$, $\forall r\in [k]$
   \FOR{$s\in S^t$} 
   	   \STATE Find $I(s)$ s.t. $c_{I(s)}=C(s)$
        \STATE $S^t_{I(s)}\leftarrow S^t_{I(s)}\cup s$; $\hat{n}_{I(s)}^t \leftarrow \hat{n}_{I(s)}^t + 1$
   \ENDFOR
   \FOR{$c_r^{t-1}\in C^{t-1}$}
   \IF{$\hat{n}_r^t\ne 0$}
     \STATE $c_r^t \leftarrow (1-\eta^t_r)c_r^{t-1} + \eta^t_r\hat{c}_r^t$ with 
     $\hat{c}_r^t:=\frac{\sum_{s\in S^t_r} s}{\hat{n}_r^t}$
   \ENDIF
   \ENDFOR
   \UNTIL{\texttt{convergence\_criterion} is satisfied}
\end{algorithmic}
\end{algorithm}
Unlike a usual gradient-based algorithm on a continuous domain, the discrete nature of the $k$-means problem causes major differences when stochastic approximation is applied. 

First, for batch $k$-means, at every iteration $t$, $C^{t}$ is chosen as the means, $m(A^{t})$, of the current $k$-clustering $A^t$. Since $\{A\}_{[k]}$ is finite and $m(A)$ is unique for a fixed $A$, the set of ``legal moves'' of batch $k$-means is finite, while $\{C\}$ is continuous.
In stochastic $k$-means, however, $C^t$ can be any member of $\{C\}$ due to both stochasticity and the learning rate. As such, its effective solution space on $\{C\}$ is continuous and thus infinite. Our framework enables us to impose a finite structure on $\{C\}$ by mapping points in $\{C\}$ to points in $\{A\}_{[k]}$.

Second, in stochastic $k$-means, centroids are usually updated asynchronously, especially when the mini-batch size is small compared to $k$; in the extreme case of online $k$-means, centroids are updated one at a time.
Since the positions of centroids implicitly determine the clustering assignment, asynchronous $k$-means updates will lead to a different solution path than fully synchronized ones, even if we ignore the noise introduced by sampling and choose the learning rate to be 1.
Due to asynchronicity, it is also hard to detect when stochastic $k$-means produces a degenerate solution, since centroids may fail to be updated for a long time due to sampling. In practice, implementation such as \texttt{scikit-learn} re-locate a centroid randomly if it fails to be updated for too long. In Algorithm \ref{alg:MBKM}, we do not allow re-location of centroids and our analysis subsumes the degenerate cases.

When analyzing Algorithm \ref{alg:MBKM}, we let $\Omega$ denote the sample space of all outcomes $(C^0,C^1,\hdots,C^t,\hdots)$ and $F_t$ the natural filtration of the stochastic process up to $t$ ($\Omega$ is also used in our statements of Theorems \ref{thm:km} and \ref{thm:solution} as the Big-\textit{Omega}).
We let
$
p_r^t(m):=Pr\{c_r^{t-1}\mbox{~is updated at~}t\mbox{~with mini-batch size~}m\}
$.
%%%%% Algorithm: seeding
%\begin{algorithm}
%\caption{Buckshot seeding \cite{tang_montel:aistats16}}
%\label{alg:seeding}
%\begin{algorithmic}
%\STATE $\{\nu_i,i\in [m_o]\} \leftarrow $ sample $m_o$ points from $X$ uniformly at random with replacement
%\STATE $\{S_1,\dots,S_k\} \leftarrow $run Single-Linkage on $\{\nu_i,i\in [m_o]\}$ until there are only $k$ connected components left
%\STATE $C^0=\{\nu_r^{*},r\in [k]\}\leftarrow$ take the mean of the points in each connected component $S_r,r\in [k]$
%%\For{$i \gets 1,\, 10$}
%%\State $s \gets s + i$
%%\State $p \gets p + s$
%%\EndFor
%\end{algorithmic}
%\end{algorithm}
%%%%%%%%%%%%%
%%%%%%%%%%%%%%%%%%%%%%%%%%%% Main result
\section{Main result}
With two weak assumptions on the properties of stationary points of a dataset $X$, our first result shows stochastic $k$-means has an overall $O(\frac{1}{t})$ convergence rate to a stationary point of batch $k$-means, regardless of initialization.
\paragraph{(A0)}
We assume all stationary points are non-boundary points, i.e., $\forall A^*\in \{A^*\}_{[k]}$, $m(A^*)\in v^{-1}(A^*)$
(By Lemma \ref{lm:stat_stab}, under this assumption, $\exists r_{\min}>0$ such that all stationary points are $(r_{\min},0)$-stable).
%and 
%$\gamma^*_{\max}:=\max_{A^{**}\in \{A^*\}_{[k]}}\max_{r,s\ne r}\frac{n_r^{**}}{n_s^{**}}$ 
\vspace{-0.3cm}
\paragraph{(A1)}
$\forall t>0$, we assume there is an upper bound $B$ on 
\begin{eqnarray*}
\max
\{
E[\sum_r\sum_{x\in A_r^{t+1}}\|x-\hat{c}_r^{t+1}\|^2+\phi^t|F_t],
\sum_{r}n_r^*\langle c_r^{t-1}-c_r^*,\hat{c}_r^t - E[\hat{c}_r^t|F_{t-1}]\rangle, 
\sum_{r}n_r^*\|\hat{c}_r^{t}-c_r^*\|^2 
\}
\end{eqnarray*}
where $n_r^*, c_r^*$ are the cardinality and centroid of the $r$-th cluster in a stationary clustering $A^*$.
%\item[(A2)]
%If we are given $C^*$, a $(b_0,\alpha)$-stable solution, and in Algorithm \ref{alg:MBKM}, 
%let
%$
%p_r^t(m) := Pr\{c_r^{t-1} \mbox{~is updated at iteration~}t\}
%$.
%We assume 
%if $\Delta(C^{t-1},C^*)\le b_0$, 
%then
%$p_r^t\ge p_{\min}$, and
%$\exists c, a >0$ s.t. 
%$\frac{c}{t_o+t}\le\eta^t p_r^t(m) \le \frac{ca}{t_o+t}, \forall r\in [k]$, 
%and 
%$a\sqrt{\alpha}\le 1$.
%%%%%%%%%%%%%%%%%%%%% Theorem: convergence in k-means
\begin{thm}\label{thm:km}
Assume (A0)(A1) holds. Fix any $\delta >0$, if we run Algorithm \ref{alg:MBKM} with learning rate $\eta^t=\frac{c^{\prime}}{t+t_o}$ 
such that for all $t\ge 1$, 
$$
c^{\prime}> \frac{1}{2 p^{+}_{\min}(m)(1-\sqrt{1-\frac{r_{\min}\phi^{opt}}{2\phi^t}})}
\mbox{\quad and\quad}
t_o= 
\Omega
\left\{[\frac{(c^{\prime})^2}{\rho(m)^2(2c^{\prime}-1)}\ln\frac{1}{\delta}]^{\frac{2}{2c^{\prime}-1}}
\right\}
$$ 
$$
\mbox{
where~
}
p^{+}_{\min}(m):=\min_{\substack{r\in [k];\\p_r^t(m)>0}}p_r^t(m),
\rho(m):=1 - (1-p_{\min}^{*})^m, 
\mbox{~and~} 
p^*_{\min}:=\min_{A^{**}\in \{A^*\}_{[k]}}\min_r\frac{n^{**}_r}{n}
$$
Then
starting from any initial set of $k$-centroids $C^0$, 
Let $G$ denote the event 
$\{\exists T, \exists A^{**}\in \{A^*\}_{[k]} \mbox{~s.t.~} A^t = A^{**}, \forall t\ge T \}$.
Then 
$
Pr(G)\ge 1-\delta
$,
and there exists events parametrized by $A^{**}$, denoted by
$G_o(A^{**})$, such that 
$
Pr\{\cup_{A^{**}\in \{A^*\}_{[k]}} G_o(A^{**})\}\ge 1-\delta
$. And for any 
$G_o(A^{**})$, we have
$\forall t\ge 1$,
$
E\{\phi^t-\phi^{**}|G_o(A^{**})\} = O(\frac{1}{t}) \mbox{,~~where~~} \phi^{**}:=\phi(m(A^{**})).
$
\end{thm}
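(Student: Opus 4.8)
The plan is to separate \emph{reaching} a stationary clustering from \emph{converging} once there. I track two coupled quantities: the objective gap $Y_t := \phi^t - \phi^{**}$ and the weighted centroid deviation $\Delta(C^t, C^{**}) = \sum_r n_r^{**}\|c_r^t - c_r^{**}\|^2$ to a matching stationary point $C^{**} = m(A^{**})$ (clusters identified by the minimizing permutation in $\Delta$). There are two things to prove: (i) with probability $\ge 1-\delta$ the algorithm eventually enters and never leaves the equivalence class of some stationary clustering $A^{**}$, which is exactly the event $G$, refined into the events $G_o(A^{**})$; and (ii) conditioned on each $G_o(A^{**})$, that $E[Y_t] = O(\frac{1}{t})$. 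The bridge between the two quantities is Lemma \ref{lm:kmdist_cdist}, which gives $Y_t \le \Delta(C^t, C^{**})$ for stationary $C^{**}$, so in the local phase it suffices to drive $\Delta(C^t, C^{**})$ to zero at rate $\frac{1}{t}$.

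First, in the global phase I would establish a one-step drift inequality. Using the parallel-axis identity for $k$-means, the deterministic full update toward the cell means decreases cluster $r$'s cost by exactly $n_r^t\|c_r^{t-1} - m(A_r^t)\|^2$; expanding the stochastic update $c_r^t = (1-\eta^t)c_r^{t-1} + \eta^t \hat c_r^t$ and taking conditional expectation, the term linear in $\eta^t$ reproduces a fraction of this decrease, weighted by the update probability $p_r^t(m)$, while the term quadratic in $\eta^t$ together with the sampling noise is controlled by the uniform bound $B$ of (A1). The geometric content of (A0) and Lemma \ref{lm:stat_stab} is that, away from the basins, $\sum_r n_r^t\|c_r^{t-1} - m(A_r^t)\|^2$ is bounded below in terms of the objective gap and $r_{\min}$; combined with $\phi^t \ge \phi^{opt}$, this is precisely what forces the stated threshold on $c'$ (the factor $1-\sqrt{1 - r_{\min}\phi^{opt}/(2\phi^t)}$), guaranteeing that the expected per-step decrease strictly dominates the noise while $A^t$ has not stabilized. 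Since $\{A\}_{[k]}$ is finite and $\phi$ is bounded below, a supermartingale argument then forces the algorithm into some basin.

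The step I expect to be the crux is the high-probability non-escape argument, i.e. proving $Pr(G)\ge 1-\delta$ and constructing $G_o(A^{**})$. Once $C^t$ enters the basin of $C^{**}$, I would show the induced clustering stays equal to $A^{**}$ thereafter provided $\Delta(C^t, C^{**})$ never exceeds the stability radius governed by $r_{\min}$. Because $\hat c_r^t$ is an unbiased estimate of $c_r^{**} = m(A^{**}_r)$ on the event $\{A^t = A^{**}\}$, the process $\Delta(C^t, C^{**})$ obeys a contractive recursion with martingale-difference noise whose scale is set by $\eta^t$ and $\rho(m)$; I would form a supermartingale from a tail-dominating transform of this process and apply a maximal inequality to bound the probability it ever crosses the stability radius. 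Taking $t_o$ as large as the displayed $\Omega(\cdot)$ expression makes the early step sizes small enough that this crossing probability is at most $\delta$, which is the origin of the $\ln\frac{1}{\delta}$ and $\rho(m)$ dependence; $G_o(A^{**})$ is then the event that $A^{**}$'s basin is entered and the maximal-deviation bound holds afterward, and a union bound over the finitely many stationary clusterings yields $Pr(\cup_{A^{**}} G_o(A^{**})) \ge 1-\delta$. The delicate point is decoupling the random entry time from the concentration bound, which I would handle by an optional-stopping argument so the bound holds uniformly regardless of when the basin is first reached.

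Finally, conditioned on $G_o(A^{**})$ the assignment is frozen at $A^{**}$, so each centroid performs plain stochastic approximation toward the fixed mean $c_r^{**}$ of a strongly convex quadratic per-cluster cost. The deviation recursion above then reads $E[\Delta(C^{t}, C^{**})\mid F_{t-1}] \le (1 - \frac{2c'p^{+}_{\min}(m) - o(1)}{t+t_o})\Delta(C^{t-1},C^{**}) + \frac{O(1)}{(t+t_o)^2}$, where the $O(1)$ numerator is again bounded through (A1). Since the threshold on $c'$ in particular guarantees $2c'p^{+}_{\min}(m) > 1$ (the extra factor $1-\sqrt{\cdot}$ only strengthens this), the standard solution of such a recursion gives $E[\Delta(C^t, C^{**})\mid G_o(A^{**})] = O(\frac{1}{t})$, and Lemma \ref{lm:kmdist_cdist} upgrades this to $E[\phi^t - \phi^{**}\mid G_o(A^{**})] = O(\frac{1}{t})$, completing the proof.
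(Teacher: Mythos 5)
Your proposal follows essentially the same route as the paper's proof: a two-phase analysis in which the global phase uses the one-step drift inequality (Lemma \ref{lm:kmeans}) plus the lower bound on the drift away from basins to force eventual entry (Lemma \ref{lm:limit_cluster}), the stability of stationary points comes from (A0) via Lemma \ref{lm:stat_stab}, the local phase combines a supermartingale/moment-generating-function non-escape bound (Proposition \ref{prop:high_prob}, adapted from \cite{balsubramani13}) with a contraction recursion (Theorem \ref{thm:mbkm_local}), the objective gap is controlled by $\Delta(C^t,C^{**})$ through Lemma \ref{lm:kmdist_cdist}, and the final probability is assembled by decomposing over the disjoint entry events $(T,A^{**})$ exactly as the paper does with $F(T^{-},A^{**})\cap H(T^{+},A^{**})$. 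The only piece you leave implicit is that the same drift inequality, with the stated threshold on $c^{\prime}$, must also be solved to give the $O(\frac{1}{t})$ rate \emph{before} the basin is entered (the paper's Lemma \ref{lm:thm_part1}), since the theorem asserts the rate for all $t\ge 1$, not only after the assignment freezes.
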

This result guarantees the expected $O(\frac{1}{t})$ convergence of Algorithm \ref{alg:MBKM}
towards a stationary point under a general condition.
However, it does not guarantee Algorithm \ref{alg:MBKM} converges to the same stationary clustering as its batch counterpart, even with the same initialization. Moreover, it is even possible that the final solution becomes degenerate. 
%%%%%%%%%%%%%%%
\paragraph{Algorithm parameters in the convergence bound}
The exact convergence bound of Theorem \ref{thm:km}, which we hide in the Big-\textit{O} notation, reveals dependence of the convergence on the algorithm's parameters.
%The analysis is divided into two phase: for the first global convergence phase from $t=1$ to $T$ for some random $T$, the bound is 
%$$
%(\frac{t_o+1}{t_o+t})^{\beta}[\phi^0-\phi(A^{**})]
%+\frac{(c^{\prime})^2B}{\beta-1}(\frac{t_o+2}{t_o+1})^{\beta+1}\frac{1}{t_o+t+1}
%\mbox{\quad (by Lemma \ref{lm:thm_part1})}
%$$
%where $\beta=2c^{\prime}p_{\min}^{+}(m)(1-\sqrt{\frac{\tilde\phi^t}{\phi^t}})$.
%For the second, local convergence phase, $\forall t\ge T$,
%\begin{eqnarray*}
%%(\frac{t_o+T+1}{t_o+t})^{\beta^{\prime}}\Delta(C^T,C^{**}) + \frac{a_{\max}^2B}{\beta^{\prime}-1}
%%(\frac{t_0+T+2}{t_0+T+1})^{\beta^{\prime}+1}\frac{1}{t_0+t+1} 
%%\mbox{~~(see Theorem \ref{thm:mbkm_local})} \\
%%\le
%(\frac{t_o+T+1}{t_o+t})^{\beta^{\prime}}r_{\min}\phi(A^{**}) + \frac{a_{\max}^2B}{\beta^{\prime}-1}
%(\frac{t_0+T+2}{t_0+T+1})^{\beta^{\prime}+1}\frac{1}{t_0+t+1}
%\mbox{~~(by applying Theorem \ref{thm:mbkm_local})}
%\end{eqnarray*}
%where $\beta^{\prime}=2c^{\prime}$, $a_{\max}:=\frac{c^{\prime}}{\rho(m)}$.
%So 
When $c^{\prime}$ is sufficiently large, the exact bound we obtain is likely dominated by 
$
\frac{a_{\max}^2B}{\beta-1}
(\frac{t_0+2}{t_0+1})^{\beta+1}\frac{1}{t_0+t+1}
$ 
where $\beta=2c^{\prime}p_{\min}^{+}(m)(1-\sqrt{\frac{\tilde\phi^t}{\phi^t}})$
and 
$a_{\max}:=\frac{c^{\prime}}{\rho(m)}$.
The bound becomes tighter when $c^{\prime}$ decreases.
Since the $\frac{1}{t}$-order convergence requires a sufficiently large $c^{\prime}$, our analysis suggests we should choose $c^{\prime}$ to be neither too large nor too small.
The bound also becomes tighter when $p_{\min}^{+}(m)$ and $\rho(m)$ becomes closer to $1$. Both depend on $m$; less obviously, they also depend on the number of non-degenerate clusters.
Since $p_r^t=1-(1-\frac{n_r^t}{n})^m$, which equals zero if and only if $n_r^t=0$,
$p_{\min}^{+}(m)=1-(1-\min_{r,t;p_r^t>0}p_r^t(1))^m
<1-(1-\frac{1}{k^{\prime}})^m
$, where $k^{\prime}$ is the smallest number of non-degenerate centroids in a run. The similar holds for $\rho(m)$.
This suggests the convergence rate may be slower with larger $k^{\prime}$, which depends on the initial number of clusters $k$, and smaller $m$. For experimental results of the effect of algorithm parameters on convergence, see Section \ref{sec:EXP}.
A detailed explanation on the exact bound is included in Appendix B.
%%%%% Algorithm: seeding
\begin{algorithm}[t]
\caption{Buckshot seeding \cite{tang_montel:aistats16}}
\label{alg:seeding}
\begin{algorithmic}
\STATE $\{\nu_i,i\in [m_o]\} \leftarrow $ sample $m_o$ points from $X$ uniformly at random with replacement
\STATE $\{S_1,\dots,S_k\} \leftarrow $run Single-Linkage on $\{\nu_i,i\in [m_o]\}$ until there are only $k$ connected components left
\STATE $C^0=\{\nu_r^{*},r\in [k]\}\leftarrow$ take the mean of the points in each connected component $S_r,r\in [k]$
%\For{$i \gets 1,\, 10$}
%\State $s \gets s + i$
%\State $p \gets p + s$
%\EndFor
\end{algorithmic}
\end{algorithm}
%%%% Convergence under geometric assumptions
The general convergence result in Theorem \ref{thm:km} is applicable to any seeding algorithm, including random sampling, which is probably the most scalable seeding heuristic one can use. However, it does not provide performance guarantee with respect to the $k$-means objective. We next show that stochastic $k$-means initialized by Algorithm \ref{alg:seeding} converges to a global optimum of $k$-means objective at rate $O(\frac{1}{t})$, under additional geometric assumptions on the dataset. 
The major advantage of Algorithm \ref{alg:seeding} over other choices of initialization, such as the \texttt{k-means++} \cite{kmeanspp} or random sampling, is that its running time is independent of the data size while providing seeding guarantee. When using it with stochastic $k$-means, both the seeding and the update phases are independent of the data size, making the overall algorithm highly scalable.

Let $(A^{opt},C^{opt})$ denote an optimal $k$-means clustering of $X$. We assume, similar to \cite{kumar}, that the means of each pair of clusters are well-separated and that the points from the two clusters are separated by a ``margin'', that is, $\forall x\in A_r^{opt}\cup A_s^{opt}$, the distance from $x$ to the bisector of $c_r^{opt}$ and $c_s^{opt}$ is lower bounded.
Formally, let $\bar{x}$ denote the projection of $x$ onto the line joining $c_r^{opt},c_s^{opt}$, the margin between the two clusters is defined as
$
\Delta_{rs}:=\min_{x\in A_r^{opt}\cup A_s^{opt}} |\|\bar{x}-c_r^{opt}\| - \|\bar{x}-c_s^{opt}\||
$.
In addition, we require that the size of the smallest cluster is not too small compared to the data size.
The geometric assumptions are formally defined as below.
%%%%%%%%%%%%%%%%%%% Geometric Assumptions
\paragraph{(B1)}
Mean separation: $\forall r\in [k], s\ne r$, $\|m(A^{opt}_r)-m(A^{opt}_s)\|\ge f(\alpha)\sqrt{\phi^{opt}}(\frac{1}{\sqrt{n_r^{opt}}}+\frac{1}{\sqrt{n_s^{opt}}})$, with $f(\alpha)>\max\{64^2,\frac{5\alpha+5}{256\alpha},\max_{r\in [k],s\ne r} \frac{n_r^{opt}}{n_s^{opt}}\}$ for some $\alpha \in (0,1)$ that is sufficiently small.
\vspace{-.3cm}
\paragraph{(B2)}
Existence of margin: $\forall r\in [k], s\ne r$, 
$\Delta_{rs}\ge \gamma\|m(A^{opt}_r)-m(A^{opt}_s)\|$, for some $\gamma>\frac{8\sqrt{2}}{\sqrt{f}}$.
%\item[(B3)]
%Let $p_r^t(m):=Pr\{c_r^{t-1}\mbox{~is updated at iteration~}t\}$ and 
%$
%a^t:=\frac{\max_{r}p_r^t(m)}{\min_s p_s^t(m)}
%$. We assume $\alpha$ in $(B1)$ is sufficiently small,
%and $m$ and $p_{\min}$ is sufficiently large so that 
%%$\max_t a^t\sqrt{\alpha}<1$, and
%$
%2c^{\prime}\min_{r,t}p_r^t(m)(1-\max_{t}a^t\sqrt{\alpha})>1
%$.
\vspace{-.3cm}
\paragraph{(B3)}
Cluster balance: $p_{\min}\ge \frac{\gamma}{16^2f(\alpha)}+\sqrt{\alpha}$, where $p_{\min}:=\min_{r\in [k]}\frac{n_r^{opt}}{n}$.
%%%%%%%%%%%%%%%%%%% Theorem: convergence with good initialization
\begin{thm}
\label{thm:solution}
Assume (B1)(B2) (B3) hold for an optimal $k$-means clustering $A^{opt}$.
Fix any $\xi > 0$, if $f(\alpha)$ in addition satisfies
$
f(\alpha)\ge 5\sqrt{\frac{1}{2w_{\min}}\ln (\frac{2}{\xi p_{\min}}\ln\frac{2k}{\xi})}
$, 
and we choose $m_o$ in Algorithm \ref{alg:seeding} such that
$
\frac{\log\frac{2k}{\xi}}{p_{\min}}
<m_o
<\frac{\xi}{2}\exp\{2(\frac{f(\alpha)}{4}-1)^2w_{\min}^2\}
$.
And if we run Algorithm \ref{alg:MBKM} initialized by Algorithm \ref{alg:seeding} and choosemini-batch size $m$ and learning rate of the form 
$\eta^t = \frac{c^{\prime}}{t+t_o}$ so that
$$
m>1\mbox{~~~and~~~}
c^{\prime}>\frac{1}{2[1-\sqrt{\alpha}-(1-\sqrt{\alpha})^m]}$$
and
$$
t_o= 
\Omega
\left\{[\frac{(c^{\prime})^2}{\rho(m)^2(2c^{\prime}(\rho(m)-\sqrt{\alpha})-1)}\ln\frac{1}{\delta}]^{\frac{2}{2c^{\prime}(\rho(m)-\sqrt{\alpha})-1}}
\right\}
$$
where
$
\rho(m):=1 - [1-(p_{\min} - \frac{\gamma}{16^2 f(\alpha)})]^m
$.
Then $\forall t\ge 1$, there exists event $G_t\subset \Omega$ s.t.
$$
Pr\{G_t\}\ge (1-\delta)(1-\xi)
\mbox{~~~and~~~}
E[\phi^t|G_t]-\phi^{opt}\le E[\Delta(C^t,C^{opt})|G_t] = O(\frac{1}{t})
$$
\end{thm}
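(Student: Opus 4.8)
The plan is to realize Theorem~\ref{thm:solution} as a \emph{localized} instance of Theorem~\ref{thm:km}, in which the stationary clustering that eventually traps the algorithm is forced, by the clusterability hypotheses, to be the optimal clustering $A^{opt}$ itself. Concretely, I would decompose the target event as $G_t = G_{\mathrm{seed}}\cap G_{\mathrm{trap}}^t$, where $G_{\mathrm{seed}}$ is the event that Buckshot seeding (Algorithm~\ref{alg:seeding}) places $C^0$ in the basin of $A^{opt}$, and $G_{\mathrm{trap}}^t$ is the event that the stochastic trajectory never leaves this basin through time $t$. I would show $\Pr\{G_{\mathrm{seed}}\}\ge 1-\xi$ and $\Pr\{G_{\mathrm{trap}}^t\mid G_{\mathrm{seed}}\}\ge 1-\delta$, so that $\Pr\{G_t\}\ge(1-\delta)(1-\xi)$. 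The objective bound is then immediate: since $A^{opt}$ is a stationary clustering with matching stationary point $C^{opt}=m(A^{opt})$, Lemma~\ref{lm:kmdist_cdist} gives $\phi^t-\phi^{opt}\le\Delta(C^t,C^{opt})$ pointwise on $G_t$, hence also after conditioning. What remains is to produce the seeding and trapping bounds and to solve the resulting recursion for the $O(1/t)$ rate.

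For the seeding step I would invoke the guarantee of Algorithm~\ref{alg:seeding} from \cite{tang_montel:aistats16} under mean separation (B1). The two-sided condition on $m_o$ plays complementary roles: the lower bound $m_o>\log(2k/\xi)/p_{\min}$ is a coupon-collector estimate guaranteeing, together with (B3), that every optimal cluster contributes at least one of the $m_o$ sampled points with probability $\ge 1-\xi/2$, while the upper bound $m_o<\tfrac{\xi}{2}\exp\{2(\tfrac{f(\alpha)}{4}-1)^2 w_{\min}^2\}$, combined with the extra lower bound on $f(\alpha)$, controls via a union bound the probability that two samples from distinct clusters are close enough to be merged by Single-Linkage, keeping it below $\xi/2$. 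On the intersection of these two events each of the $k$ components returned by Single-Linkage is a subset of a single optimal cluster and every cluster is represented, so the component means land within the margin region of their true centroids; by (B2) this forces $v(C^0)=A^{opt}$. Thus $\Pr\{G_{\mathrm{seed}}\}\ge 1-\xi$ and on $G_{\mathrm{seed}}$ the induced clustering is already correct.

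The heart of the argument is the trapping-plus-contraction step. I would first use the margin hypothesis (B2), via Lemma~\ref{lm:stat_stab}, to exhibit a radius such that whenever $\Delta(C,C^{opt})$ is below the stability threshold the Voronoi assignment satisfies $v(C)=A^{opt}$; this defines the basin. Conditioned on $v(C^t)=A^{opt}$, the sampled per-cluster mean $\hat c_r^{t+1}$ is, up to the martingale-difference term appearing in (A1), an unbiased estimate of $c_r^{opt}$, so writing the update as $c_r^{t+1}-c_r^{opt}=(1-\eta^{t+1})(c_r^t-c_r^{opt})+\eta^{t+1}(\hat c_r^{t+1}-c_r^{opt})$, expanding $\Delta(C^{t+1},C^{opt})=\sum_r n_r^{opt}\|c_r^{t+1}-c_r^{opt}\|^2$ (the permutation being the identity on the basin), and taking conditional expectation, the cross term nearly vanishes and the second moment is controlled by the bound $B$ of (A1). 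This yields the recursion $E[\Delta(C^{t+1},C^{opt})\mid F_t]\le(1-2(\rho(m)-\sqrt{\alpha})\eta^{t+1})\Delta(C^t,C^{opt})+O((\eta^{t+1})^2)$, where $\rho(m)$ encodes the probability that a cluster is touched by a mini-batch of size $m$ and $\sqrt{\alpha}$ the slack the margin permits between the sampled and true centroids. Substituting $\eta^{t+1}=c'/(t+1+t_o)$ gives effective contraction $1-\beta/(t+1+t_o)$ with $\beta=2c'(\rho(m)-\sqrt{\alpha})$; the stated lower bound on $c'$ together with (B3) guarantees $\beta>1$, which is precisely the condition under which the unrolled recursion collapses to $E[\Delta(C^t,C^{opt})]=O(1/t)$.

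The main obstacle, and the step demanding the most care, is upgrading this one-step contraction to a statement holding uniformly in $t$ on a single high-probability event $G_{\mathrm{trap}}^t$: a stochastic update could in principle push a centroid across a bisector and corrupt the clustering, after which the unbiasedness used above fails. I would handle this with a supermartingale concentration argument on the accumulated noise $\sum_{i\le t}\eta^i(\hat c^i-E[\hat c^i\mid F_{i-1}])$, showing that once $C^0$ starts in the basin these deviations are summable and small enough that $C^t$ never exits, with the escape probability held below $\delta$ exactly by the stated choice $t_o=\Omega\{[\tfrac{(c')^2}{\rho(m)^2(2c'(\rho(m)-\sqrt{\alpha})-1)}\ln\tfrac1\delta]^{2/(2c'(\rho(m)-\sqrt{\alpha})-1)}\}$, whose exponent is dictated by $\beta-1$. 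Intersecting this trapping event with $G_{\mathrm{seed}}$ and applying the recursion along the trapped trajectory then delivers both the probability bound $(1-\delta)(1-\xi)$ and the $O(1/t)$ convergence, completing the proof.
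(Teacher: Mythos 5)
Your high-level architecture matches the paper's: the same event decomposition (a seeding-success event intersected with a never-escape event), the seeding bound $1-\xi$ via the Buckshot guarantee of \cite{tang_montel:aistats16}, the trapping bound $1-\delta$ via a moment-generating-function concentration argument, a contraction recursion with effective exponent $\beta = 2c'(\rho(m)-\sqrt{\alpha})>1$, and Lemma \ref{lm:kmdist_cdist} to convert $E[\Delta(C^t,C^{opt})]$ into the objective gap. However, there is a genuine gap in how you construct the basin of attraction, and it is the one step the paper has to work hardest to supply.

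You propose to get the basin from Lemma \ref{lm:stat_stab}, i.e., a radius within which the Voronoi assignment satisfies $v(C)=A^{opt}$ exactly. Two problems. First, Lemma \ref{lm:stat_stab} only yields an \emph{unquantified} radius $r_{\min}>0$ (a finiteness argument over stationary points, and it needs (A0), which is not assumed here), whereas the seeding guarantee places $C^0$ within the \emph{concrete} distance $\frac{1}{2}\frac{\gamma^2 f(\alpha)^2}{16^2}\phi^{opt}$ of $C^{opt}$. Nothing links these two radii: if $r_{\min}$ happens to be smaller than the seeding radius, your event $G_{\mathrm{seed}}$ does not imply membership in the basin and the trapping argument never gets started. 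The paper closes exactly this hole with Proposition \ref{prop:geom}, which uses the adapted Kumar et al.\ argument (Lemma \ref{lm:kumar}) to show that (B1)(B2) make $C^{opt}$ a $(b_0,\alpha)$-stable stationary point with the quantified radius $b_0=\frac{\gamma^2 f(\alpha)^2}{16^2}$, matched to the seeding radius. Second, your claim of exact clustering preservation inside the basin is false at this quantified scale: the margin (B2) controls displacement only along the line joining the optimal centroids; when centroids are perturbed, bisectors tilt, and points with a large orthogonal component can cross them. What (B1)(B2) actually buy is a bound on the \emph{number} of misclassified points (the content of Lemma \ref{lm:kumar}), i.e., stability in the sense of Definition \ref{defn:stable}, not $v(C)=A^{opt}$. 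Your own recursion betrays the inconsistency: if the clustering were exactly $A^{opt}$, the drift would point at $c_r^{opt}$ itself with no $\sqrt{\alpha}$ slack, and $\rho(m)$ would be $1-(1-p_{\min})^m$ rather than $1-[1-(p_{\min}-\frac{\gamma}{16^2 f(\alpha)})]^m$; both the $\sqrt{\alpha}$ degradation and the $\frac{\gamma}{16^2 f(\alpha)}$ correction (whose control is precisely the role of (B3)) arise from the misclassified fraction your framing assumes away. To repair the proof, replace the appeal to Lemma \ref{lm:stat_stab} by a quantified stability statement like Proposition \ref{prop:geom}, and run the contraction through Lemma \ref{lm:stable_stat} (the batch update $m(A^t)$ is $\sqrt{\alpha}$-close to $C^{opt}$ in the $\Delta$ metric) rather than through unbiasedness toward $c_r^{opt}$.
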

Interestingly, we cannot provide performance guarantee for online $k$-means ($m=1$) in Theorem \ref{thm:solution}.
Our intuition is, instead of allowing stochastic $k$-means to converge to any stationary point as in Theorem \ref{thm:km}, it studies convergence to a fixed optimal solution; a larger $m$ provides more stability to the algorithm and prevents it from straying away from the target solution.
The proposed clustering scheme is reminiscent to the \textit{Buckshot algorithm} \cite{buckshot}, widely used in the domain of document clustering.
Readers may wonder how can the algorithm approach $\phi^{opt}$ for an NP-hard problem. The reason is that our geometric assumptions softens the problem.
In this case, Algorithm \ref{alg:MBKM} converges to the same (optimal) solution as its batch counterpart, provided the same initialization.
%%%%%%%%%%%%%%%%%%%%%%% %%%%%%%%%%%%%%%%%%%%%%% 
%%%%%%%%%%%%%       Related Work   %%%%%%%%%%%%%%%%%%%%%%% 
%%%%%%%%%%%%%%%%%%%%%%%%%%%%%%%%%%%%%%%%%%%%%% 
\subsection{Related work and proof outline}\label{sec:related_work}
Our major source of inspiration comes from recent advances in non-convex stochastic optimization for unsupervised learning problems \cite{ge:sgd_tensor, balsubramani13}.
\cite{ge:sgd_tensor} studies the convergence of stochastic gradient descent (SGD) for 
the tensor decomposition problem, which amounts to finding a local optimum of a non-convex objective function composed exclusively of saddle points and local optima. 
Inspired by their analysis framework, we divide our analysis of Algorithm \ref{alg:MBKM} into two phases, that of global convergence and local convergence, indicated by the distance from the current solution to stationary points, $\Delta(C^t,C^*)$.
We use $\Delta^t:=\Delta(C^t,C^{*})$ as a shorthand.
%%%%%%%%%%%%%%%%%%%%% Fast global convergence
\paragraph{Significant decrease in $k$-means objective when $\Delta^t$ is large}
In the global convergence phase, the algorithm is not close to any stationary point, i.e.,
$\forall C^*\in\{C^*\}_{[k]}$, $\Delta(C^t,C^*)$ is lower bounded.
We first prove a black-box result showing that on non-stationary points, stochastic $k$-means decreases the $k$-means objective in expectation at every iteration.
%%%%%%%%%%%%%%%%%%%%% Lemma: convergence in k-means
\begin{lm}\label{lm:kmeans}
Suppose $\forall r\in [k]$, $\eta_r^t\le \eta_{\max}^t<1$ w.p. $1$. 
Then,
$
E[\phi^{t+1}-\phi^t|F_t]
\le
-2\min_{r,t; p_r^{t+1}(m)>0}[\eta_{r}^{t+1}p_r^{t+1}(m)]\phi^t
(1-\sqrt{\frac{\tilde\phi^{t}}{\phi^t}})
+(\eta_{\max}^{t+1})^2E[\sum_r\sum_{x\in A_r^{t+1}}\|x-\hat{c}_r^{t+1}\|^2+\phi^t|F_t]
$,
where $\tilde{\phi}^t:=\sum_r\sum_{x\in A_r^{t+1}}\|x-m(A_r^{t+1})\|^2$.
\end{lm}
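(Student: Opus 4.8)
The plan is to convert the change in the reassignment-based $k$-means objective into a smooth, per-cluster quadratic comparison under a \emph{single fixed} assignment, which sidesteps the non-differentiability caused by reassignment. Write $\mu_r := m(A_r^{t+1})$ and $n_r := |A_r^{t+1}|$, where $A^{t+1} = v(C^t)$ is the Voronoi clustering induced by $C^t$. I would first use two facts: (i) since the Voronoi assignment of $C^{t+1}$ is optimal among all assignments, $\phi^{t+1} = \phi(C^{t+1}) \le \phi(C^{t+1}, A^{t+1})$; and (ii) because $A^{t+1}$ is exactly the Voronoi partition of $C^t$, $\phi^t = \phi(C^t, A^{t+1})$. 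Subtracting and applying the parallel-axis (bias--variance) identity $\sum_{x\in A_r^{t+1}}\|x-c\|^2 = \sum_{x\in A_r^{t+1}}\|x-\mu_r\|^2 + n_r\|c-\mu_r\|^2$ to each cluster collapses the within-cluster sums, giving the clean bound $\phi^{t+1}-\phi^t \le \sum_r n_r(\|c_r^{t+1}-\mu_r\|^2 - \|c_r^t-\mu_r\|^2)$.

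Next I would expand the stochastic update. On the event $U_r = \{\hat{n}_r^{t+1}>0\}$ the centroid moves as $c_r^{t+1}-\mu_r = (1-\eta_r^{t+1})(c_r^t-\mu_r) + \eta_r^{t+1}(\hat{c}_r^{t+1}-\mu_r)$, while off $U_r$ it is unchanged; expanding the square produces a first-order cross term $2\eta_r^{t+1}(1-\eta_r^{t+1})\langle c_r^t - \mu_r, \hat{c}_r^{t+1}-\mu_r\rangle$ and a second-order term $(\eta_r^{t+1})^2\|\hat{c}_r^{t+1}-\mu_r\|^2$, besides the diagonal contribution $((1-\eta_r^{t+1})^2-1)\|c_r^t-\mu_r\|^2$. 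The crucial step, and the main obstacle, is to show the cross term vanishes in conditional expectation. This is the $k$-means analogue of an unbiased stochastic gradient: conditioned on $F_t$ and on the count $\hat{n}_r^{t+1}=j>0$, the $j$ sampled points of cluster $r$ are i.i.d.\ uniform over $A_r^{t+1}$ (sampling is with replacement), so $E[\hat{c}_r^{t+1}\mid F_t,\hat{n}_r^{t+1}=j]=\mu_r$. Since $c_r^t-\mu_r$ is $F_t$-measurable and the learning rate is a function of $F_t$ and $j$, conditioning on $j$ and then summing over $j$ kills the cross term regardless of whether $\eta_r^{t+1}$ is deterministic or count-dependent. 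Taking expectations then leaves $E[\|c_r^{t+1}-\mu_r\|^2-\|c_r^t-\mu_r\|^2\mid F_t] = (-2\eta_r^{t+1}+(\eta_r^{t+1})^2)p_r^{t+1}(m)\|c_r^t-\mu_r\|^2 + (\eta_r^{t+1})^2 E[\mathbf{1}_{U_r}\|\hat{c}_r^{t+1}-\mu_r\|^2\mid F_t]$, where clusters with $p_r^{t+1}(m)=0$ are empty and contribute nothing.

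Finally I would assemble the bound. For the negative first-order part, the parallel-axis identity again gives $\sum_r n_r\|c_r^t-\mu_r\|^2 = \phi^t - \tilde\phi^t$, so factoring out $\min_{r:\,p_r^{t+1}(m)>0}(\eta_r^{t+1}p_r^{t+1}(m))$ yields $-2\min_r(\eta_r^{t+1}p_r^{t+1}(m))(\phi^t-\tilde\phi^t)$; since $\tilde\phi^t\le\phi^t$ gives $\sqrt{\tilde\phi^t/\phi^t}\ge\tilde\phi^t/\phi^t$, we have $\phi^t-\tilde\phi^t\ge\phi^t(1-\sqrt{\tilde\phi^t/\phi^t})\ge0$, which (because of the leading negative sign) converts this into exactly the first term of the lemma. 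For the second-order terms I bound $\eta_r^{t+1}\le\eta_{\max}^{t+1}$ and $p_r^{t+1}(m)\le1$; the variance term is handled by one more parallel-axis expansion, $\mathbf{1}_{U_r}\sum_{x\in A_r^{t+1}}\|x-\hat{c}_r^{t+1}\|^2 = \mathbf{1}_{U_r}(\sum_{x\in A_r^{t+1}}\|x-\mu_r\|^2 + n_r\|\hat{c}_r^{t+1}-\mu_r\|^2)$, so that $n_r E[\mathbf{1}_{U_r}\|\hat{c}_r^{t+1}-\mu_r\|^2\mid F_t]\le E[\mathbf{1}_{U_r}\sum_{x\in A_r^{t+1}}\|x-\hat{c}_r^{t+1}\|^2\mid F_t]$, while $\sum_r n_r\|c_r^t-\mu_r\|^2=\phi^t-\tilde\phi^t\le\phi^t$. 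Summing over $r$ gives $(\eta_{\max}^{t+1})^2 E[\sum_r\sum_{x\in A_r^{t+1}}\|x-\hat{c}_r^{t+1}\|^2+\phi^t\mid F_t]$, matching the second term and completing the proof. The only genuinely delicate point is the cross-term cancellation; everything else is the parallel-axis theorem applied three times together with the elementary inequality $\sqrt{x}\ge x$ on $[0,1]$.
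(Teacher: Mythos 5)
Your proof is correct, and it takes a genuinely different route after the shared opening move. Both arguments begin with $\phi^{t+1}\le\phi(C^{t+1},A^{t+1})$ and $\phi^t=\phi(C^t,A^{t+1})$, expand the convex-combination update, and exploit unbiasedness of the within-cell sample mean. From there, the paper expands $\|x-c_r^{t+1}\|^2$ point by point, keeps the cross term $\langle x-c_r^t,\,x-\hat c_r^{t+1}\rangle$, substitutes $E[\hat c_r^{t+1}\mid F_t]=(1-p_r^{t+1})c_r^t+p_r^{t+1}m(A_r^{t+1})$ (its convention sets $\hat c_r^{t+1}=c_r^t$ on the non-sampled event), and then applies Cauchy--Schwarz twice---once within each cluster and once across clusters---to reach the drift $-2\min_r[\eta_r^{t+1}p_r^{t+1}(m)]\,(\phi^t-\sqrt{\phi^t\tilde\phi^t})$. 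You instead apply the parallel-axis identity first, collapsing everything to centroid displacements $\sum_r n_r(\|c_r^{t+1}-\mu_r\|^2-\|c_r^t-\mu_r\|^2)$ with $\mu_r:=m(A_r^{t+1})$, after which the cross term $\langle c_r^t-\mu_r,\,\hat c_r^{t+1}-\mu_r\rangle$ vanishes exactly in conditional expectation, so no Cauchy--Schwarz is needed. This buys you a strictly tighter intermediate drift $-2\min_r[\eta_r^{t+1}p_r^{t+1}(m)]\,(\phi^t-\tilde\phi^t)$, which you then deliberately relax via $\sqrt{x}\ge x$ on $[0,1]$ to match the stated form; it also makes explicit that the drift is governed by $\phi^t-\tilde\phi^t=\Delta(C^t,m(A^{t+1}))$, an identification the paper must make separately (via Lemma~\ref{centroidal}) when it later uses this lemma inside Lemma~\ref{lm:limit_cluster}. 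Your argument is also slightly more general on one axis: the cross-term cancellation is carried out conditionally on the count $\hat n_r^{t+1}$, so it covers count-dependent learning rates, whereas the paper's proof pulls $\eta_r^{t+1}$ out of conditional expectations and thus implicitly requires $F_t$-measurable rates. Two harmless elisions you share with the paper and could state explicitly: empty clusters ($n_r=0$, equivalently $p_r^{t+1}(m)=0$) have undefined $\mu_r$ but carry zero weight in every identity used; and your displayed per-cluster expectation identity holds as written only for $F_t$-measurable $\eta_r^{t+1}$---for count-dependent rates the learning rate must remain inside the expectation, exactly as your cross-term argument already does.
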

By Lemma \ref{lm:kmeans}, the term $\min_{r,t; p_r^{t+1}(m)>0}[\eta_{r}^{t+1}p_r^{t+1}(m)]\phi^t
(1-\sqrt{\frac{\tilde\phi^{t}}{\phi^t}})$
lower bounds the per iteration drop in $k$-means objective. 
Since $\min_{r;p_r^{t+1}(m)>0}p_r^{t+1}(m)\ge \frac{1}{n}$, $\phi^t\ge \phi^{opt}$,
and $1-\frac{\tilde\phi_{t}}{\phi^t}= \frac{\phi^t-\tilde\phi_{t}}{\phi^t}=\frac{\Delta(C^t, C^{t+1})}{\phi^t}\ge 0$ (the second equality is by Lemma \ref{centroidal}), the drop is always lower bounded by zero.
We show that if $\Delta(C^t,C^*)$ is bounded away from zero (for any stationary point $C^*$), then so is $\Delta(C^t, C^{t+1})$: the rough idea is, in case $C^{t+1}$ is a non-stationary point, $C^t$ and $C^{t+1}$ must belong to different equivalence classes, as discussed in Section \ref{sec:framework}, and their distance must be lower bounded by Lemma \ref{lm:closure}; otherwise, $C^{t+1}$ is a stationary point, and by our assumption their distance is lower bounded. 
Thus, 
$\frac{\Delta(C^t, C^{t+1})}{\phi^t}$ is lower bounded by a positive constant, and so is $1-\sqrt{\frac{\tilde\phi^{t}}{\phi^t}}$. 
Since we chose $\eta^t:=\Theta(\frac{1}{t})$, the expected per iteration drop of cost is of order $\Omega(\frac{1}{t})$, which forms a divergent series; after a sufficient number of iterations the expected drop can be arbitrarily large. We conclude that $\Delta(C^t,C^*)$ cannot be bounded away from zero asymptotically, since the $k$-means cost of any clustering is positive. This result is presented in Lemma \ref{lm:limit_cluster}.

Before proceeding, note
the drop increases with 
$\min_{r,t; p_r^{t+1}(m)>0}[\eta_{r}^{t+1}p_r^{t+1}(m)]$.
This means if we can set a cluster-dependent learning rate that adapts to $p_r^{t+1}(m)$, the drop could be larger than choosing a flat rate, as we do in our analysis.
The learning rate in \cite{Sculley,BottouBengio}, where $\eta^t_r:=\frac{\hat{n}_r^t}{\sum_{i\le t}\hat{n}_r^i}$, is intuitively making this adaptation:
in the case when the clustering assignment does not change between iterations, it can be seen that
$E\eta_r^t\approx \frac{1}{t p_r^t(m)}$, so the effective learning rate $\eta_r^t p_r^t(m)$ is balanced for different clusters and is roughly $\frac{1}{t}$. However, in practice, the clustering assignment changes when the centroids are updated, and it is hard to analyze this adaptive learning rate due to its random dependence on all previous sampling outcomes.
%%%%%%%%%%%%%%%% Lemma: limiting clusterings
 \begin{lm}
 \label{lm:limit_cluster}
Assume (A1) holds.
 If we run Algorithm \ref{alg:MBKM} on $X$ with $\eta^t=\frac{a}{t+t_o}, a\ge\frac{1}{2p_{\min}^{+}(m)}$, with $p_{\min}^{+}(m)$ as defined
 in Theorem \ref{thm:km},
 and $t_o>1$, and choose any initial set of $k$ centroids $C^0$.
Then for any $\delta>0$, $\exists t$ s.t. $\Delta(C^t,C^*)\le \delta$ with $C^*:=m(A^*)$ for some $A^*\in \{A^*\}_{[k]}$.
\end{lm}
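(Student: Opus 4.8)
The plan is to argue by contradiction via a divergent-series argument: if the iterates never enter the $\delta$-neighborhood of any stationary point, the expected $k$-means cost would be driven below $0$, which is impossible. Concretely, I would lower-bound the expected one-step decrease using Lemma~\ref{lm:kmeans}. Its bound contains the factor $\phi^t(1-\sqrt{\tilde\phi^t/\phi^t})$, which I simplify through the elementary inequality $1-\sqrt{u}\ge \tfrac{1-u}{2}$ for $u=\tilde\phi^t/\phi^t\in[0,1]$ (valid since the mean minimizes within-cluster cost, so $\tilde\phi^t\le\phi^t$). This yields $\phi^t(1-\sqrt{\tilde\phi^t/\phi^t})\ge \tfrac12(\phi^t-\tilde\phi^t)$, and by Lemma~\ref{centroidal} the right-hand side equals $\tfrac12\sum_r n_r^{t+1}\|c_r^t-m(A_r^{t+1})\|^2=\tfrac12\Delta(C^t,m(A^{t+1}))$, i.e. half the decrease that a batch step would produce from $C^t$. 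So everything reduces to lower-bounding this batch decrease whenever $C^t$ is $\delta$-far from all stationary points.

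The geometric heart of the argument, and the step I expect to be the main obstacle, is showing $\Delta(C^t,m(A^{t+1}))\ge \epsilon$ for a constant $\epsilon=\epsilon(\delta)>0$ uniform in $t$, under the contradiction hypothesis that $\Delta(C^t,C^*)>\delta$ for every stationary point $C^*$. I would split on the induced clustering $A^{t+1}=v(C^t)$. If $A^{t+1}$ is stationary, then by (A0) $m(A^{t+1})$ is exactly a stationary point $C^*$, so $\Delta(C^t,m(A^{t+1}))=\Delta(C^t,C^*)>\delta$. If $A^{t+1}$ is non-stationary, then $m(A^{t+1})\notin Cl(v^{-1}(A^{t+1}))$ while $C^t\in v^{-1}(A^{t+1})$, so the two lie in distinct (closures of) equivalence classes; by Lemma~\ref{lm:closure} their weighted distance is bounded below by the positive gap $\epsilon_{A^{t+1}}$ separating $m(A^{t+1})$ from the closure of its own class. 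Since $\{A\}_{[k]}$ is finite, setting $\epsilon:=\min\{\delta,\min_{A\text{ non-stat}}\epsilon_A\}>0$ gives a uniform bound. Combining with the first paragraph, on the event $\{\Delta(C^t,C^*)>\delta\ \forall C^*\}$ and with the flat rate $\eta^{t+1}<1$ (so Lemma~\ref{lm:kmeans} applies) and (A1) controlling the quadratic error,
\[
E[\phi^{t+1}-\phi^t\mid F_t]\le -\eta^{t+1}p^{+}_{\min}(m)\,\epsilon+(\eta^{t+1})^2 B .
\]

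Finally I would convert this conditional decrease into the existence claim through a stopping time. Let $\tau:=\inf\{t:\Delta(C^t,C^*)\le\delta\text{ for some }C^*\in\{C^*\}_{[k]}\}$. On $\{t<\tau\}\in F_t$ the displayed inequality holds, so for the stopped process $\phi^{t\wedge\tau}$ one gets $E[\phi^{(t+1)\wedge\tau}]-E[\phi^{t\wedge\tau}]\le -\eta^{t+1}p^{+}_{\min}(m)\epsilon\,Pr(\tau>t)+(\eta^{t+1})^2B$. With $\eta^t=\tfrac{a}{t+t_o}$ the sum $\sum_t\eta^t$ diverges while $\sum_t(\eta^t)^2$ converges (here $t_o>1$ and the prescribed $a$ fix the schedule and keep $\eta^t<1$). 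Hence if $Pr(\tau=\infty)>0$ then $Pr(\tau>t)\ge Pr(\tau=\infty)>0$ for all $t$, and telescoping the bound forces $E[\phi^{t\wedge\tau}]\to-\infty$, contradicting $\phi^{t\wedge\tau}\ge 0$. Therefore $\tau<\infty$ almost surely, which is precisely the claim that some iterate $C^t$ comes within $\delta$ of a stationary point. The only delicate points beyond the geometric gap are verifying that the finitely many per-class gaps $\epsilon_A$ are genuinely positive — this is where Lemma~\ref{lm:closure} and the non-boundary assumption (A0) do the work — and keeping the conditioning clean through the stopping-time telescoping.
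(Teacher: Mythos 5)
Your proposal is correct and follows essentially the same route as the paper's proof: the same case split on whether the induced clustering $A^{t+1}$ is stationary (with the contrapositive of Lemma~\ref{lm:closure} and finiteness of $\{A\}_{[k]}$ supplying the uniform gap in the non-stationary case), combined with Lemma~\ref{lm:kmeans}, assumption (A1), and a divergent-series contradiction. Your stopping-time formalization and the inequality $1-\sqrt{u}\ge\frac{1-u}{2}$ are slightly cleaner packagings of steps the paper handles more informally, and note that your appeal to (A0) in the stationary case is unnecessary --- the definition of stationary points already makes $m(A^{t+1})$ a stationary point, which matters because the lemma assumes only (A1).
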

Lemma \ref{lm:limit_cluster} suggests that, starting from any initial point $C^0$ in $\{C\}$, the algorithm always approaches a stationary point asymptotically, ending its global convergence phase after a finite number of iterations. We next examine its local behavior around stationary points.
%%%%%%%%%%%%%%%%%%%%% Local convergence
\paragraph{Local convergence to stationary points when $\Delta^t$ is small}
To obtain local convergence result, we first define ``local attractors'' and ``basin of attraction'' for batch $k$-means; the natural candidate for the former is the set of stationary points; a basin of attraction is a subset of the solution space so that once batch $k$-means enters it, it will not escape. 
%%%%%%%% Definition: stable stationary points
\begin{defn}\label{defn:stable}
We call $C^{*}$ a $(b_0,\alpha)$-stable stationary point of batch $k$-means if it is a stationary point and for any clustering $C$ such that 
$\Delta(C,C^*)\le b^{\prime}\phi^*$, $b^{\prime}\le b_0$, 
the Voronoi partition induced by $\{c_r\}$, denoted by $\{A_r\}$, satisfies
%$\phi^*\le \phi(C)$, and 
$\max_r\frac{|A_{\pi(r)}\triangle A_r^*|}{n^*_r}\le \frac{b}{5b+4(1+\phi(C)\slash\phi^*)}$, where
$\pi$ is the permutation achieving $\Delta(C,C^*)$,
with $b\le \alpha b^{\prime}$ for some $\alpha\in [0,1)$.
\end{defn}
%%%%
We can view $b_0$ as the radius of the basin and $\alpha$ the degree of the ``slope'' leading to the attractor.
A key lemma to our analysis characterizes the iteration-wise local convergence property of batch $k$-means around stable stationary points, whose convergence rate depends on the slope $\alpha$. 
%%%%%%%%%%%% Lemma: property of stable-stationary points
\begin{lm}
\label{lm:stable_stat}
Let $C^*$ be a $(b_0,\alpha)$-stable stationary point. For any $C$ such that 
$\Delta(C,C^{*})\le b^{\prime}\phi^*$, $b^{\prime}\le b_0$,
apply one step of batch $k$-means update on $C$ results in a new solution $C^1$ such that
$\Delta(C^1,C^*)\le \alpha b^{\prime}\phi^*$.
\end{lm}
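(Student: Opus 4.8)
The plan is to reduce the statement to a per-cluster estimate on how far the updated centroid $m(A_r)$ can drift from the stationary centroid $c_r^*=m(A_r^*)$, and then to feed the symmetric-difference control furnished by $(b_0,\alpha)$-stability into that estimate. Write $A:=v(C)$ for the Voronoi clustering of $C$, so that one batch step produces $C^1=m(A)$, i.e. $c_r^1=m(A_r)$. After relabeling so that the permutation $\pi$ achieving $\Delta(C,C^*)$ is the identity, I would simply reuse this matching as a (suboptimal) permutation for $\Delta(C^1,C^*)$, giving the upper bound $\Delta(C^1,C^*)\le \sum_r n_r^*\|m(A_r)-c_r^*\|^2$. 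Note first that the stability bound forces $|A_r\triangle A_r^*|/n_r^*<1/5$ (the denominator exceeds $5b$), so every $A_r$ is nonempty and $n_r$ is comparable to $n_r^*$; this rules out degeneracies and lets me treat the ratio $n_r^*/n_r^2$ as essentially $1/n_r^*$.

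For the per-cluster displacement I would use the exact bookkeeping identity obtained by splitting $A_r$ and $A_r^*$ into their common part $B_r:=A_r\cap A_r^*$, the outgoing points $E_r:=A_r^*\setminus A_r$, and the incoming points $D_r:=A_r\setminus A_r^*$. Summing coordinates gives $n_r\bigl(m(A_r)-c_r^*\bigr)=\sum_{x\in D_r}(x-c_r^*)-\sum_{x\in E_r}(x-c_r^*)$, after which Cauchy--Schwarz yields $\|\sum_{x\in E_r}(x-c_r^*)\|^2\le |E_r|\,\phi_r^*$, since $E_r\subset A_r^*$ and $c_r^*$ is the mean of $A_r^*$. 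The outgoing contribution is therefore immediately controlled by $\phi_r^*$ and the symmetric-difference size.

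The main obstacle is the incoming term $\sum_{x\in D_r}(x-c_r^*)$: the points of $D_r$ lie in other stationary clusters and can be far from $c_r^*$, so bounding them by $\phi_r^*$ is impossible. I would instead route them through the current solution, writing $x-c_r^*=(x-c_r)+(c_r-c_r^*)$; Cauchy--Schwarz on the first piece gives $\sqrt{|D_r|}\,\sqrt{\sum_{x\in D_r}\|x-c_r\|^2}$, and since $D_r\subset A_r$ the inner sum is at most the Voronoi cost of $C$ on the cell $A_r$, which sums over $r$ to exactly $\phi(C)$ (each cell counted once). The second piece contributes $|D_r|\,\|c_r-c_r^*\|$, and here I would invoke $n_r^*\|c_r-c_r^*\|^2\le\Delta(C,C^*)\le b^{\prime}\phi^*$ to turn the centroid shift into a $b^{\prime}\phi^*$ term. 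This is exactly where the two cost scales $\phi(C)$ and $\phi^*$ enter, which is why the stability definition carries the factor $4(1+\phi(C)/\phi^*)$ in its denominator.

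Finally I would assemble the pieces: squaring the displacement via $(a+b+c)^2\le3(a^2+b^2+c^2)$, multiplying by $n_r^*$, and summing over $r$ produces terms proportional to $\max_r\frac{|A_r\triangle A_r^*|}{n_r^*}$ times a combination of $\phi(C)$, $\phi^*$, and $b^{\prime}\phi^*$. Substituting the stability bound $\max_r\frac{|A_r\triangle A_r^*|}{n_r^*}\le\frac{b}{5b+4(1+\phi(C)/\phi^*)}$ makes the denominator cancel the $\phi(C)/\phi^*$ and constant contributions, collapsing the whole sum to at most $b\,\phi^*$; since $b\le\alpha b^{\prime}$ this is the claimed bound $\Delta(C^1,C^*)\le\alpha b^{\prime}\phi^*$. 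I expect the delicate part to be not any single inequality but the constant accounting: verifying that the specific coefficients $5b$ and $4(1+\phi(C)/\phi^*)$ are precisely what is needed for this cancellation, and checking that the $n_r\approx n_r^*$ approximations do not spoil the clean final constant.
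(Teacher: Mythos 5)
You should know at the outset that this version of the paper never actually proves Lemma \ref{lm:stable_stat}: it is stated in Section 3.1 and invoked inside the proof of Lemma \ref{lm:iter_wise} (to get $\Delta(m(A^i),C^*)\le \alpha\Delta^{i-1}$), but no proof appears in any appendix, so your attempt must be judged on its own merits against the role the lemma plays downstream. Your skeleton is sound (reusing the permutation achieving $\Delta(C,C^*)$, the exact identity $n_r(m(A_r)-c_r^*)=\sum_{x\in D_r}(x-c_r^*)-\sum_{x\in E_r}(x-c_r^*)$, Cauchy--Schwarz on the outgoing part), but your handling of the incoming points has a genuine gap. Writing $x-c_r^*=(x-c_r)+(c_r-c_r^*)$ produces, after squaring, weighting by $n_r^*/n_r^2$ and summing, a term of order $\frac{3\epsilon^2}{(1-\epsilon)^2}\Delta(C,C^*)\le \frac{3\epsilon^2}{(1-\epsilon)^2}b^{\prime}\phi^*$, where $\epsilon:=\max_r|A_r\triangle A_r^*|/n_r^*$. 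Stability only guarantees $\epsilon\le \frac{b}{5b+4(1+\phi(C)/\phi^*)}$, which is bounded by $1/5$ but is \emph{not} of order $\sqrt{\alpha}$ or $b/b^{\prime}$; whenever $\alpha b^{\prime}\gg 1+\phi(C)/\phi^*$ the bound permits $\epsilon$ close to $1/5$, making this term $\Theta(b^{\prime}\phi^*)$, which cannot be below the target $\alpha b^{\prime}\phi^*$ for small $\alpha$. This is exactly the regime the paper needs: in Proposition \ref{prop:geom} the lemma is applied with $b_0=\gamma^2 f(\alpha)^2/16^2> f(\alpha)/2\ge 2048$ and $\alpha$ arbitrarily small, and, say, $b^{\prime}=2048$, $\alpha=10^{-2}$, $\phi(C)=\phi^*$ gives $\epsilon$ up to $\approx 0.19$ and a rogue term $\approx 300\,\phi^*$ against a target of $\approx 20\,\phi^*$. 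Your other two terms do collapse as you predicted (they come out at most $\tfrac{3}{4}b\phi^*$), so the failure is isolated to this one piece --- but it is irreparable inside your decomposition, because the incoming points genuinely carry the full old-centroid displacement $\|c_r-c_r^*\|$ and there can be $\epsilon n_r^*$ of them.

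The repair is to expand the incoming points around the \emph{new} centroid rather than the old one: $x-c_r^*=(x-m(A_r))+(m(A_r)-c_r^*)$, so that the displacement contribution $|D_r|\,\|m(A_r)-c_r^*\|$ can be moved to the left-hand side. This gives $(n_r-|D_r|)\|m(A_r)-c_r^*\|\le \sqrt{|D_r|}\sqrt{\tilde\phi_r}+\sqrt{|E_r|}\sqrt{\phi_r^*}$ with $\tilde\phi_r\le\phi_r(C)$ and $n_r-|D_r|=|A_r\cap A_r^*|\ge (1-\epsilon)n_r^*$; squaring, multiplying by $n_r^*$ and summing yields $\Delta(C^1,C^*)\le \frac{2\epsilon}{(1-\epsilon)^2}\bigl(\phi(C)+\phi^*\bigr)$, with no term involving $\Delta(C,C^*)$ at all. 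Substituting $\epsilon\le\frac{b}{5b+4R}$ with $R:=1+\phi(C)/\phi^*$ then gives $\Delta(C^1,C^*)\le \frac{bR(5b+4R)}{8(b+R)^2}\phi^*\le b\phi^*\le\alpha b^{\prime}\phi^*$. This self-bounding absorption is precisely what the coefficients $5b$ and $4(1+\phi(C)/\phi^*)$ in Definition \ref{defn:stable} are tuned for, which resolves the ``delicate constant accounting'' you flagged as the uncertain step.
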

Lemma \ref{lm:stable_stat} resembles the standard iteration-wise convergence statement in SGD analysis, typically via convexity or smoothness of a function \cite{rakhlin}. Here, we have neither at our dispense (we do not even have a gradient). Instead, our analysis relies on the geometric property of Voronoi diagram and the mean operation used in a $k$-means iteration, similar to those in recent works on batch $k$-means \cite{kumar,awasthi:improved,tang_montel:aistats16}. 
Although this lemma applies only to batch $k$-means, our hope is that stochastic $k$-means has similar iteration-wise convergence behavior in expectation even in the presence of noise. 

%%% 
The difficulty here is, due to non-convexity, the convergence result only holds within a local basin of attraction: if the algorithm's solution is driven off the current basin of attraction by stochastic noise at any iteration, it may converge to a different attractor, causing trouble to our analysis.
To deal with this, we exploit probability tools developed in \cite{balsubramani13}. \cite{balsubramani13} studies the convergence of stochastic PCA algorithms, where the objective function is the non-convex Rayleigh quotient, which has a plateau-like component. The tools developed there were used to show that stochastic PCA gradually escapes the plateau. Here, we adapted their analysis to show Algorithm \ref{alg:MBKM} stays within a basin of attraction with high probability, and converges to the attractor at rate $O(\frac{1}{t})$.
%While \cite{balsubramani13} uses a martingale construction, for our purpose we simply applied markov's inequality on the moment generating function.
%And in their analysis the goal is to show the algorithm will stay away from the plateau.
%In our case, we need to show that once entering a basin of attraction, Algorithm \ref{alg:MBKM} will likely stay within the boundary of the basin. While the former uses a careful martingale construction, for our purpose we simply applied markov's inequality on the moment generating function.
We define a nested sequence of events $\Omega_{i}\subset \Omega$:
$$
\Omega \supset \Omega_{1} \supset\dots \supset\Omega_i\supset \dots
$$
where $\Omega_i:=\{\Delta^{t}\le b_0\phi^{*},\forall t<i\}$.
Then if Algorithm \ref{alg:MBKM} is within the basin of attraction of a stable stationary point at time $t$, the event that it escapes this local basin of attraction is contained in the 
event $\cup_{t\ge i+1}\Omega_{t-1}\setminus\Omega_{t}$.
We upper bound the probability of this bad event (Proposition \ref{prop:high_prob}) using techniques that derive tight concentration inequality via moment generating functions from \cite{balsubramani13}, which in turn implies a lower bound on the probability of $\Omega_t$, $t\ge i$.
%%%%%%%%%%%%%%%%% Theorem
%\begin{prop}
%\label{prop:high_prob}
%If $\frac{c}{t_o+i}\le\eta^i p_r^i(m) \le \frac{ca}{t_o+i}, \forall r\in [k]$, with
%$a\le \frac{1}{\sqrt{\alpha}}$.
%Fix any $0<\delta\le \frac{1}{e}$ and let $\beta:=2c(1-a\sqrt{\alpha})$, 
%$a_{\max}:=\frac{ca}{p_{\min}}$.
%If
%$t_0\ge 
%\max\{
%(\frac{16a_{\max}^2B}{\Delta^0})^{\frac{1}{\beta-1}},
%[\frac{48a_{\max}^2B^2}{(\beta-1)(\Delta^0)^2}\ln\frac{1}{\delta}]^{\frac{2}{\beta-1}},
%(\ln\frac{1}{\delta})^{\frac{2}{\beta-1}}
%\}$, 
%then
%$P(\cup_{i\ge 1}\Omega_{i-1}\setminus \Omega_{i})\le \delta$.
%\end{prop}
%Interestingly, the Rayleigh quotient can be viewed as a continuous relaxation of the $k$-means objective. 
%However, despite non-convexity, the $k$-means objective is also plagued with non-differentiability, while both aforementioned work applied to smooth functions and relied on the property of gradients and even Hessians. This calls for an alternative approach in our analysis. 
%
%On a high-level, to deal with the non-differentiability of the problem, we have to reach standard stages of the analysis of SGD without using gradients.
%We use two tools, \textbf{Explain further} a precise characterization of the solution spaces of Lloyd's algorithm
%as introduced in Section \ref{sec:background} and the geometric property of $k$-means updates, to deal with this problem.
%%%%%%%%%%%%%%%%%%%%%%%%%%%% Main theorem
Then conditioning on $\Omega_t$ and adapting Lemma \ref{lm:stable_stat} from batch to stochastic $k$-means proves the expected local convergence rate of $O(\frac{1}{t})$.
\begin{thm}
\label{thm:mbkm_local}
Assume (A1) holds.
Let $C^*$ be a $(b_0,\alpha)$-stable stationary point,
and let $\Delta^i = b\phi^*$ for some $b\le \frac{1}{2}b_0$ at some iteration $i$
in Algorithm \ref{alg:MBKM}.
Let  
$
a^t:=\frac{\max_{r}p_r^t(m)}{\min_s p_s^t(m)}
$. 
Suppose we set $c^{\prime}$ and $m$ sufficiently large so that 
$$
\beta:=2c^{\prime}\min_{r,t}p_r^t(m)(1-\max_{t}a^t\sqrt{\alpha}))>1
$$
Fix any $0<\delta\le \frac{1}{e}$, and let $a_{\max}:=\frac{c^{\prime}}{\min_{r,t}p_r^t(m)}$.
If in addition,
$$
t_0\ge \max\{
(\frac{16a_{\max}^2B}{\Delta^i})^{\frac{1}{\beta-1}},
[\frac{48a_{\max}^2B^2}{(\beta-1)(\Delta^i)^2}\ln\frac{1}{\delta}]^{\frac{2}{\beta-1}},
(\ln\frac{1}{\delta})^{\frac{2}{\beta-1}}
\}
$$
Then for all $t> i$, 
$$
Pr(\Omega_t)\ge 1-\delta 
\mbox{~~and~~}
E[\Delta^t|\Omega_t] \le 
\frac{t_0+i+1}{t_0+t}\Delta^i
+\frac{a_{\max}^2B}{\beta-1}(\frac{t_0+i+2}{t_0+i+1})^{\beta+1}\frac{1}{t_0+t+1}
$$
\end{thm}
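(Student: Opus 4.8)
The plan is to reduce the whole statement to a single in-basin stochastic recursion of the familiar SGD form $E[\Delta^{t+1}\mid F_t]\le (1-b_t)\Delta^t+c_t$, valid whenever the iterate lies in the basin $\{\Delta^t\le b_0\phi^*\}$, and then to (i) unroll it to obtain the claimed $O(1/t)$ expectation bound and (ii) show, via a supermartingale/moment-generating-function argument adapted from \cite{balsubramani13}, that the iterate never leaves the basin after time $i$, so that $Pr(\Omega_t)\ge 1-\delta$ and the recursion is legitimately applicable at every intermediate step.

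First I would establish the contraction. Conditioning on $F_t$ and on the event $U_r$ that cluster $r$ is updated at $t+1$, the stochastic centroid $\hat c_r^{t+1}$ has conditional mean equal to the \emph{batch} target $m(A_r^{t+1})=:\bar c_r$, so I split $c_r^{t+1}-c_r^*=(1-\eta^{t+1})(c_r^t-c_r^*)+\eta^{t+1}(\bar c_r-c_r^*)+\eta^{t+1}(\hat c_r^{t+1}-\bar c_r)$ into a deterministic part and a zero-mean noise part. The cross term vanishes in expectation, leaving the squared deterministic displacement plus a noise variance. Expanding the square and applying Cauchy--Schwarz \emph{across clusters} to $\sum_r n_r^*\langle c_r^t-c_r^*,\bar c_r-c_r^*\rangle\le\sqrt{\Delta^t}\sqrt{\sum_r n_r^*\|\bar c_r-c_r^*\|^2}$, together with the batch contraction $\sum_r n_r^*\|\bar c_r-c_r^*\|^2\le\alpha\Delta^t$ from Lemma \ref{lm:stable_stat} (applicable since $\Delta^t\le b_0\phi^*$), produces the $\sqrt\alpha$ and, after expanding to first order in $\eta^{t+1}$, the leading factor $2\eta^{t+1}$. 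Averaging over the asynchronous update events weights the diagonal contraction term by $\min_r p_r^{t+1}(m)$ and the cross term by $\max_r p_r^{t+1}(m)$, whose ratio is exactly $a^{t+1}$; the residual noise variance is bounded by (A1). The net recursion is $E[\Delta^{t+1}\mid F_t]\le\bigl(1-\tfrac{\beta}{t+t_0+1}\bigr)\Delta^t+\tfrac{a_{\max}^2 B}{(t+t_0+1)^2}$ on $\{\Delta^t\le b_0\phi^*\}$, with $\beta=2c'\min_{r,t}p_r^t(m)(1-\max_t a^t\sqrt\alpha)$.

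Given the recursion, the deterministic bound follows by induction. The homogeneous product $\prod_{s=i}^{t-1}(1-\tfrac{\beta}{s+t_0+1})$ telescopes: since $\beta\ge 1$ it is dominated by $\prod_s\tfrac{s+t_0}{s+t_0+1}=\tfrac{t_0+i+1}{t_0+t}$, yielding the first term $\tfrac{t_0+i+1}{t_0+t}\Delta^i$. Summing the noise contributions $\tfrac{a_{\max}^2 B}{(s+t_0+1)^2}$ against this product gives a series of order $\sum_s (s+t_0)^{\beta-2}/(t+t_0)^{\beta}$, which converges to $\tfrac{a_{\max}^2 B}{(\beta-1)(t_0+t+1)}$ \emph{precisely because} $\beta>1$; the factor $(\tfrac{t_0+i+2}{t_0+i+1})^{\beta+1}$ is the correction from bounding the product at the lower summation index. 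This is exactly where the hypothesis $\beta>1$ is indispensable.

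The hard part is the confinement bound $Pr(\Omega_t)\ge 1-\delta$, because the recursion above only holds inside the basin while $\Omega_t$ is defined through the trajectory itself. I would control the first-escape events $\Omega_t\setminus\Omega_{t+1}=\{\Delta^s\le b_0\phi^*\ \forall s<t,\ \Delta^t>b_0\phi^*\}$ and union over $t$. Following \cite{balsubramani13}, I rescale $\Delta^t$ by increasing weights $w_t\sim(t+t_0)^{\beta}$ chosen so that $w_{t+1}(1-b_t)\le w_t$, which, after subtracting the accumulated noise floor, turns the drift into a supermartingale; the martingale increments admit sub-Gaussian-type moment-generating-function control because (A1) simultaneously bounds the noise variance $\sum_r n_r^*\|\hat c_r^t-c_r^*\|^2$ and the inner-product fluctuation $\sum_r n_r^*\langle c_r^{t-1}-c_r^*,\hat c_r^t-E[\hat c_r^t\mid F_{t-1}]\rangle$. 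A Chernoff bound on the MGF (Proposition \ref{prop:high_prob}) then shows the probability that $\Delta^t$ ever crosses $b_0\phi^*$ after iteration $i$ is at most $\delta$; the three terms in the lower bound on $t_0$ arise respectively from pushing the noise floor below $\Delta^i$, from making the exponential tail small enough to beat $\ln\tfrac1\delta$, and from a technical requirement that the weight sequence dominate early on. Finally, conditioning on $\Omega_t$ (on which every intermediate iterate stayed in the basin, so the recursion held at each step) and re-running the induction yields $E[\Delta^t\mid\Omega_t]$ with the stated bound. I expect this MGF/supermartingale confinement step to be the principal obstacle, since it must simultaneously accommodate asynchronous updates, the self-referential stopping event $\Omega_t$, and the non-differentiability of the objective that rules out any standard smoothness argument.
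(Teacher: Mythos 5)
Your proposal is correct and takes essentially the same route as the paper: the in-basin contraction you derive is the paper's Lemma \ref{lm:iter_wise} (same mean/noise decomposition, cross-cluster Cauchy--Schwarz, and use of Lemma \ref{lm:stable_stat} for the $\sqrt{\alpha}$ factor), the unrolling with $\beta>1$ is the paper's Lemma \ref{lm:tech_2}, and the confinement step is exactly the paper's adaptation of \cite{balsubramani13} (Lemmas \ref{lm:conditional}, \ref{lm:bal} and Proposition \ref{prop:high_prob}, handling the self-referential conditioning on $\Omega_t$ via the monotonicity of $\Delta^{t-1}$ across the nested events). You correctly identified both the structure and the principal technical obstacle.
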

Note how, in contrast to Theorem \ref{thm:km},
the local convergence result does not allow degeneracy here, by implicitly requiring that $\min_{r,t}p_r^t(m)>0$.
This is reasonable, since a degenerate set of $k$ centroids cannot converge to a fixed stationary point
with $k$ centroids.

Building on the ideas introduced above, we present the key ingredients of our main theorems. 
%For proof details, please refer to Appendix B and C.
%%%%%%%%%%%%% Proof outline of Thm 1 & Thm 2
%The omitted proofs of Theorems \ref{thm:km}, \ref{thm:solution}, \ref{thm:mbkm_local} as well as their lemmas are in Appendix B and C. 
\vspace{-0.2cm}
\paragraph{Proof idea of Theorem \ref{thm:km}}
To prove Theorem \ref{thm:km}, we first show that all stationary points satisfying (A0) must be
$(r_{\min},0)$-stable for some $r_{\min}>0$ (Lemma \ref{lm:stat_stab}).
Then we apply results from both the global and local convergence analysis of Algorithm \ref{alg:MBKM}.
We define the global convergence phase as when $\Delta(C^t,C^{**})>\frac{1}{2}r_{\min}\phi^*$, $\forall C^{**}\in \{C^*\}_{[k]}$. As discussed, since $\Delta(C^t,C^{**})$ is bounded away from zero, the per-iteration drop of $k$-means cost is of order $\Omega(\eta^t)$, thus we get the expected $O(\frac{1}{t})$ convergence rate.
Lemma \ref{lm:limit_cluster} suggests that this phase will eventually end and at some iteration $T$,
$\Delta(C^T,C^{**})\le \frac{1}{2}r_{\min}\phi^{**}$ must hold for some stationary point $C^{**}$.
The first time when this happens signals the beginning of the local convergence phase:
at this point, $C^T$ is in the basin of attraction of $C^{**}$ since $C^{**}$ is $(r_{\min},0)$-stable.
Thus, applying Theorem \ref{thm:mbkm_local} implies that in this phase the algorithm converges to $C^{**}$ locally at rate $O(\frac{1}{t})$.
Hence, the overall global convergence rate is $O(\frac{1}{t})$.
%%%%%%%%%%%%%%%%
\vspace{-0.2cm}
\paragraph{Proof idea of Theorem \ref{thm:solution}}
Here we only apply the local convergence result. 
The key step is to show that our clusterability assumption on the dataset, \textbf{(B1)} and \textbf{(B2)}, implies that its optimal $k$-means solution, $C^{opt}$, is a $(b_0,\alpha)$-stable stationary point with a sufficiently large $b_0$ (Proposition \ref{prop:geom}). Then we adapt results from \cite{tang_montel:aistats16} to show that the seeds obtained from Algorithm \ref{alg:seeding} are within the basin of attraction of $C^{opt}$ with high probability (Lemmas \ref{lm:adapt_tang}). Using the other geometric assumption, \textbf{(B3)}, we apply Theorem \ref{thm:mbkm_local} to show an $O(\frac{1}{t})$ convergence to $C^{opt}$ with high probability.
%%%%%%%%%%%%%%%%%%%%%%% %%%%%%%%%%%%%%%%%%%%%%% 
%%%%%%%%%%%%%   Experiments   %%%%%%%%%%%%%%%%%%%%%%% 
%%%%%%%%%%%%%%%%%%%%%%%%%%%%%%%%%%%%%%%%%%%%%%
%%%%%%%%% First set of figures
\begin{figure}
\begin{subfigure}{.55\textwidth}
  \centering
  \includegraphics[width=\linewidth, height = 0.42\textwidth]{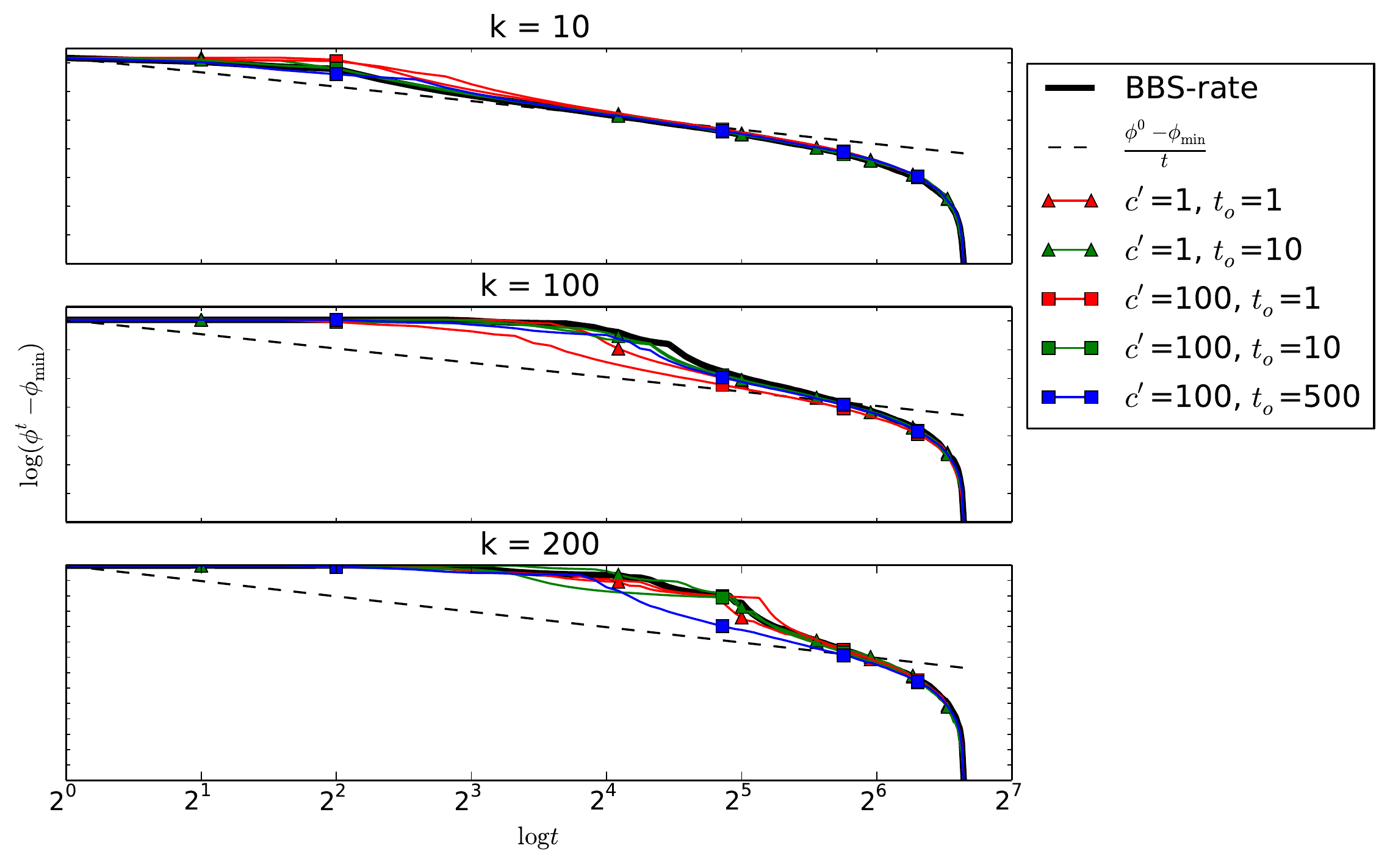}
  \caption{$m=1$, log-log plot}
  \label{fig:mb1_log}
\end{subfigure}%
\begin{subfigure}{.4\textwidth}
  \centering
  \includegraphics[width=\linewidth, height = 0.54\textwidth]{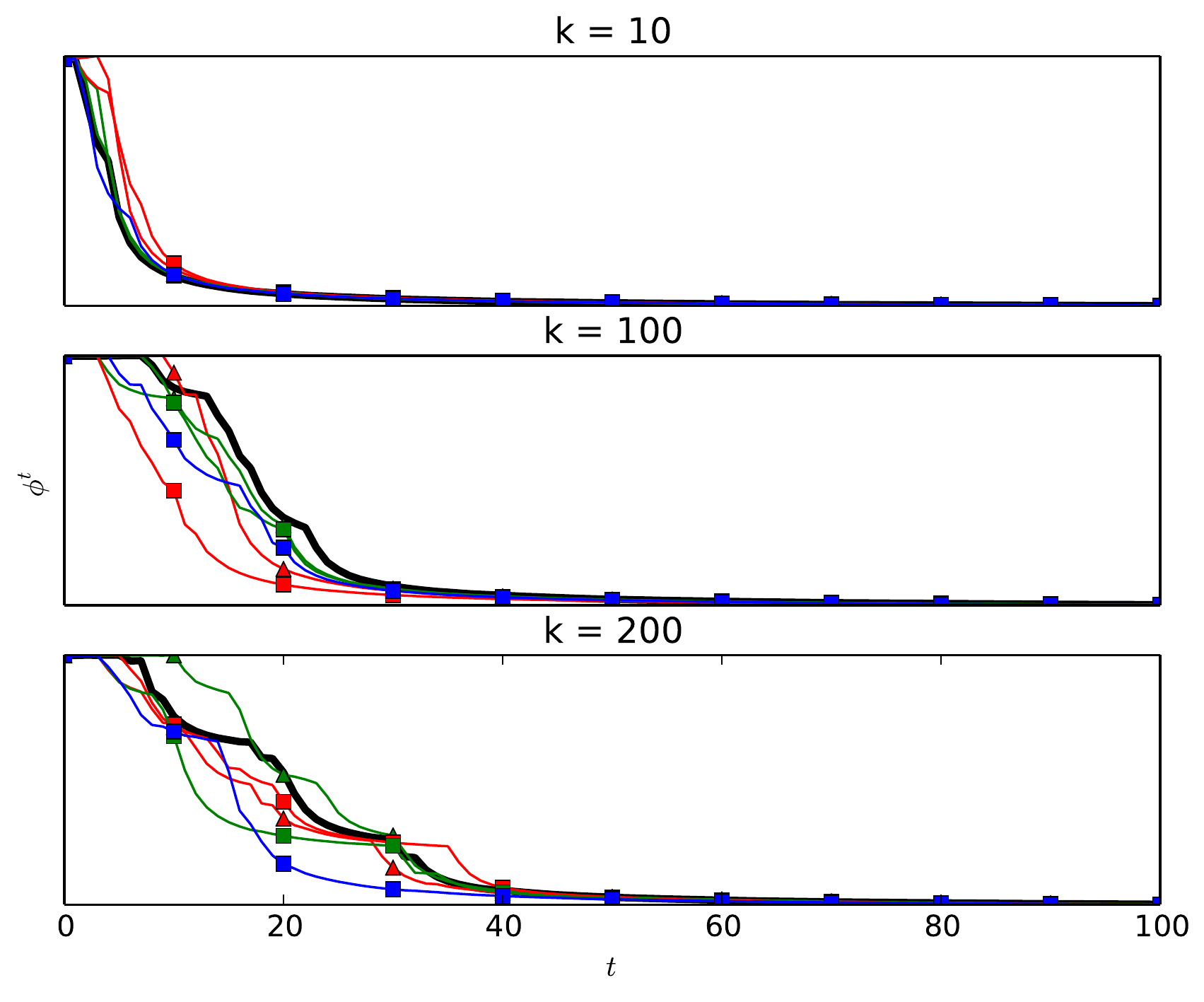}
  \caption{$m=1$, original plot}
  \label{fig:mb1_origin}
\end{subfigure}
\vspace{-0.1cm}
%\caption{Convergence of online stochastic $k$-means ($m$=1)}
%\label{fig:mb1}
\vspace{-0.3cm}
\end{figure}
\setcounter{figure}{1}
%%%%% Second set of figures
\begin{figure}
\begin{subfigure}{.33\textwidth}
  \centering
  \includegraphics[width=\linewidth, height = 0.6\textwidth]{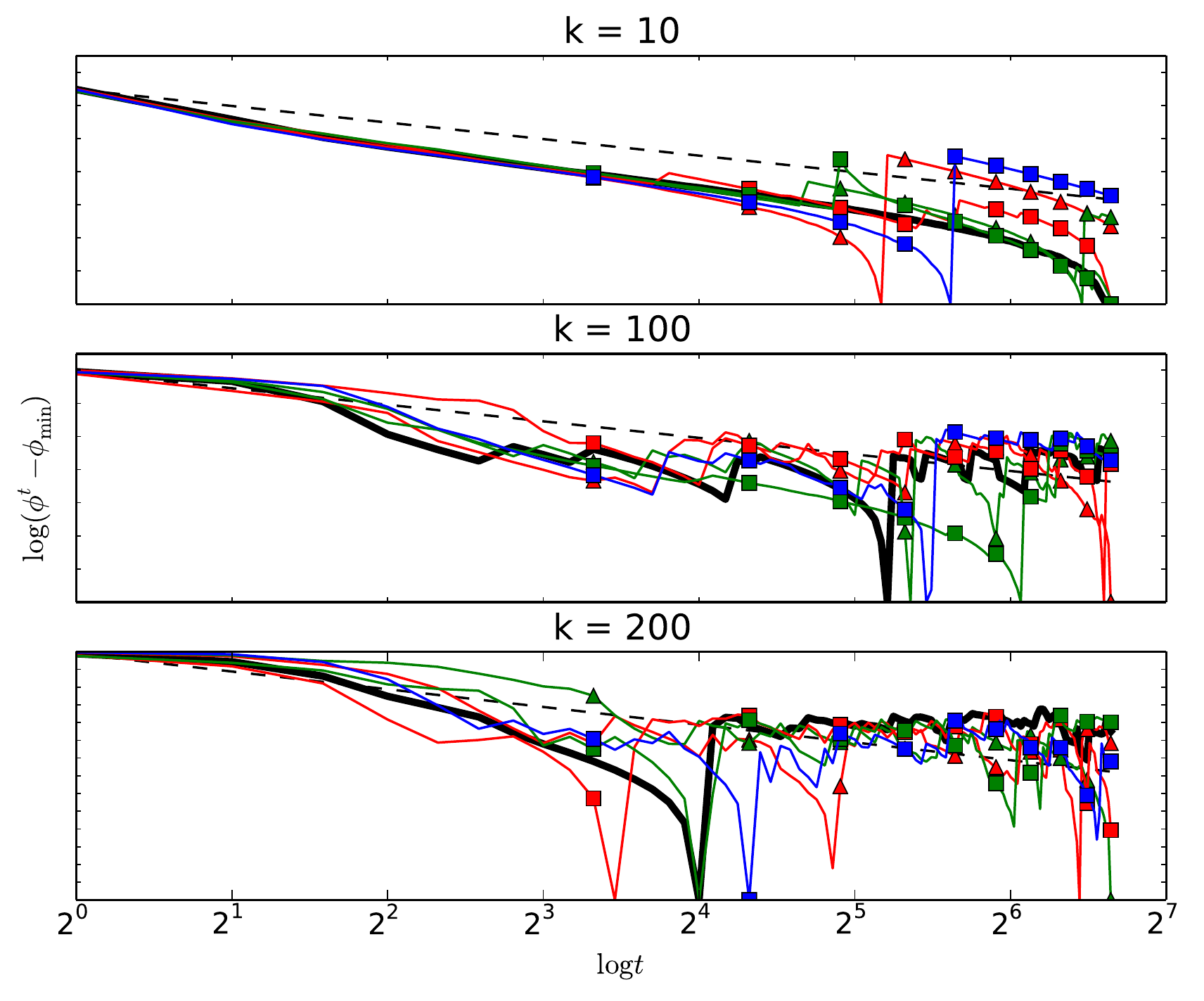}
  \caption{$m=10$}
  \label{fig:mb10_log}
\end{subfigure}%
\begin{subfigure}{.33\textwidth}
  \centering
  \includegraphics[width=\linewidth, height = 0.6\textwidth]{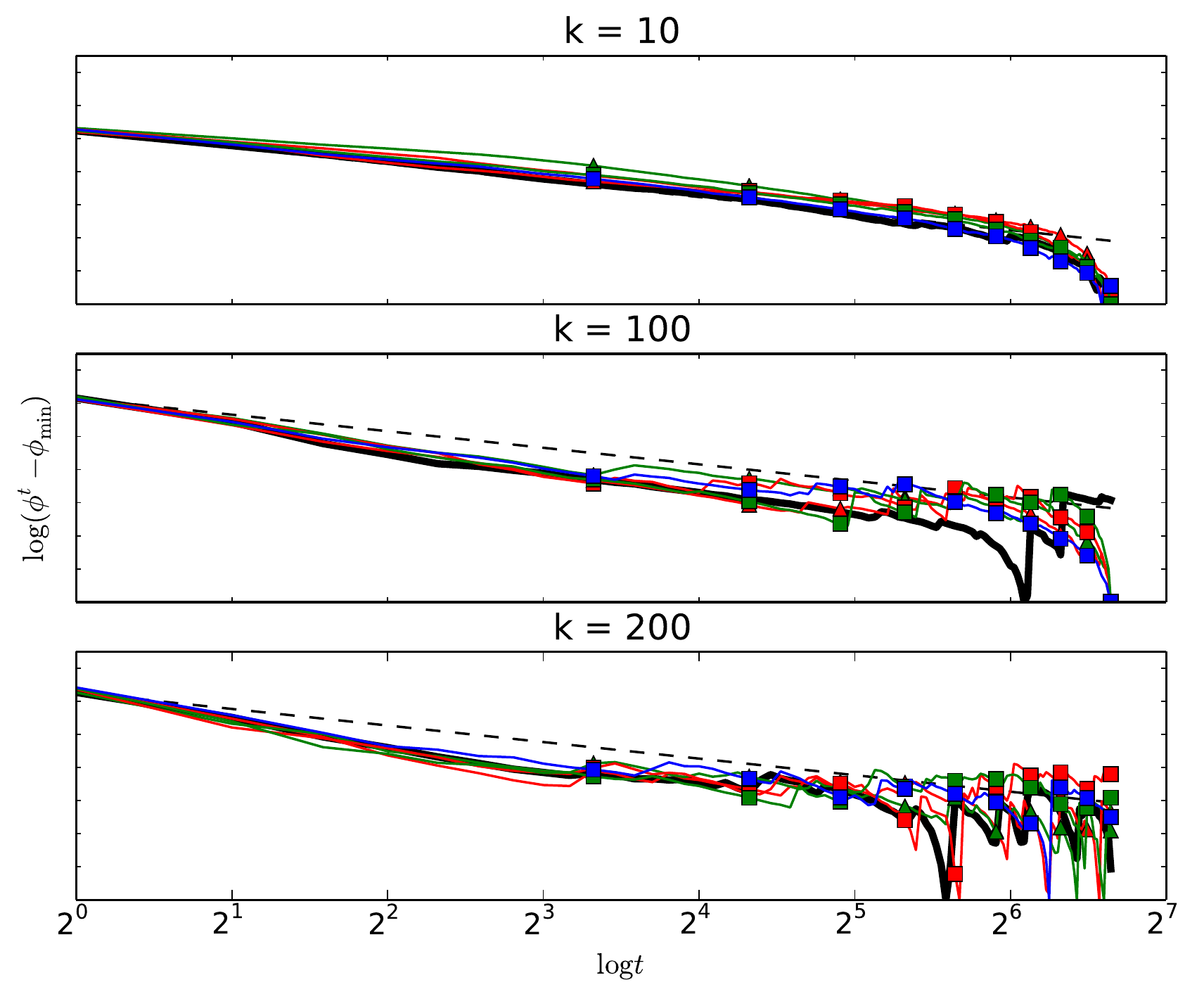}
  \caption{$m=100$}
  \label{fig:mb100_log}
\end{subfigure}
\begin{subfigure}{.33\textwidth}
  \centering
  \includegraphics[width=\linewidth, height = 0.6\textwidth]{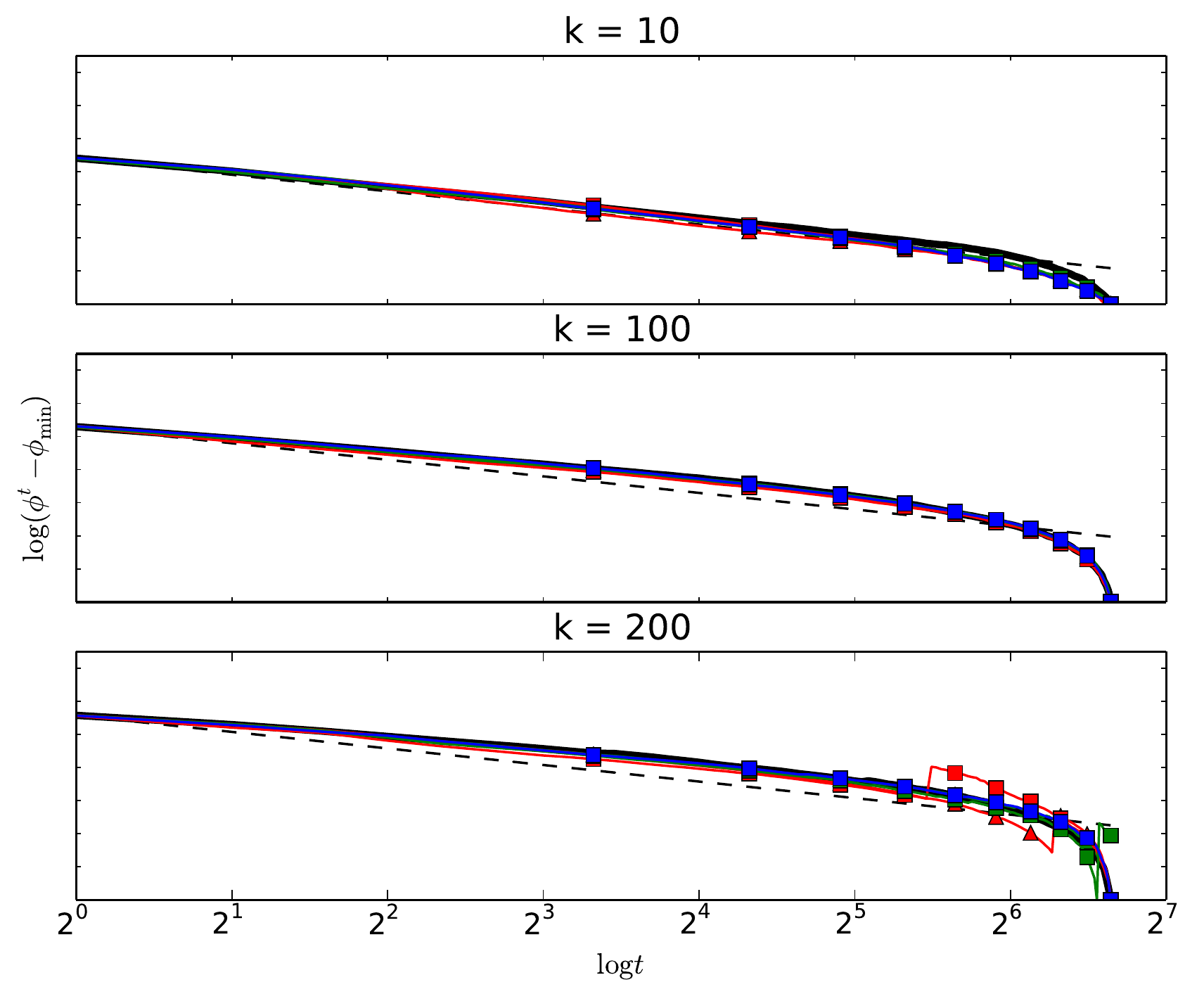}
  \caption{$m=1000$}
  \label{fig:mb1000_log}
\end{subfigure}
\caption{Convergence graphs of mini-batch $k$-means}
\label{fig:log}
\vspace{-0.4cm}
\end{figure}
%%%%%
\section{Experiments}\label{sec:EXP}
To verify the $O(\frac{1}{t})$ global convergence rate of Theorem \ref{thm:km}, we run stochastic $k$-means with varying learning rate, mini-batch size, and $k$ on \texttt{RCV1} \cite{data:rcv1}. The dataset is relatively large in size: it has manually categorized $804414$ newswire stories with $103$ topics, where each story is a $47236$-dimensional sparse vector; it was used in \cite{Sculley} for empirical evaluation of mini-batch $k$-means.
%(we refer the readers to \cite{Sculley} for a comparison between the performance of stochastic $k$-means and batch $k$-means).
We used \texttt{Python} and its \texttt{scikit-learn} package \cite{scikit-learn} for our experiments, which has stochastic $k$-means implemented. We disabled centroid relocation and modified their source code to allow a user-defined learning rate (their learning rate is fixed as $\eta^t_r:=\frac{\hat{n}_r^t}{\sum_{i\le t}\hat{n}_r^i}$, as in \cite{BottouBengio, Sculley}, which we refer to as \textbf{BBS-rate}). 

Figure \ref{fig:log} shows the convergence in $k$-means cost of stochastic $k$-means algorithm over $100$ iterations for varying $m$ and $k$; fix each pair $(m, k)$, we initialize Algorithm \ref{alg:MBKM} with a same set of $k$ random centroids and run stochastic $k$-means with varying learning rate parameters $(c^{\prime}, t_o)$, and we average the performance of each learning rate setup over $5$ runs to obtain the original convergence plot. 
Figure \ref{fig:mb1_origin} is an example of a convergence plot before transformation. The dashed black line in each log-log figure is $\frac{\phi^0-\phi_{\min}}{t}$, a function of order $\Theta(\frac{1}{t})$. To compare the performance of stochastic $k$-means with this baseline, we first transform the original $\phi^t$ vs $t$ plot to that of $\phi^t-\phi_{\min}$ vs $t$. By Theorem \ref{thm:km}, $E[\phi^t-\phi^{**}|G(A^{**})]=O(\frac{1}{t})$, so we expect the slope of the log-log plot of $\phi^t-\phi^{**}$ vs $t$ to be at least as large as that of $\Theta(\frac{1}{t})$. Although we do not know the exact cost of the stationary point, since the algorithm has reached a stable phase over $100$ iterations, as illustrated by Figure \ref{fig:mb1_origin}, we simply use $\phi_{\min}$ as an estimate of $\phi^{**}$. 
Most log-log convergence graphs fluctuate around a line with a slope at least as steep as that of $\Theta(\frac{1}{t})$, and do not seem to be sensitive to the choice of learning rate in our experiment. Note in some plots, such as Figure \ref{fig:mb1_log}, the initial phase is flat. This is because we force the plot to start at $\phi^0-\phi_{\min}$ instead of their true intercept on the $y$-axis. \textbf{BBS-rate} exhibits similar behavior to our flat learning rates.
Our experiment suggests the convergence rate of stochastic $k$-means may be sensitive to the ratio $\frac{m}{k}$; for larger $m$ or smaller $k$, faster and more stable convergence is observed. 
%Another observation is, when $m$ either equals to $1$ or is a multiple of $k$, the convergence has less fluctuation than if we had chosen an intermediate mini-batch size.
\small{
\bibliography{mbkm_nips16}
\bibliographystyle{plain}
}
\newpage
%%%%%%%%% From MiniBatch_draft4 (Aug19,2014)
\section{Appendix A: complete version of Section \ref{sec:framework}}
To facilitate our analysis of mini-batch $k$-means, we build a framework to
track the solution path produced by batch $k$-means;
it alternates between two solutions spaces: the space of all $k$-centroids, which we denote by $\{C\}$, and the space of all $k$-clusterings, which we denote by $\{A\}$. 
Note the latter is a finite set. Throughout the paper, we use $\pi: [k]\rightarrow [k]$ to denote permutations between two sets of the same cardinality. 
\paragraph{Degenerate cases}
One problem with batch $k$-means algorithm is it may produce degenerate solutions: if the solution $C^t$ has $k$ centroids, it is possible that data points are mapped to only $k^{\prime}<k$ centroids. To handle degenerate cases, starting with $|C^0|=k$, we will
consider the enlarged clustering space $\{A\}_{[k]}$, the union of all $k^{\prime}$-clusterings, which we denote by $\{A\}_{k^{\prime}}$,
with $1\le k^{\prime}\le k$.
\paragraph{Partitioning $\{C\}$ via $\{A\}$}
Our first observation is that most part of $\{C\}$ (with exception discussed below) can be partitioned into equivalence classes by the clustering they induce on $X$;
%That is, we can relate each set of $k$ centroids, $C$, to its $k$-clustering of $X$, $V(C)\cap X$.
for any $C$, let 
$
v(C):= V(C)\cap X \in \{A\}_{[k]}
$ to formally denote the clustering they induce via their Voronoi diagram.
%We know $v$ is well-defined, since the Voronoi partition of a set of centroids is unique; 
%$v$ is onto by definition.
For most points in $\{C\}$, there is only one such clustering so $v(C)$ is uniquely determined. We define $v^{-1}(A)$ as the set of points inducing a unique $k^{\prime}$-clustering $A$, with $k^{\prime}\le k$.
Then we let $C_1,C_2$ be in the same partition in $\{C\}$ if $v(C_1),v(C_2)$ are both unique and $v(C_1)=v(C_2)$.
%$v$ is not one-to-one, since two different set of centroids may induce the same $k$-clustering.
%So $v^{-1}(A)$ can be a set of $k$-centroids of potentially infinite cardinality.
\paragraph{Ambiguous cases}
However, it is not always clear which partition $C$ belongs to: if 
$\exists x\in X$ such that $\|x-c_1(x)\|=\|x-c_2(x)\|$, where $c_1(x), c_2(x)$ denote the centroids in $C$ that are closest and second closest to $x$, $x$ can be clustered into either the cluster of centroid $c_1(x)$ or that of $c_2(x)$. Centroids with this property can induce two or more clusterings due to ambiguity of Voronoi partition.
Intuitively, they are at the boundary of $v^{-1}(A^i)$, for some clusterings $A^i$. 
%To avoid problems, we enforce that boundary solutions are belong to as much $k$-clusterings as they can. Thus, we can talk about $Cl(v^{-1}(A))$, the closure of $v^{-1}(A)$, which contains the boundary of $v^{-1}(A)$. 
We formalize $v^{-1}(A)$ and boundary points as below.
%%%%%%%%%%%%%%%% Defn: boundary point
%\begin{defn}
%Fix a clustering $A=\{A_1,\dots,A_k\}$, we can represent $v^{-1}(A)$ as follows:
%$C\in v^{-1}(A)$ if it contains a matching set of centers $\{c_1,\dots, c_{k}\}\subset C$,
%s.t.
%$\forall r\in [k], \forall x\in A_r$, $\|x-c_r\|<\|x-c_s\|, \forall s\ne r$.
%We say $C$ is a boundary point of $v^{-1}(A)$, if
%$\forall r\in [k], \forall x\in A_r$, $\|x-c_r\|\le\|x-c_s\|, \forall s\ne r$, with equality
%attained for at least one data $x$.
%Let $B(v^{-1}(A))$ denote the set of all boundary points of $v^{-1}(A)$.
%Let $Cl(v^{-1}(A)):=v^{-1}(A)\cup B(v^{-1}(A))$.
%\end{defn}
%%%%%%
\begin{defn}[members of $v^{-1}(A)$]
\label{defn:members}
Fix a clustering $A=\{A_1,\dots,A_k\}$, we define
$C\in v^{-1}(A)$ if it contains a matching set of centroids and there exists a permutation of $[k]$
s.t.
$\forall x\in A_r, \forall r\in [k]$, 
$$
\|x-c_{\pi(r)}\|<\|x-c_{\pi(s)}\|, \forall s\ne r
$$
We say $C$ is a \textbf{boundary point} of $v^{-1}(A)$, if
$\forall r\in [k], \forall x\in A_r$, 
$$
\|x-c_{\pi(r)}\|\le\|x-c_{\pi(s)}\|, \forall s\ne r
$$ with equality attained for at least one data point $x$.
Let $B(v^{-1}(A))$ denote the set of all boundary points of $v^{-1}(A)$.
Let $Cl(v^{-1}(A)):=v^{-1}(A)\cup B(v^{-1}(A))$ denote the \textbf{closure} of $v^{-1}(A)$.
\end{defn}
Note that in the case $C$ has $k^{\prime}>k$ centroids, $C\in v^{-1}(A)$ implies
all centroids in $C\setminus \{c_{\pi(1)},\dots, c_{\pi(k)}\}$ are degenerate.
%%%%%%%%%%%%%%%
\paragraph{Representing $k$-means updates}
For now, let $C^*$ denote a ``local minimum'' of batch $k$-means and
suppose $C^*$ is not a boundary point. Let $A^*:=v^{-1}(C^*)$.
One run of batch $k$-means can be represented as
$$
C^0\mathrel{\mathop{\rightarrow}^{\mathrm{v(C^0)}}} A^1
\mathrel{\mathop{\rightarrow}^{\mathrm{m(A^1)}}} C^1
\mathrel{\mathop{\rightarrow}^{\mathrm{}}} \dots 
\mathrel{\mathop{\rightarrow}^{\mathrm{m(A^{t-1})}}} C^{t-1}
\mathrel{\mathop{\rightarrow}^{\mathrm{v(C^{t-1})}}} A^t \dots
\mathrel{\mathop{\rightarrow}^{\mathrm{}}} A^{*}
\mathrel{\mathop{\rightarrow}^{\mathrm{}}} C^*
$$
Figure \ref{fig:solution_spaces} illustrates how the algorithm alternates between
two solution spaces.
When batch $k$-means visits a clustering $A^t$, if $m(A^t)\notin Cl(v^{-1}(A^t))$, the algorithms jumps to another clustering $A^{t+1}$.
Otherwise, if $m(A^t)\in v^{-1}(A^t)$, the algorithm stops because $A^{t+1}=A^t$ and $m(A^{t+1})=m(A^t)$.
In the special case where $m(A^t)$ is a boundary stationary point,
since the algorithm arbitrarily breaks the tie, then it will continue to operate if the new clustering is chosen such that $A^{t+1}\neq A^t$, or stops if $A^{t+1}=A^t$.
In practice, it is unlikely to encounter a boundary point due to perturbations in the computing system, and regardless,
a sufficient condition for batch $k$-means to converge at the $t$-th iteration is $m(A^t)\notin Cl(v^{-1}(A^t))$. 
This motivates us to formalize ``local optima'' of batch $k$-means as below.
%%%%%%%%%%%%% Definitions of stationary clusterings/points
 \begin{defn}[Stationary clusterings]
We call $A^*$ a stationary clustering of $X$, if $m(A^*)\in Cl(v^{-1}(A^*))$.
We let $\{A^*\}_{[k]}\subset \{A\}_{[k]}$ denote the set of all stationary clusterings of $X$
with number of clusters $k^{\prime}\in [k]$.
\end{defn}
\begin{defn}[Stationary points]
For a stationary clustering $A^*$ with $k^{\prime}$ clusters, we define $C^{*}=\{c_r^*, r\in [k^{\prime}]\}$ to be a stationary point corresponding to $A^*$, so that $\forall A_{r}^{*}\in A^{*}$, $c_r^*:=m(A_r^{*})$.  
%with $A_r^{*}:=V(c_r^{*})\cap X$.
We let $\{C^*\}_{[k]}$ denote the corresponding set of all stationary points of $X$ with $k^{\prime}\in [k]$.
\end{defn}
%%%%%%%%%
\noindent
Note that the correspondence between $A^*$ and $C^*$ is one to one. And
by our definition, stationary points cannot be degenerate.
%%%%%%%%%%% Metrics
\paragraph{Distance function from $C^{\prime}$ to $C$}
Fix a clustering $A$ with its induced $k$-centroids $C:=m(A)$, and another set of $k^{\prime}$-centroids $C^{\prime}$ ($k^{\prime}\ge k$) with its induced clustering $A^{\prime}$, if
$|A^{\prime}|=|A|=k$ (this means if $k^{\prime}>k$, then $C^{\prime}$ has at least one degenerate centroid), then
we can pair the subset of non-degenerate $k$ centroids in $C^{\prime}$ with those in $C$, and ignore the degenerate centroids.
Under this condition, we define their centroidal distance as
$\Delta(C^{\prime},C):=\min_{\pi: [k]\rightarrow [k]}\sum_r n_r\|c^{\prime}_{\pi(r)}-c_{r}\|^2$. 
Sometimes we use $\Delta^t:=\Delta(C^t,C^{*})$ as a shorthand, when we want to measure the distance between $C^t$ from Algorithm \ref{alg:MBKM} and a fixed stationary point $C^*$.
This distance is \textit{asymmetric} and non-negative, and evaluates to zero if and only if two sets of centroids coincide. Using this distance function, we have a sufficient test on whether a clustering $A$ is stationary.

Note that the permutation $\pi^*:=\arg\min_{\pi: [k]\rightarrow [k]}\sum_r n_r\|c^{\prime}_{\pi(r)}-c_{r}\|^2$ may not be unique, unless $\Delta(C^{\prime},C)$ is small. In the rest of our paper when we refer to such a permutation that defines $\Delta(C^{\prime},C)$ in our proofs, it can be any permutation that is a minimizer.
The following lemma shows $\Delta$ distance can be used as a test to determine whether two sets of centroids induces the same clustering (belong to the same partition).
%%%%%%%%%%%%%%% Lemma: Closure
\begin{lm}
\label{lm:closure}
Fix a clustering $A=\{A_1,\dots, A_k\}$, let $C:=m(A)$ and $C^{\prime}\in v^{-1}(A)$, then 
$\exists \delta >0$ such that the following statement holds: 
\begin{eqnarray}
\label{statement}
\Delta(C^{\prime},C)<\delta \implies C\in v^{-1}(A)
\end{eqnarray}
\end{lm}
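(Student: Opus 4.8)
The plan is to show that the equivalence class $v^{-1}(A)$ is an \emph{open} subset of the centroid space, so that membership survives small $\Delta$-perturbations. Since $C'\in v^{-1}(A)$, after relabelling the centroids of $C'$ I may assume the identity permutation witnesses membership, i.e. for every $r$ and every $x\in A_r$ one has $\|x-c'_r\|<\|x-c'_s\|$ for all $s\neq r$. Because $X$ is finite these are finitely many \emph{strict} inequalities, so I can define a strictly positive \emph{margin}
$$
\mu:=\min_{r}\,\min_{x\in A_r}\,\min_{s\neq r}\big(\|x-c'_s\|^2-\|x-c'_r\|^2\big)>0,
$$
along with the finite radius $D:=\max_{x\in X}\max_{r}\|x-c'_r\|$. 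Both depend only on the fixed objects $X$, $A$, and $C'$, which is all that the existential form of the claim ($\exists\delta>0$) permits.

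Next I would convert the hypothesis $\Delta(C',C)<\delta$ into per-centroid control. Let $\pi^{*}$ attain the minimum in $\Delta(C',C)=\min_{\pi}\sum_r n_r\|c'_{\pi(r)}-c_r\|^2$. Every cluster of $A$ is non-empty, so $n_r\ge 1$, and therefore $\Delta(C',C)<\delta$ forces $\|c'_{\pi^{*}(r)}-c_r\|<\sqrt{\delta}$ for each $r$. Setting $\tau:=(\pi^{*})^{-1}$, this says the centroid $c_{\tau(r)}$ lies within $\sqrt{\delta}$ of $c'_r$, the centroid separating $A_r$ under $C'$. My goal is to verify that the \emph{same} relabelling $\tau$ makes $C$ separate $A$, which is exactly $C\in v^{-1}(A)$.

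The core step is a perturbation bound on squared distances based on the identity $\|x-a\|^2-\|x-b\|^2=\langle b-a,\,2x-a-b\rangle$. For $x\in A_r$ and $s\neq r$ I would split
$$
\|x-c_{\tau(r)}\|^2-\|x-c_{\tau(s)}\|^2
=\big(\|x-c_{\tau(r)}\|^2-\|x-c'_r\|^2\big)
+\big(\|x-c'_r\|^2-\|x-c'_s\|^2\big)
+\big(\|x-c'_s\|^2-\|x-c_{\tau(s)}\|^2\big).
$$
The middle term is at most $-\mu$ by definition of the margin, while Cauchy--Schwarz together with the triangle inequality bounds each outer term in absolute value by $\sqrt{\delta}\,(2D+\sqrt{\delta})$. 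Hence the whole difference is at most $-\mu+2\sqrt{\delta}\,(2D+\sqrt{\delta})$, which is strictly negative as soon as $\delta$ is chosen small enough that $2\sqrt{\delta}\,(2D+\sqrt{\delta})<\mu$. This delivers $\|x-c_{\tau(r)}\|<\|x-c_{\tau(s)}\|$ for all $x\in A_r$ and all $s\neq r$, i.e. $C\in v^{-1}(A)$.

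The point I expect to require the most care is the bookkeeping of permutations: $\pi^{*}$ need not be the identity, so I must carry the relabelling $\tau=(\pi^{*})^{-1}$ through the separation argument rather than pairing means to $C'$ index-for-index. This is not a genuine obstacle, however, because membership in $v^{-1}(A)$ only asks for the \emph{existence} of a separating permutation, and $\tau$ supplies one; no claim that $\pi^{*}$ is the ``right'' matching is needed. Finally, I note the argument never uses $C=m(A)$ beyond $C$ having $k$ centroids, so the lemma is in essence the statement that $v^{-1}(A)$ is open, with the threshold $\delta$ allowed to depend on $A$ and $C'$ through $\mu$ and $D$.
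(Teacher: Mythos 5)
Your proof is correct and follows essentially the same route as the paper: both exploit the finitely many strict separation inequalities of $C'$ to extract a positive margin, convert $\Delta(C',C)<\delta$ into per-centroid closeness using $n_r\ge 1$, and carry the minimizing permutation (your $\tau=(\pi^*)^{-1}$, the paper's $\pi'=\pi_*^{-1}\circ\pi_o$) through to exhibit a separating permutation for $C$. The only cosmetic difference is that you perturb squared distances via the identity $\|x-a\|^2-\|x-b\|^2=\langle b-a,2x-a-b\rangle$ and a diameter bound $D$, whereas the paper defines the margin on unsquared distances and applies the triangle inequality directly, avoiding the need for $D$.
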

\begin{proof}
Since $C=m(A)$, $|C|=|A|=k$. 
Since $C^{\prime}\in v^{-1}(A)$, by Definition \ref{defn:members}, 
$|C^{\prime}|=k^{\prime}\ge k$, and
there is a permutation $\pi_{o}$ of $[k]$ and some subset
$\{c^{\prime}_{\pi_o(1)},\dots, c^{\prime}_{\pi_o(k)}\}\subset C^{\prime}$ such that
$\forall x\in A_r, \forall r\in [k]$,
$$
\|x - c^{\prime}_{\pi_{o}(r)}\|<\|x-c^{\prime}_{\pi_{o}(s)}\|, \forall s\ne r
$$
Note also the prerequisite for defining a distance $\Delta$ from $C^{\prime}$ to $C$ is satisfied, and we can choose $\delta > 0$ s.t. $\forall x\in A_r$, $\forall r\in [k], s\ne r$,
$$
\|x - c^{\prime}_{\pi_{o}(r)}\|\le\|x-c^{\prime}_{\pi_{o}(s)}\|-2\sqrt{\delta}
$$
and we require the equality is attained by at least one triple $(x,r,s)$.
Let $\pi^*$ be a permutation satisfying
$$
\pi^* = \arg\min_{\pi}\sum_{r\in [k]}n_r\|c^{\prime}_{\pi(r)}-c_r\|^2
$$
Let $\pi^{\prime}:=\pi_{*}^{-1}\circ\pi_{o}$. We have $\forall r, s\ne r$,
\begin{eqnarray*}
\|x-c_{\pi^{\prime}(s)}\| - \|x-c_{\pi^{\prime}(r)}\|
\ge
\|x-c^{\prime}_{\pi_{o}(s)}\|-\|c^{\prime}_{\pi_{o}(s)}-c_{\pi^{\prime}(s)}\|\\
-(\|x-c^{\prime}_{\pi_{o}(r)}\|+\|c^{\prime}_{\pi_{o}(r)}-c_{\pi^{\prime}(r)}\|)
>
\|x-c^{\prime}_{\pi_{o}(s)}\|-\|x-c^{\prime}_{\pi_{o}(r)}\|-2\sqrt{\delta} 
\ge 0
\end{eqnarray*}
where the second inequality is by the fact that
$$
\max_{r}\|c^{\prime}_{\pi_{o}(r)}-c_{\pi^{\prime}(r)}\|^2\le \Delta(C^{\prime},C)<\delta
$$
Since $\pi^{\prime}$ is the composition of two permutations of $[k]$, it is also a permutation of $[k]$, and 
$\forall r, s\ne r$, 
$\|x-c_{\pi^{\prime}(r)}\|<\|x-c_{\pi^{\prime}(s)}\|$, so $C\in v^{-1}(A)$ by Definition \ref{defn:members}.
\end{proof}
\paragraph{Remark:}
For any two consecutive iterations of batch $k$-means, $C^{t}, C^{t+1}$, if we let $A^{t+1}$ be $A$, then $C^{t}$ satisfies the condition for $C^{\prime}$ and $C^{t+1}$ for $C$. Applying Lemma \ref{lm:closure}, we can conclude that when $\Delta(C^t,C^{t+1})$ is sufficiently small, batch $k$-means converges.
Lemma \ref{lm:closure} also implies $\exists \delta>0$, such that the contrapositive of statement \eqref{statement} holds. This will be used in the proof of Lemma \ref{lm:limit_cluster}.
%%%%%%%% Definition: stable stationary points
%\begin{defn}[Stable stationary points]
%\label{defn:stable}
%We call $C^{*}$ a $(b_0,\alpha)$-stable point of batch $k$-means if it is a stationary point and for any clustering $C$ such that 
%$\Delta(C,C^*)\le b^{\prime}\phi^*$, $b^{\prime}\le b_0$, 
%the Voronoi partition induced by $\{c_r\}$, denoted by $\{A_r\}$, satisfies
%%$\phi^*\le \phi(C)$, and 
%$\max_r\frac{|A_{\pi(r)}\triangle A_r^*|}{n^*_r}\le \frac{b}{5b+4(1+\phi(C)\slash\phi^*)}$, where
%$\pi$ is the permutation achieving $\Delta(C,C^*)$,
%with $b\le \alpha b^{\prime}$ for some $\alpha\in [0,1)$.
%\end{defn}
%Note that here we do not assume $\phi^*$ has the smallest $k$-means cost in a small neighborhood. 
%\noindent
%If we let $b$ go to zero, the definition of a stable stationary point requires 
%$\max_r\frac{|A_{\pi(r)}\triangle A_r^*|}{n^*_r}$ goes to zero as well. So a stable stationary point cannot be a boundary point. 
%The following two lemmas give some characterization of non-boundary points.
%\noindent
%The following lemma, similar to Lemma \ref{lm:closure}, implies any non-boundary stationary point must satisfy $(r, 0)$-stability for a sufficiently small radius $r>0$.
%\paragraph{On the choice of centroidal distance measure}
%Previously, the centroidal distance was measured by the maximal distance over
%all pairs of centroids. 
%Then we discuss the choice of learning rate in \cite{BottouBengio} and \cite{Sculley}, and give our choice of $\eta_t$ from the perspective of stochastic optimization.
%%%%%%%%%%%%%%%%%%Equivalence between algorithms
\paragraph{Equivalence of Algorithm \ref{alg:MBKM} to stochastic $k$-means}
Here, we formally show that Algorithm~\ref{alg:MBKM} with specific instantiation of sample size $m$ and learning rates $\eta^t_r$ is equivalent to online k-means \cite{BottouBengio} and mini-batch k-means \cite{Sculley}. 
\begin{claim}\label{claim:equivAlg}
In Algorithm~\ref{alg:MBKM}, if we set a counter for $\hat{N}^t_r:=\sum_{i=1}^t\hat{n}_r^i$ and
if we set the learning rate 
$\eta_r^t:=\frac{\hat{n}_r^t}{\hat{N}^t_r}$,
then provided the same random sampling scheme is used,
\begin{enumerate}
\item
When mini-batch size $m=1$, the update of Algorithm~\ref{alg:MBKM} is equivalent to that described in [Section 3.3, \cite{BottouBengio}].
\item
When $m>1$, the update of Algorithm~\ref{alg:MBKM} is equivalent to that described from line 3 to line 14 in [Algorithm 1, \cite{Sculley}] with mini-batch size $m$.
\end{enumerate}
\end{claim}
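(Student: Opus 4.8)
The plan is to prove the claim by establishing a single \emph{running-mean invariant} that both Algorithm \ref{alg:MBKM} (under the stated learning rate $\eta_r^t=\hat{n}_r^t/\hat{N}_r^t$) and the reference algorithms maintain, and then to argue by induction on $t$ that the two trajectories coincide whenever the same sampling outcomes $S^1,S^2,\dots$ are used. Concretely, I would prove that for every center $r$ and every iteration $t$ the invariant $\hat{N}_r^t\,c_r^t=\sum_{i\le t}\sum_{s\in S^i_r}s$ holds, i.e.\ $c_r^t$ equals the arithmetic mean of all sampled points ever assigned to center $r$ through iteration $t$ (with the convention $\hat{N}_r^0=0$, so the seed is overwritten by the first point assigned to $r$). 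Because the nearest-center assignment step is deterministic given the current centers, and in both algorithms the assignments for a whole mini-batch are computed from the \emph{pre-update} centers, the invariant together with identical seeding and identical sampling forces the two centroid sequences to be identical.

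For the batch update in Algorithm \ref{alg:MBKM} the invariant is an immediate algebraic identity: since $\hat{N}_r^t=\hat{N}_r^{t-1}+\hat{n}_r^t$, the chosen rate satisfies $\eta_r^t=\hat{n}_r^t/\hat{N}_r^t$ and $1-\eta_r^t=\hat{N}_r^{t-1}/\hat{N}_r^t$, so the update
$$
c_r^t=(1-\eta_r^t)\,c_r^{t-1}+\eta_r^t\,\hat{c}_r^t=\frac{\hat{N}_r^{t-1}\,c_r^{t-1}+\hat{n}_r^t\,\hat{c}_r^t}{\hat{N}_r^t}
$$
combines the previous mean (of $\hat{N}_r^{t-1}$ points) with the mini-batch mean $\hat{c}_r^t$ (of $\hat{n}_r^t$ points) into the mean of all $\hat{N}_r^t$ points, which is exactly the invariant at step $t$.

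For the mini-batch update of \cite{Sculley} ($m>1$) the relevant lines process the points of the mini-batch \emph{one at a time}, incrementing a per-center counter $v[c]$ and using $\eta=1/v[c]$ for each. I would show that this sequential scheme is the classic MacQueen running-mean update: if before a point $x$ the center is the mean of $N$ points and $v[c]$ becomes $N+1$, then $(1-\tfrac{1}{N+1})c+\tfrac{1}{N+1}x=\frac{Nc+x}{N+1}$ is the mean of $N+1$ points. Iterating over the $\hat{n}_r^t$ points assigned to center $r$ in mini-batch $t$ yields the mean of all $\hat{N}_r^t$ points assigned so far, independent of the processing order since a mean does not depend on order, which is the same invariant and hence the same $c_r^t$ as Algorithm \ref{alg:MBKM}. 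The online case of \cite{BottouBengio} ($m=1$) is the specialization in which exactly one center is updated per iteration with $\hat{n}_r^t=1$ and $\hat{c}_r^t=s$, so that $\eta_r^t=1/\hat{N}_r^t$ and the update is the standard competitive-learning step; matching it against the equation in [Section 3.3, \cite{BottouBengio}] completes that case.

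The argument is essentially bookkeeping, so I do not expect a deep obstacle; the points requiring care are (i) the timing of the assignment step, where I must check that both algorithms cache nearest centers from the pre-update positions within a mini-batch, so that the deterministic assignment really does agree given identical centers; and (ii) the order-independence of \cite{Sculley}'s sequential within-batch updates, which I would make explicit via the running-mean identity so that its net effect on $c_r^t$ coincides with the single batch update of Algorithm \ref{alg:MBKM} regardless of the order in which the mini-batch points happen to be processed. The base case of the induction also deserves a remark: with $\hat{N}_r^0=0$ the seed contributes nothing once the first point is assigned, matching the $v=0$ initialization of \cite{Sculley}.
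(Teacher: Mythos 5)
Your proposal is correct and takes essentially the same route as the paper: your running-mean invariant $\hat{N}_r^t c_r^t=\sum_{i\le t}\sum_{s\in S_r^i}s$ is exactly the content of the paper's Lemma~\ref{techlm:avgAlg}, which the paper likewise combines with the algebraic identity $(1-\eta_r^t)c_r^{t-1}+\eta_r^t\hat{c}_r^t=\frac{\hat{N}_r^{t-1}c_r^{t-1}+\hat{n}_r^t\hat{c}_r^t}{\hat{N}_r^t}$ to match the two recursions for $m>1$. The only cosmetic difference is that the paper verifies the $m=1$ case by direct substitution into the online update of \cite{BottouBengio} (using $\hat{n}_r^t\in\{0,1\}$) rather than as a specialization of the invariant.
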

\begin{proof}
For the first claim, we first re-define the variables used in [Section 3.3, \cite{BottouBengio}]. We substitute index $k$ in \cite{BottouBengio} with $r$ used in Algorithm~\ref{alg:MBKM}. For any iteration $t$, we define the equivalence of definitions: $s \leftarrow x_i$, 
$c^t_r \leftarrow w_k$, $\hat{n}^t_r \leftarrow \Delta n_k$, $\hat{N}^t_r \leftarrow n_k$. 
According to the update rule in \cite{BottouBengio}, $\Delta n_k = 1$ if the sampled point $x_i$ is assigned to cluster with center $w_k$. Therefore, the update of the k-th centroid according to online $k$-means in \cite{BottouBengio} is:
$$
w_k\leftarrow w_k + \frac{1}{n_k}(x_i - w_k)1_{\{\Delta n_k =1\}}
$$
Using the re-defined variables, at iteration $t$, this is equivalent to 
$$
c^{t}_r = c^{t-1}_r + \frac{1}{\hat{N}^{t}_r}(s - c^{t-1}_r)1_{\{\hat{n}^t_r=1\}}
$$
Now the update defined by Algorithm~\ref{alg:MBKM} with $m=1$ and $\eta_r^t=\frac{\hat{n}_r^t}{\hat{N}_r^t}$ is:
\begin{eqnarray*}
c^{t}_r = c^{t-1}_r + \eta_r^t(\hat{c}_r^{t} - c^{t-1}_r)1_{\{\hat{n}^t_r\ne 0\}}\\
=c^{t-1}_r + \frac{\hat{n}_r^t}{\hat{N}_r^t}(s - c^{t-1}_r)1_{\{\hat{n}^t_r=1\}}
=c^{t-1}_r + \frac{1}{\hat{N}_r^t}(s - c^{t-1}_r)1_{\{\hat{n}^t_r=1\}}
\end{eqnarray*}
since $\hat{n}_r^t$ can only take value from $\{0,1\}$. This completes the first claim.

For the second claim, consider line 4 to line 14 in [Algorithm 1, \cite{Sculley}]. We substitute their index of time $i$ with $t$ in Algorithm~\ref{alg:MBKM}. We define the equivalence of definitions: $m\leftarrow b$, $S^t\leftarrow M$, $s \leftarrow x$, $c^{t-1}_{I(s)}\leftarrow d[x]$, $c^{t-1}_r\leftarrow c$. 

At iteration $t$, we let $v[c_r^{t-1}]_t$ denote the value of counter $v[c]$ upon completion of the loop from line 9 to line 14 for each center $c$, then $\hat{N}^t_{r}\leftarrow v[c_r^{t-1}]_t$. Since according to Lemma~\ref{techlm:avgAlg}, from line 9 to line 14, the updated centroid $c_r^{t}$ after iteration $t$ is
\begin{eqnarray*}
c_r^t = \frac{1}{v[c_r^{t-1}]_t}\sum_{s\in \cup_{i=1}^t S_r^{i}}s
= \frac{1}{\hat{N}^t_{r}}\sum_{s\in \cup_{i=1}^t S_r^{i}}s
\end{eqnarray*} 
This implies
\begin{eqnarray*}
c^t_r-c^{t-1}_r = \frac{1}{\hat{N}^t_r}\sum_{s\in\cup_{i=1}^t S_r^{i}}s -c^{t-1}_r\\
=\frac{1}{\hat{N}^t_r}[\sum_{s\in S^t_r}s+\sum_{s^{\prime}\in \cup_{i=1}^{t-1}S_r^{i}}s^{\prime} ]-c^{t-1}_r\\
=\frac{1}{\hat{N}^t_r}[\sum_{s\in S^t_r}s+\hat{N}^{t-1}_{r}c_r^{t-1}]-c^{t-1}_r\\
=-\frac{\hat{n}^t_r}{\hat{N}^{t}_r}c^{t-1}_r 
+ \frac{\hat{n}^t_r}{\hat{N}^{t}_r}\frac{\sum_{s\in S^t_r}s}{\hat{n}^t_r}
=-\eta_r^t c_r^{t-1}+\eta_r^t\hat{c}_r^t
\end{eqnarray*}
Hence, the updates in Algorithm~\ref{alg:MBKM} and line 4 to line 14 in [Algorithm 1, \cite{Sculley}] are equivalent.
\end{proof}
%%%%%%%%%%%%%%%%%%%%%%%%%%%%%%%% PROOFS of MAIN
\section{Appendix B: proofs of main theorems}
One subtlety we need to point out before the proofs is that, in Algorithm \ref{alg:MBKM}, the learning rate $\eta_r^t$ as well as the update rule:
$$
c_r^t \leftarrow (1-\eta^t_r)c_r^{t-1} + \eta^t_r\hat{c}_r^t
$$
 is only defined for a cluster $r$ that is ``sampled'' at the $t$-th iteration. However, even if the cluster is not ``sampled'', i.e., $c_r^t = c_r^{t-1}$, the same update rule
with $\hat{c}_r^t=c_r^{t-1}$ and and the same learning rate still holds for this case.
So in our analysis, we equivalently treat each cluster $r$ as updated with the same learning rate $\eta^t=\frac{c^{\prime}}{t_o+t}$, and differentiates between a sampled  and not-sampled cluster only through the definition of $\hat{c}_r^t$. 
%%%%%%%%% Revised Proof of Theorem 1
\subsection{Proofs leading to Theorem \ref{thm:km}}
%%%%%%%%%%%%%%%%%%%%% Lemma: convergence in k-means
%\begin{lm}\label{lm:kmeans}
%Suppose $\forall r\in [k]$, $0<\eta_{\min}^t\le\eta_r^t\le \eta_{\max}^t<1$ w.p. $1$. 
%Then,
%$
%E[\phi^{t+1}-\phi^t|F_t]
%\le
%-2\eta_{\min}^{t+1}\phi^t
%(1-\frac{\eta_{\max}^{t+1}}{\eta_{\min}^{t+1}}\sqrt{\frac{\tilde\phi^{t}}{\phi^t}})
%+(\eta_{\max}^{t+1})^2E[\sum_r\sum_{x\in A_r^{t+1}}\|x-\hat{c}_r^{t+1}\|^2+\phi^t|F_t]
%$,
%where $\tilde{\phi}^t:=\sum_r\sum_{x\in A_r^{t+1}}\|x-m(A_r^{t+1})\|^2$.
%\end{lm}
\begin{proof}[Proof of Lemma \ref{lm:kmeans}]
For simplicity, we denote $E[\cdot|F_t]$ by $E_t[\cdot]$ (the same notation is also used as a shorthand to $E[\cdot|\Omega_t]$ in the proof of Theorem \ref{thm:mbkm_local}; we abuse the notation here).
\begin{eqnarray*}
E_t[\phi^{t+1}]
= E_t[\sum_{r=1}^k\sum_{x\in A_r^{t+2}}\|x-c_r^{t+1}\|^2]\\
\le E_t[\sum_r\sum_{x\in A_r^{t+1}}\|x-c_r^{t+1}\|^2] 
=E_t[\sum_r\sum_{x\in A_r^{t+1}}\|x-(1-\eta_r^{t+1})c_r^t-\eta_r^{t+1}\hat{c}_r^{t+1}\|^2]\\
=E_t[\sum_r\sum_{x\in A_r^{t+1}}(1-\eta_r^{t+1})^2\|x-c_r^t\|^2
+(\eta_r^{t+1})^2\|x-\hat{c}_r^{t+1}\|^2\\
+2\eta_r^{t+1}(1-\eta_r^{t+1})\langle x-c_r^t,x-\hat{c}_r^{t+1}\rangle]\\
\end{eqnarray*} 
where the inequality is due to the optimality of clustering $A^{t+2}$ for centroids $C^{t+1}$.
Since
$E_t[\hat{c}_r^{t+1}]=(1-p_r^{t+1})c_r^{t}+p_r^{t+1}m(A_r^{t+1})$, we have
$$
\langle x-c_r^t,x-\hat{c}_r^{t+1}\rangle
=
(1-p_r^{t+1})\|x-c_r^t\|^2
+p_r^{t+1} \langle x-c_r^t, x-m(A_r^{t+1}) \rangle
$$
Plug this into the previous inequality, we get
\begin{eqnarray*}
E_t[\phi^{t+1}]
\le
\sum_r(1-2\eta_r^{t+1})\phi^t_r
+(\eta_r^{t+1})^2\phi^t_r
+(\eta_r^{t+1})^2\sum_{x\in A_r^{t+1}}\|x-\hat{c}_r^{t+1}\|^2\\
+2\eta_r^{t+1}\{(1-p_r^{t+1})\sum_{x\in A_r^{t+1}}\|x-c_r^t\|^2
+p_r^{t+1} \sum_{x\in A_r^{t+1}}\langle x-c_r^t, x-m(A_r^{t+1}) \rangle\}\\
=\phi^t-2\sum_r\eta_r^{t+1}p_r^{t+1}\phi^t_r
+2\sum_r\eta_r^{t+1}p_r^{t+1}\sum_{x\in A_r^{t+1}}\langle x-c_r^t, x-m(A_r^{t+1}) \rangle\}\\
+(\eta_r^{t+1})^2\phi^t_r
+(\eta_r^{t+1})^2\sum_{x\in A_r^{t+1}}\|x-\hat{c}_r^{t+1}\|^2
\end{eqnarray*}
Note by Cauchy-Schwarz,
\begin{eqnarray*}
\sum_{x\in A_r^{t+1}}\langle x-c_r^t, x-m(A_r^{t+1}) \rangle
\le
\sqrt{\sum_{x\in A_r^{t+1}} \|x-c_r^t\|^2 \sum_{x\in A_r^{t+1}} \|x-m(A_r^{t+1})\|^2}\\
\le
\sum_{x\in A_r^{t+1}} \|x-c_r^t\|^2=\phi_r^t
\end{eqnarray*}
Now consider the two sums,
\begin{eqnarray*}
-2\sum_r\eta_r^{t+1}p_r^{t+1}\phi^t_r
+2\sum_r\eta_r^{t+1}p_r^{t+1}\sum_{x\in A_r^{t+1}}\langle x-c_r^t, x-m(A_r^{t+1}) \rangle\\
%\le
%-2\min_{r,t}\eta_r^{t+1}p_r^{t+1}\sum_r(\phi^t_r - \sum_{x\in A_r^{t+1}}\langle x-c_r^t, x-m(A_r^{t+1}) \rangle)
\end{eqnarray*}
For each $r$,
if $p_r^{t+1}=0$, both the left and right term are zero;
if $p_r^{t+1}>0$, we know the term is negative. Hence
\begin{eqnarray*}
-2\sum_r\eta_r^{t+1}p_r^{t+1}\phi^t_r
+2\sum_r\eta_r^{t+1}p_r^{t+1}\sum_{x\in A_r^{t+1}}\langle x-c_r^t, x-m(A_r^{t+1}) \rangle\\
\le
-2\min_{r,t; p_r^{t+1}>0}\eta_r^{t+1}p_r^{t+1}\sum_{r; p_r^{t+1}>0}(\phi^t_r - \sum_{x\in A_r^{t+1}}\langle x-c_r^t, x-m(A_r^{t+1}) \rangle)
\end{eqnarray*}
Our key observation is that $p_r^{t+1}=0$ if and only if cluster $A_r^{t+1}$ is empty, i.e.,
degenerate.
Since the degenerate clusters do not contribute to the $k$-means cost, 
we have $\sum_{r; p_r^{t+1}>0}\phi^t_r=\phi^t$. Moreover,
applying Cauchy-Schwarz again,
$$
\sum_{r; p_r^{t+1}>0}
\sqrt{\sum_{x\in A_r^{t+1}}\|x-c_r^t\|^2}
\sqrt{\sum_{x\in A_r^{t+1}}\|x-m(A_r^{t+1})\|^2}
\le
\sqrt{\phi^t\tilde\phi^{t}}
$$
Therefore,
\begin{eqnarray*}
E_t[\phi^{t+1}]
\le
\phi^t
-2\min_{r,t; p_r^{t+1}>0}\eta_r^{t+1}p_r^{t+1} (\phi^t - \sqrt{\phi^t\tilde\phi^t}) 
+(\eta_{\max}^{t+1})^2(E_t\sum_r\sum_{x\in A_r^{t+1}}\|x-\hat{c}_r^{t+1}\|^2+\phi^t)\\
=\phi^t
-2\min_{r,t; p_r^{t+1}>0}\eta_r^{t+1}p_r^{t+1} \phi^t(1 - \sqrt{\frac{\tilde\phi^t}{\phi^t}}) 
+(\eta_{\max}^{t+1})^2(E_t\sum_r\sum_{x\in A_r^{t+1}}\|x-\hat{c}_r^{t+1}\|^2+\phi^t)
\end{eqnarray*}
\end{proof}
 %%%%%%%%%%%%%%%% Lemma: limiting clusterings
% \begin{lm}
% \label{lm:limit_cluster}
%Assume (A1) holds.
% If we run Algorithm \ref{alg:MBKM} on $X$ with $\eta^t=\frac{a}{t+t_o}, a\ge\frac{1}{2}$,
% $t_o>1$, and choose any initial set of $k$ centroids $C^0$.
%Then for any $\delta>0$, $\exists t$ s.t. $\Delta(C^t,C^*)\le \delta$ with $C^*:=m(A^*)$ for some $A^*\in \{A^*\}_{[k]}$.
%\end{lm}
\begin{proof}[Proof of Lemma \ref{lm:limit_cluster}]
First note that since $\{A^*\}_{[k]}$ includes all stationary clusterings of $1\le k^{\prime}\le k$
number of clusters. At any $t$, $C^t$ must have $k^{\prime}\in [k]$ non-degenerate centroids, so there exists $C^*=m(A^*)$ with $A^*\in\{A^*\}_{k^{\prime}}\in\{A^*\}_{[k]}$ such that
$
\Delta(C^t, C^*)
$ is well defined.
For a contradiction, suppose $\exists \delta>0$ such that
$\forall t$, $\Delta(C^t,C^*)>\delta$, for all $A^*\in\{A^*\}_{[k]}$.
At any $t\ge 1$, let $k^{\prime}$ denote the number of non-degenerate clusters in $C^t$, then
\paragraph{Case 1:}
If $A^{t+1}\in\{A^*\}_{k^{\prime}}$, then $\exists C^{*}=m(A^{t+1})$. Therefore, 
$$
\Delta(C^t, m(A^{t+1})) =\Delta(C^t,C^*)>\delta
$$ by our assumption.
\paragraph{Case 2:}
If $A^{t+1}\notin \{A^*\}_{k^{\prime}}$, then $C^{t+1}=m(A^{t+1})\notin Cl(v^{-1}(A^{t+1}))$. Then by the contrapositive of statement \eqref{statement} in Lemma \ref{lm:closure}, 
$$
\exists \delta^{\prime}>0 
\mbox{~s.t.~} \forall t, \Delta(C^t,C^{t+1})\ge \delta^{\prime}
$$
Combining the two cases, $\Delta(C^t, m(A^{t+1}))> \min\{\delta, \delta^{\prime}\}>0$.
By Lemma \ref{lm:kmeans} and assumption (A1), for any $t$,
$$
E[\phi^{t+1}-\phi^t|F_t]\le 
-\frac{2a\min_{\substack{r\in [k],t;p_r^t(m)>0}}p_r^t(m)}{t+t_o}\phi^t(1-\sqrt{\frac{\tilde\phi^t}{\phi^t}})
+(\frac{a}{t+t_o})^2B
$$
Since
$
\phi^t- \tilde\phi^t 
=\sum_{r\in [k^{\prime}]} \sum_{x\in A^{t+1}_r}\|x-C^t\|^2-\|x-m(A_r^{t+1})\|^2
= \sum_r \|c_r^t-m(A_r^{t+1})\|^2n_r^{t+1}
= \Delta(C^t, m(A^{t+1}))\ge \min\{\delta, \delta^{\prime}\}
 $, 
 we get
 $
1-\sqrt{\frac{\tilde\phi^t}{\phi^t}} \ge 1-\sqrt{1-\frac{\min\{\delta,\delta^{\prime}\}}{\phi^t}}
>0
 $.
Since $\min\{\delta, \delta^{\prime}\}$ is a constant and $\phi^t$ is upper bounded, the previous term is bounded away from zero for all $t$.
For convenience, let's denote by
$$
\min_{\substack{r\in [k],t;p_r^t(m)>0}}p_r^t(m)(1-\sqrt{\frac{\tilde\phi^t}{\phi^t}})
:= \frac{h}{\phi^{t}}>0
$$
Then $\forall t\ge 1$,
$$
E[\phi^{t+1}]-E[\phi^t] \le -\frac{2ah}{t+t_o}+\frac{a^2B}{(t+t_o)^2}
$$ for some positive constants $B, a, h$.
Summing up all inequalities,
$
E[\phi^{t+1}]-E[\phi^0]
\le -2ah\ln \frac{t+t_o+1}{t_o}+\frac{a^2B}{t_o-1}
$.
 Since $t$ is unbounded and $\ln \frac{t+t_o+1}{t_o}$ increases with $t$ while
 $\frac{a^2B}{t_o-1}$ is a constant, 
 $\exists T$ such that for all $t\ge T$,
 $E\phi^t - \phi^0 \le -\phi^0$, which means
 $E[\phi^{t}]\le 0$, for all $t$ large enough. 
 This implies the $k$-means cost of some clusterings is negative, which is impossible.
 So we have a contradiction. 
 \end{proof}
 %%%%% Lemma: stable stationary points
\begin{lm}
\label{lm:stat_stab}
If $\forall C^*\in \{C^*\}_{[k]}$, $C^*$ is a non-boundary stationary point, that is, $C^*:=m(A^*)\in v^{-1}(A^*)$. 
Then $\exists r_{\min}>0$ such that $\forall C^*\in \{C^*\}_{[k]}$, $C^*$ is a $(r_{\min},0)$-stable stationary point.
\end{lm}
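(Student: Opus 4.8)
The plan is to unwind Definition \ref{defn:stable} in the special case $\alpha = 0$ and reduce the claim to a uniform Voronoi-stability statement. When $\alpha = 0$ the requirement $b \le \alpha b' = 0$ forces $b = 0$, so the bound $\max_r \frac{|A_{\pi(r)} \triangle A_r^*|}{n_r^*} \le \frac{b}{5b + 4(1 + \phi(C)/\phi^*)}$ has a vanishing right-hand side, meaning the Voronoi partition induced by $C$ must coincide with $A^*$, i.e.\ $C \in v^{-1}(A^*)$. Hence it suffices to produce a single $r_{\min} > 0$ such that, for every stationary point $C^* \in \{C^*\}_{[k]}$ and every $C$ with $\Delta(C, C^*) \le r_{\min}\phi^*$, the clustering induced by $C$ equals $A^*$.

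First I would fix a stationary point $C^*$ and use assumption (A0): since $C^* = m(A^*) \in v^{-1}(A^*)$ is a non-boundary point, Definition \ref{defn:members} gives the strict inequalities $\|x - c_r^*\| < \|x - c_s^*\|$ for all $x \in A_r^*$ and $s \ne r$. As $X$ is finite and there are finitely many clusters, these finitely many strict gaps have a positive minimum; I set $2\mu^* > 0$ equal to this minimum, so $\|x - c_s^*\| - \|x - c_r^*\| \ge 2\mu^*$ for every $x \in A_r^*$ and every $s \ne r$.

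Next I would run a margin-preservation argument in the spirit of the proof of Lemma \ref{lm:closure}, but with the mean $C^*$ as the reference point and $C$ as the perturbation; note this is the \emph{reverse} direction to Lemma \ref{lm:closure} (which controls the mean from a nearby class member), so I would reprove it rather than invoke it. Let $\pi^*$ be the permutation achieving $\Delta(C, C^*)$. For $x \in A_r^*$ and $s \ne r$, two triangle inequalities give
$$
\|x - c_{\pi^*(s)}\| - \|x - c_{\pi^*(r)}\| \ge \big(\|x - c_s^*\| - \|x - c_r^*\|\big) - \|c_{\pi^*(s)} - c_s^*\| - \|c_{\pi^*(r)} - c_r^*\| > 2\mu^* - 2\max_r \|c_{\pi^*(r)} - c_r^*\|,
$$
so $\max_r \|c_{\pi^*(r)} - c_r^*\| < \mu^*$ forces $x$ to be strictly closest to $c_{\pi^*(r)}$, i.e.\ $C \in v^{-1}(A^*)$. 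To convert this into a bound on $\Delta$, I use that each $n_r^* \ge 1$, whence $\Delta(C, C^*) = \sum_r n_r^* \|c_{\pi^*(r)} - c_r^*\|^2 \ge (\min_r n_r^*)\max_r \|c_{\pi^*(r)} - c_r^*\|^2$; thus $\Delta(C, C^*) < (\mu^*)^2 \min_r n_r^* =: \delta_{C^*}$ already guarantees $C \in v^{-1}(A^*)$.

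Finally, I would take a minimum over the stationary set. Since $X$ is finite, $\{A\}_{[k]}$, and therefore $\{A^*\}_{[k]}$ together with its one-to-one image $\{C^*\}_{[k]}$, is finite, so $r_{\min} := \tfrac12\min_{C^* \in \{C^*\}_{[k]}} \frac{\delta_{C^*}}{\phi(C^*)}$ is a minimum of finitely many strictly positive numbers and hence positive. For this $r_{\min}$, any $C$ with $\Delta(C, C^*) \le r_{\min}\phi^* \le \tfrac12\delta_{C^*} < \delta_{C^*}$ induces exactly $A^*$, giving zero symmetric difference and thus $(r_{\min},0)$-stability uniformly over all stationary points. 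The main obstacle is getting the direction of the perturbation argument right and extracting a single threshold valid across \emph{all} stationary points simultaneously, which is precisely where the finiteness of $\{C^*\}_{[k]}$ and the normalization by $\phi^*$ are essential; the remaining margin extraction and triangle-inequality estimates are routine.
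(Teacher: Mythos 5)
Your proposal is correct and takes essentially the same route as the paper's own proof: extract a positive margin from the strict Voronoi inequalities guaranteed by (A0), use triangle inequalities to show that any $C$ whose matched centroids lie within that margin of $C^*$ still induces $A^*$ (the same perturbation argument the paper runs, analogous to Lemma \ref{lm:closure}), bound the maximum centroid displacement by $\Delta(C,C^*)$, and take a minimum over the finite set $\{C^*\}_{[k]}$. Your explicit unwinding of Definition \ref{defn:stable} at $\alpha=0$, the $\min_r n_r^*$ sharpening, and the factor $\tfrac{1}{2}$ handling of the non-strict inequality are slightly more careful than the paper's write-up, but they do not constitute a different approach.
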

\begin{proof}
Fix any $k$ in the range of $[k]$ (we abuse the notation with the same $k$ here).
For any $C$ such that $\Delta(C, C^*)$ exists (i.e., $|C|=k^{\prime}\ge k=|C^*|$), we first show $\exists r^*>0$, such that the following statement holds:
$$
\Delta(C, C^*) < r^*\phi^* \implies C \in v^{-1}(A^*)
$$
Thus, we proceed by argument analogous to that of Lemma \ref{lm:closure}.
By Definition \ref{defn:members}, 
there is a permutation $\pi_{o}$ of $[k]$ such that
$\forall x\in A_r, \forall r\in [k]$ and $\forall s\ne r$,
$$
\|x - c^*_{\pi_{o}(r)}\|<\|x-c^*_{\pi_{o}(s)}\|
$$
We choose $r^* > 0$ so that $\forall x\in A_r, \forall r\in [k], \forall s\ne r$,
$$
\|x - c^{*}_{\pi_{o}(r)}\|\le\|x-c^{*}_{\pi_{o}(s)}\|-2\sqrt{r^*\phi^*}, ~\forall r\in [k], s\ne r
$$
with equality holds for at least one triple of $(x, r, s)$.
Let $\pi^*$ be a permutation satisfying
$$
\pi^* = \arg\min_{\pi}\sum_{r\in [k]}n_r^*\|c_{\pi(r)}-c_r^*\|^2
$$
Let $\pi^{\prime}:=\pi_{*}\circ\pi_{o}$. We have $\forall (x,r,s)$ triples,
\begin{eqnarray*}
\|x-c_{\pi^{\prime}(s)}\| - \|x-c_{\pi^{\prime}(r)}\|
\ge
\|x-c^{*}_{\pi_{o}(s)}\|-\|c^{*}_{\pi_{o}(s)}-c_{\pi^{\prime}(s)}\|\\
-(\|x-c^{*}_{\pi_{o}(r)}\|+\|c^{*}_{\pi_{o}(r)}-c_{\pi^{\prime}(r)}\|)
>
\|x-c^*_{\pi_{o}(s)}\|-\|x-c^*_{\pi_{o}(r)}\|-2\sqrt{r^*\phi^*} \ge 0
\end{eqnarray*}
where the second inequality is by the fact that
$$
\max_{r}\|c_{\pi^*(r)}-c^*_{r}\|^2\le \Delta(C,C^*)<r^*\phi^*
\implies
\max_{r}\|c_{\pi^*(r)}-c^*_{r}\| <\sqrt{r^*\phi^*}
$$
Since $\pi^{\prime}$ is the composition of two permutations of $[k]$, it is also a permutation of $[k]$, and 
$\forall r, s\ne r$, 
$\|x-c_{\pi^{\prime}(r)}\|<\|x-c_{\pi^{\prime}(s)}\|$, so $C\in v^{-1}(A^*)$ by Definition \ref{defn:members}.
Since by our definition, $r^*$ is unique for each $C^*$. 
Since $\{C^*\}_{[k]}$ is finite, taking the minimum over all such $r^*$, i.e., $r_{\min}:=\min_{C^*\in \{C^*\}_{[k]}}r^*$ completes the proof.
\end{proof}
\paragraph{Remark:}
$r^*\phi^*$ is intuitively the ``radius'' or basin of attraction of a stationary point $C^*$. $r_{\min}$ is smallest such radius and is only determined by the geometry of the dataset.
%%%%%%%%%%%%%%%%%%%%%%%%Proof Setup of Thm 1
%%%%%%%%%%%%%%%%%%%%%%
\paragraph{Proof setup of Theorem \ref{thm:km}}
The goal of the proof is to show that first, (with high probability) the algorithm converges to some stationary clustering, $A^{**}\in \{A^*\}_{[k]}$. We call this event $G$; formally,
$$
G := \{\exists T\ge 1, \exists A^{**}\in \{A^*\}_{[k]}, \mbox{~s.t.~} A^t = A^{**}, \forall t\ge T\}
$$
Second, conditioning on $G$, we want to establish that the expected convergence rate of the algorithm to any stationary clustering $A^{**}$, which we formulate as
$$
E[\phi^t-\phi(A^{**})|G_o(A^{**})] 
$$ 
is of order $O(\frac{1}{t})$, where
$
G_o(A^{**})\subset G
$ is defined in the subsequent proof.

To prove the two claims, we consider an event $G_o\subset G\subset \Omega$, which we partition into disjoint sets $F(T^{-},A^{**})\cap H(T^{+},A^{**})$,
where by ``disjoint'' we mean disjoint w.r.t. different $(T,A^{**})$, for $T\ge 1$ and $A^{**}\in \{A^*\}_{[k]}$. 
Formally,
\begin{eqnarray*}
F(T^{-},A^{**}):=\{\forall t<T, \Delta(C^t, C^{**})>\frac{1}{2}r_{\min}\phi^{**}, \forall A^{**}\in \{A^*\}^t_{[k]}\}\\
\cap \{\Delta(C^T, C^{**})\le \frac{1}{2}r_{\min}\phi^{**}\}
\end{eqnarray*}
where we denote by $C^{**}:=m(A^{**})$, and we use $\{A^*\}^t_{[k]}$ to denote the subset of $\{A^*\}_{[k]}$ such that
$\Delta(C^t, C^{**})$ exists, 
and
$$
H(T^{+}, A^{**}):=\{\Delta(C^t, C^{**})\le r_{\min}\phi^{**}, \forall t\ge T\}
$$
Under the event $F(T^{-},A^{**})$, time $T$ is the first time the algorithm ``hits'' (hitting time) a stationary clustering, and this clustering is $A^{**}$. Note this event only depends on information of the sample space prior to and including time $T$, and we use $T^{-}$ to emphasize this.

Similarly, the event $H(T^{+}, A^{**})$ only depends on information post and including $T$.
Thus, for a fixed pair $(T, A^{**})$, $F(T^{-},A^{**})\cap H(T^{+}, A^{**})\ne \emptyset$; for $t\ne T$, the two events are complementary to each other, and for $t=T$, they intersect since $\{\Delta^T\le \frac{1}{2}r_{\min}\phi^{**}\}$ implies $\{\Delta^T\le r_{\min}\phi^{**}\}$.
Moreover,
fixing $T$, $F(T^{-},A^{**})$ are disjoint for different $A^{**}$;
fixing $A^{**}$, $F(T^{-},A^{**})$ are disjoint for different $T$.
Thus, the events $F(T^{-},A^{**})\cap H(T^{+}, A^{**})$ are disjoint for different pairs of $(T, A^{**})$.
%%%%%%%%%%%%%%%%%% Proof of Thm 1
%%%%%%%%%%%%%%%%%%%
\begin{proof}[Proof of Theorem \ref{thm:km}]
%Note that the event $\{\Delta(C^T, C^{**})\le \frac{1}{2}r_{\min}\}$ implies
%$\{A^T=A^{**}, A^{**}\in \{A^*\}_{[k]}\}$, so in effect
%\begin{eqnarray*}
%F(T^{-},A^{**})=\{\forall t<T, \Delta(C^t, C^{**})>\frac{1}{2}r_{\min}, \forall A^{**}\in \{A^*\}^t_{[k]}\}\\
%\cap \{\Delta(C^T, C^{**})\le \frac{1}{2}r_{\min}\}
%\cap \{A^T=A^{**}, A^{**}\in \{A^*\}_{[k]}\}
%\end{eqnarray*}
Now we define $G_o$ to be
$$
G_o := \cup_{T\ge 1}\cup_{A^{**}\in \{A^*\}_{[k]}} F(T^{-},A^{**})\cap H(T^{+}, A^{**})
$$
By Lemma \ref{lm:limit_cluster}
\begin{eqnarray}\label{eqn:sum_one}
Pr\{\cup_{T\ge 1}\cup_{A^{**}\in \{A^*\}_{[k]}}F(T^{-},A^{**})\} = 1
\end{eqnarray}
Conditioning on any $F(T^{-},A^{**})$, provided 
$$
c^{\prime}> \frac{1}{2\min_{\substack{r\in [k],t;\\p_r^t(m)>0}}p_r^t(m)(1-\sqrt{1-\frac{r_{\min}\phi^{opt}}{\phi^t}})}
\ge \frac{1}{2\min_{\substack{r\in [k],t;\\p_r^t(m)>0}}p_r^t(m)(1-\sqrt{1-\frac{r_{\min}\phi^{**}}{\phi^t}})}
$$
We apply Lemma \ref{lm:thm_part1} to get $\forall t \le T$,
$$
E\{\phi^t-\phi(A^{**})|F(T^{-},A^{**})\} = O(\frac{1}{t})
$$
Now let's consider the case $t\ge T$.
Since $A^{**}$ is $(r_{\min},0)$-stable, we can apply Theorem \ref{thm:mbkm_local};
specifically, for each parameters in the statement of Theorem \ref{thm:mbkm_local},
$\alpha=0$, 
$p_r^t(m) = 1- (\frac{n-n_r^{**}}{n})^m$,
$
a^t= \frac{\max_r p_r^t(m)}{\min_r p_r^t(m)}
$,
and
$
a_{\max}=\frac{c^{\prime}}{\min_{r,t}p_r^t(m)}
$.
Thus,  
$$
\beta := 2c^{\prime}(1-\max_t a^t\sqrt{\alpha})
=2c^{\prime} > 1
$$ is satisfied by setting 
$
c^{\prime}\ge \frac{1}{2}
$. 
Fix any $0<\delta<\frac{1}{e}$, the condition on $t_o$ in Theorem \ref{thm:mbkm_local} is satisfied by setting
$$
t_o\ge \max\left\{
\substack{
(\frac{16(c^{\prime})^2B}{[1-(1-p^{*}_{\min})^m]^2\Delta(C^T,C^{**})})^{\frac{1}{\beta-1}},\\
[\frac{48(c^{\prime})^2B^2}{(\beta-1)[1-(1-p^{*}_{\min})^m]^2\Delta(C^T,C^{**})^2}\ln\frac{1}{\delta}]^{\frac{2}{\beta-1}},
(\ln\frac{1}{\delta})^{\frac{2}{\beta-1}}
}\right\}
$$ 
Since $T$ is the hitting time of the event $\{\Delta(C^T, C^{**})\le \frac{1}{2}r_{\min}\phi^{**}\}$,
$\Delta(C^T, C^{**})\approx \frac{1}{2}r_{\min}\phi^{**}$, and we treat it as a constant.
So the conditions by our requirements on $c^{\prime}$ and $t_o$.
Let $\beta^{\prime}=2c^{\prime}$, $a_{\max}:=\frac{c^{\prime}}{\rho(m)}$.
Thus we can apply Theorem \ref{thm:mbkm_local} to get 
\begin{eqnarray}\label{eqn:conditional}
Pr\{H(T^{+},A^{**})|F(T^{-},A^{**})\}\ge 1-\delta
\end{eqnarray}
and $\forall t\ge T$,
\begin{eqnarray}
E\{\phi^t - \phi(A^{**}) |H(T^{+},A^{**})\}
\le E\{\Delta(C^t,C^{**})|H(T^{+},A^{**})\} \nonumber\\
\le
(\frac{t_o+T+1}{t_o+t})^{\beta^{\prime}}\Delta(C^T,C^{**}) + \frac{a_{\max}^2B}{\beta^{\prime}-1}
(\frac{t_0+T+2}{t_0+T+1})^{\beta^{\prime}+1}\frac{1}{t_0+t+1} 
 \nonumber\\
\le
(\frac{t_o+T+1}{t_o+t})^{\beta^{\prime}}r_{\min}\phi(A^{**}) + \frac{a_{\max}^2B}{\beta^{\prime}-1}
(\frac{t_0+T+2}{t_0+T+1})^{\beta^{\prime}+1}\frac{1}{t_0+t+1}
= O(\frac{1}{t})\label{eqn:more}
\end{eqnarray} 
where
the first inequality is by Lemma \ref{lm:kmdist_cdist}. 
Now observe
\begin{eqnarray*}
Pr\{G\}\ge Pr\{G_o\}
= Pr\{\cup_{T\ge 1}\cup_{A^{**}\in \{A^*\}_{[k]}} F(T^{-},A^{**})\cap H(T^{+}, A^{**})\}\\
= \sum_{T\ge 1,A^{**}\in \{A^*\}_{[k]}} Pr\{F(T^{-},A^{**})\cap H(T^{+}, A^{**})\}
\end{eqnarray*}
where the second equality holds because the events $F(T^{-},A^{**})\cap H(T^{+}, A^{**})$ are disjoint with respect to
different $(T, A^{**})$.
Since
\begin{eqnarray*}
\sum_{T\ge 1,A^{**}\in \{A^*\}_{[k]}} Pr\{F(T^{-},A^{**})\cap H(T^{+},A^{**})\}\\
=\sum_{T\ge 1,A^{**}\in \{A^*\}_{[k]}} Pr\{H(T^{+},A^{**})|F(T^{-},A^{**})\}Pr\{F(T^{-},A^{**})\}\\
 \ge (1-\delta)\sum_{T,A^{**}}Pr\{F(T,A^{**})\}
 =(1-\delta)Pr\{\cup_{T,A^{**}}F(T,A^{**})\}
 =(1-\delta)
\end{eqnarray*}
where the inequality is by Inequality \eqref{eqn:conditional}, and the last equality is due to Equality \eqref{eqn:sum_one}.
Therefore,
$
Pr\{G\}\ge 1-\delta
$,
which completes the proof of the first statement.

Let $G_o(A^{**}):=\cup_{T\ge 1} F(T^{-},A^{**})\cap H(T^{+}, A^{**})$, i.e., $G_o(A^{**})$
denotes the event where the algorithm converges to stationary clustering $A^{**}$, 
and
\begin{eqnarray*}
Pr\{\cup_{A^{**}\in\{A^*\}_{[k]}}G_{o}(A^{**})\}\\
= Pr\{\cup_{T\ge 1,A^{**}\in \{A^*\}_{[k]}}F(T^{-},A^{**})\cap H(T^{+}, A^{**})\}
\ge 1-\delta
\end{eqnarray*} which proves the second statement.
Finally, combining inequalities \eqref{eqn:less} and \eqref{eqn:more}, we have $\forall T\ge 1$ and $\forall t\ge 1$,
$$
E\{\phi^t - \phi(A^{**}) |F(T^{-},A^{**})\cap H(T^{+}, A^{**})\} = O(\frac{1}{t})
$$
Since the quantity $\phi^t - \phi(A^{**})$ is independent of information about $T$, we reach the conclusion
$$
E\{\phi^t - \phi(A^{**}) |G_{o}(A^{**})\} = O(\frac{1}{t})
$$
\end{proof}
%%%%%%%%%%%%%%%%%%%%% Lemma: part 1 of Thm 1
\begin{lm}
\label{lm:thm_part1}
Suppose the assumptions and settings in Theorem \ref{thm:km} hold, 
conditioning on any $F(T^{-},A^{**})$, we have $\forall 1\le t < T$,
$$
E\{\phi^t-\phi(A^{**})|F(T^{-},A^{**})\} = O(\frac{1}{t})
$$
\end{lm}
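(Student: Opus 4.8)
The plan is to show that, conditioned on $F(T^{-},A^{**})$, the iterates over the whole window $1\le t<T$ lie in the ``global convergence phase'' of the proof outline: they are uniformly bounded away from every stationary point, so the one-step drop of Lemma \ref{lm:kmeans} is active at each such step. I would then turn this per-step drop into a contraction recursion for the conditional expectation of $\phi^t$ and read off the $O(\frac{1}{t})$ rate; the factor $\phi^{**}\ge 0$ then makes $\phi^t-\phi^{**}$ obey the same bound.

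First I would lower-bound the one-step decrease. On $F(T^{-},A^{**})$ we have $\Delta(C^t,C^{*})>\tfrac12 r_{\min}\phi^{*}$ for every stationary point $C^*=m(A^*)$ for which $\Delta$ is defined, at all $t<T$. Mirroring the Case~1 / Case~2 dichotomy in the proof of Lemma \ref{lm:limit_cluster}: if $A^{t+1}$ is stationary then $m(A^{t+1})$ is a stationary point and $\phi^t-\tilde\phi^t=\Delta(C^t,m(A^{t+1}))>\tfrac12 r_{\min}\phi^{opt}$; otherwise $m(A^{t+1})\notin Cl(v^{-1}(A^{t+1}))$ and the contrapositive of Lemma \ref{lm:closure} gives $\Delta(C^t,m(A^{t+1}))\ge\delta'$ for a fixed $\delta'>0$. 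Hence $\phi^t-\tilde\phi^t\ge h_0:=\min\{\tfrac12 r_{\min}\phi^{opt},\delta'\}>0$ throughout the window.

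Next I would set up the contraction. Using the elementary bound $1-\sqrt{x}\ge\tfrac{1-x}{2}$, hence $1-\sqrt{\tilde\phi^t/\phi^t}\ge\tfrac{\phi^t-\tilde\phi^t}{2\phi^t}$, together with the previous step and the almost-sure bound $\phi^t\le\Phi_{\max}$ (the convex update keeps every centroid in $\mathrm{conv}(X)$ whenever $\eta^t<1$, so $\phi^t$ is bounded), Lemma \ref{lm:kmeans} and (A1) yield, for $t<T$,
\[
E[\phi^{t+1}\mid F_t]\;\le\;\Bigl(1-\tfrac{\beta}{t_o+t+1}\Bigr)\phi^t+\tfrac{(c')^2B}{(t_o+t+1)^2},
\qquad
\beta:=2c'p^{+}_{\min}(m)\bigl(1-\sqrt{1-r_{\min}\phi^{opt}/\Phi_{\max}}\bigr).
\]
The hypothesis on $c'$ in Theorem \ref{thm:km} is exactly what forces $\beta>1$ (the stated ``$\forall t$'' condition binds at the largest value $\phi^t=\Phi_{\max}$). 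Iterating the standard recursion $a_{t+1}\le(1-\tfrac{\beta}{t_o+t+1})a_t+\tfrac{C}{(t_o+t+1)^2}$ with $\beta>1$ gives $a_t=O(\frac{1}{t})$, since the transient $\phi^0(\tfrac{t_o}{t_o+t})^{\beta}$ decays faster than $\frac{1}{t}$; because $\phi^{**}\ge 0$ this already bounds $\phi^t-\phi^{**}$.

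The main obstacle is that $F(T^{-},A^{**})$ references the future hitting time $T$, so I cannot condition the supermartingale recursion on it directly. I would instead run the recursion on the $F_t$-measurable event $F^{glob}_t:=\{\Delta(C^s,C^{*})>\tfrac12 r_{\min}\phi^{*}\ \forall s\le t,\ \forall\text{ stationary }C^*\}\supseteq F(T^{-},A^{**})$: since $F^{glob}_{t+1}\subseteq F^{glob}_t$ and the drop is active on $F^{glob}_t$, the quantities $u_t:=E[\phi^t\mathbf 1_{F^{glob}_t}]$ satisfy the same recursion and hence $u_t=O(\frac{1}{t})$; dividing by $Pr(F(T^{-},A^{**}))$, a fixed positive constant once $T$ and $A^{**}$ are fixed, transfers the bound to $E[\phi^t\mid F(T^{-},A^{**})]$ via $\phi^t\mathbf 1_F\le\phi^t\mathbf 1_{F^{glob}_t}$. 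Verifying that this transfer is legitimate, i.e. that conditioning on the future-dependent hitting-time event does not destroy the drop powering the recursion, is the delicate point; the finiteness of the global phase guaranteed by Lemma \ref{lm:limit_cluster} ensures the window $1\le t<T$ is genuinely of bounded length, so the resulting $O(\frac{1}{t})$ bound (with a $T$-dependent constant, consistent with the per-pair estimates combined in the proof of Theorem \ref{thm:km}) is non-vacuous.
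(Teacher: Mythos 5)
Your proposal is correct and follows the same skeleton as the paper's proof: on $F(T^{-},A^{**})$ the iterates are in the global phase, the Case~1/Case~2 dichotomy (stationary $A^{t+1}$ vs.\ the contrapositive of Lemma \ref{lm:closure}) lower-bounds $\phi^t-\tilde\phi^t$, Lemma \ref{lm:kmeans} plus the hypothesis on $c^{\prime}$ turns this into a contraction with exponent $\beta>1$, and iterating the recursion (the paper invokes Lemma \ref{lm:tech_2}; you unroll it directly) gives $O(\frac{1}{t})$. Where you genuinely depart from the paper is the conditioning step, and your version is the more careful one. The paper simply writes the drop inequality of Lemma \ref{lm:kmeans} ``conditioned on $F(T^{-},A^{**})$,'' but that lemma conditions on the filtration $F_t$, while $F(T^{-},A^{**})$ depends on the trajectory up to the future time $T$; conditioning on such an event biases the law of the minibatch at step $t+1$ (toward samples that keep the iterate away from stationary points), so the displayed recursion does not literally follow from Lemma \ref{lm:kmeans}. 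Your detour through the $F_t$-measurable, nested events $F^{glob}_t\supseteq F(T^{-},A^{**})$, running the recursion on $u_t:=E[\phi^t\mathbf{1}_{F^{glob}_t}]$ and then transferring via $\phi^t\mathbf{1}_{F(T^{-},A^{**})}\le\phi^t\mathbf{1}_{F^{glob}_t}$, is a legitimate repair of exactly this gap. The trade-off is that your constant carries a factor $1/Pr(F(T^{-},A^{**}))$, which is harmless for the lemma as stated (the pair $(T,A^{**})$ is fixed), but is something to keep an eye on when these per-pair bounds are aggregated over all $T$ and $A^{**}$ in the proof of Theorem \ref{thm:km}; the paper's formal bound, taken at face value, has no such factor, but only because it elides the measurability issue you addressed.
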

\begin{proof}
First observe that conditioning on the event $F(T^{-},A^{**})$,
$
\Delta(C^t, C^{**})>\frac{1}{2}r_{\min}\phi^{**}
$,
$\forall t<T$.
Now we are in a setup similar to that in the proof Lemma \ref{lm:limit_cluster}, and the argument therein will lead us to the conclusion that
$$
\phi^t-\tilde\phi^t> \min\{\frac{1}{2}r_{\min}, r_{\min}\}\phi^{**}=\frac{1}{2}r_{\min}\phi^{**}
$$
By Lemma \ref{lm:kmeans}, we have conditioning on $F(T^{-},A^{**})$,
\begin{eqnarray}
\label{eqn:kmeans}
E_t[\phi^t|F(T^{-},A^{**})]
\le \phi^{t-1}\{1-\frac{2c^{\prime}\min_{r\in [k],t; p_r^t(m)>0}p_r^t(m)}{t_o+t-1}(1-\sqrt{\frac{\tilde\phi^t}{\phi^t}})\}\nonumber \\
+ \frac{(c^{\prime})^2B}{(t_o+t-1)^2}
\end{eqnarray} 
where $B$ is the bound in (A1).
Since by choosing
$$
c^{\prime}> \frac{1}{2\min_{\substack{r\in [k],t;\\p_r^t(m)>0}}p_r^t(m)(1-\sqrt{1-\frac{r_{\min}\phi^{opt}}{\phi^t}})}
\ge \frac{1}{2\min_{\substack{r\in [k],t;\\p_r^t(m)>0}}p_r^t(m)(1-\sqrt{1-\frac{r_{\min}\phi^{**}}{\phi^t}})}
$$
we have 
$$
\beta = 2c^{\prime}\min_{r\in [k],t; p_r^t(m)>0}p_r^t(m)(1-\sqrt{\frac{\tilde\phi^t}{\phi^t}})>1
$$
Then subtracting $\phi(A^{**})$ from both sides of Inequality \eqref{eqn:kmeans}
and applying Lemma \ref{lm:tech_2} with $a:=\beta>1$, we conclude that $\forall 1\le t<T$,
\begin{eqnarray}
E[\phi^t - \phi(A^{**})|F(T^{-},A^{**})]
\le \frac{t_o+1}{t_o+t}[\phi^0-\phi(A^{**})]\nonumber\\
+\frac{(c^{\prime})^2B}{\beta-1}(\frac{t_o+2}{t_o+1})^{\beta+1}\frac{1}{t_o+t+1}
=O(\frac{1}{t})\label{eqn:less}
\end{eqnarray}
\end{proof}
\paragraph{The overall exact convergence bound}
There are two convergence bound corresponding to the two phases of analysis, global and local convergence, in Theorem \ref{thm:km}.
For the first global convergence phase from $t=1$ to $T$ for some random $T$, the bound is 
$$
(\frac{t_o+1}{t_o+t})^{\beta}[\phi^0-\phi(A^{**})]
+\frac{(c^{\prime})^2B}{\beta-1}(\frac{t_o+2}{t_o+1})^{\beta+1}\frac{1}{t_o+t+1}
\mbox{\quad (by Lemma \ref{lm:thm_part1})}
$$
where $\beta=2c^{\prime}p_{\min}^{+}(m)(1-\sqrt{\frac{\tilde\phi^t}{\phi^t}})$.
For the second, local convergence phase, $\forall t\ge T$,
\begin{eqnarray*}
%(\frac{t_o+T+1}{t_o+t})^{\beta^{\prime}}\Delta(C^T,C^{**}) + \frac{a_{\max}^2B}{\beta^{\prime}-1}
%(\frac{t_0+T+2}{t_0+T+1})^{\beta^{\prime}+1}\frac{1}{t_0+t+1} 
%\mbox{~~(see Theorem \ref{thm:mbkm_local})} \\
%\le
(\frac{t_o+T+1}{t_o+t})^{\beta^{\prime}}r_{\min}\phi(A^{**}) + \frac{a_{\max}^2B}{\beta^{\prime}-1}
(\frac{t_0+T+2}{t_0+T+1})^{\beta^{\prime}+1}\frac{1}{t_0+t+1}
\mbox{~~(by Theorem \ref{thm:mbkm_local})}
\end{eqnarray*}
where $\beta^{\prime}=2c^{\prime}$, $a_{\max}:=\frac{c^{\prime}}{\rho(m)}$.
So the overall bound is the maximum of the two bounds.
When $c^{\prime}$ is sufficiently large, the first term of both bounds are likely dominated by the second term, so the overall bound is likely dominated by 
$
\frac{a_{\max}^2B}{\beta-1}
(\frac{t_0+2}{t_0+1})^{\beta+1}\frac{1}{t_0+t+1}
$. 
%%%%%%%%%%%%%%%%%%%% Proofs: local
\subsection{Proofs leading to Theorem \ref{thm:mbkm_local}}
In the analysis of this section, we use $E_t[\cdot]$ as a shorthand notation for $E[\cdot|\Omega_t]$, where $\Omega_t$ is as defined in the main paper.
%%%%%%%%%%%%%%%%%% Proof of Thm 3
\begin{proof}[Proof of Theorem \ref{thm:mbkm_local}]
By Proposition \ref{prop:high_prob}, for any $t>1$,
$
Pr(\Omega_t)\ge 1-Pr(\cup_{t>i}\Omega_{t-1}\setminus\Omega_t)
\ge 1-\delta
$. This proves the first statement.
By Lemma \ref{lm:iter_wise},
\begin{eqnarray*}
E_t[\Delta^t|F_{t-1}]
\le 
\Delta^{t-1}(1-\frac{\beta}{t_o+t})
+[\frac{a_{\max}}{t_o+t}]^2B
+\frac{2a_{\max}}{t_o+t}E_t\{\sum_{r}n_r^*\langle c_r^{t-1}-c_r^*,\xi_r^t\rangle|F_{t-1}\}\\
=
\Delta^{t-1}(1-\frac{\beta}{t_o+t})
+[\frac{a_{\max}}{t_o+t}]^2B
\end{eqnarray*}
where the last term vanishes since $E_t\{\xi_r^t|F_{t-1}\}=0$, $\forall r\in [k]$.
Therefore,
$$
E_t[\Delta^t]\le E_{t-1}[\Delta^{t-1}](1-\frac{\beta}{t_o+t})+\frac{a_{\max}^2B}{(t+t_o)^2}
\le
\dots
\le
E_i[\Delta^i]\Pi_{\tau=i+1}^t(1-\frac{\beta}{t_o+\tau})+\sum_{\tau=i+1}^t\frac{a_{\max}^2B}{(\tau+t_o)^2}
$$
By Lemma \ref{lm:tech_2}, for $\beta>1$,
$$
E_t[\Delta^t]\le
(\frac{t_o+i+1}{t_o+t})^{\beta}\Delta^i + \frac{a_{\max}^2B}{\beta-1}
(\frac{t_0+i+2}{t_0+i+1})^{\beta+1}\frac{1}{t_0+t+1}
$$
\end{proof}
%%%%%%%%%%%%%%%%%% Proof of Lemma 7: auxiliary lemma of Thm 3
\begin{lm}
\label{lm:iter_wise}
Suppose the conditions in Theorem \ref{thm:mbkm_local} hold.
%Let $C^*$ be a $(b_0,\alpha)$-stable stationary point, and let
%$\Delta^{i-1}=b\phi^{*}$ for some $b\le b_0$.
If $\frac{c}{t_o+i}\le\eta^i p_r^i(m) \le \frac{ca}{t_o+i}, \forall r\in [k]$, with
$a\le \frac{1}{\sqrt{\alpha}}$, then
\begin{eqnarray*}
\Delta^i\le \Delta^{i-1}(1-\frac{\beta}{t_o+i})
+[\frac{a_{\max}}{t_o+i}]^2\sum_r n_r^*\|\hat{c}_r^i-c_r^* \|^2
+\frac{2a_{\max}}{t_o+i}\sum_{r}n_r^*\langle c_r^{i-1}-c_r^*,\xi_r^i\rangle
\end{eqnarray*}
where $\xi_r^i := \hat{c}_r^i-E[\hat{c}_r^i|F_{i-1}]$.
\end{lm}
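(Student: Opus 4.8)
The plan is to reduce the vector inequality to a per-cluster computation. Fix the permutation $\pi$ that attains $\Delta^{i-1}=\Delta(C^{i-1},C^*)$ and relabel the clusters of $C^{i-1}$ so that $\pi$ is the identity; then $\Delta^{i-1}=\sum_r n_r^*\|c_r^{i-1}-c_r^*\|^2$, and because $\Delta^i$ is a minimum over permutations, $\Delta^i\le\sum_r n_r^*\|c_r^i-c_r^*\|^2$. It therefore suffices to control each summand. Writing $d_r:=c_r^{i-1}-c_r^*$ and using the uniform update $c_r^i=c_r^{i-1}+\eta^i(\hat c_r^i-c_r^{i-1})$ (recall the convention from Appendix~B that an unsampled cluster has $\hat c_r^i=c_r^{i-1}$), I would expand
\[
\|c_r^i-c_r^*\|^2=\|d_r\|^2+2\eta^i\langle d_r,\hat c_r^i-c_r^{i-1}\rangle+(\eta^i)^2\|\hat c_r^i-c_r^{i-1}\|^2 .
\]
The crucial step is the conditional-mean decomposition $\hat c_r^i-c_r^{i-1}=\xi_r^i+p_r^i(m(A_r^i)-c_r^{i-1})$, where $E[\hat c_r^i\mid F_{i-1}]=(1-p_r^i)c_r^{i-1}+p_r^i m(A_r^i)$ is exactly the identity computed inside the proof of Lemma~\ref{lm:kmeans}, and $m(A_r^i)$ is the one-step \emph{batch} $k$-means target from $C^{i-1}$.

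Substituting and using $\langle d_r,m(A_r^i)-c_r^{i-1}\rangle=\langle d_r,m(A_r^i)-c_r^*\rangle-\|d_r\|^2$, the deterministic part of the cross term collapses the $\|d_r\|^2$ coefficient to $(1-2\eta^i p_r^i)$, leaving a per-cluster expression $(1-2\eta^i p_r^i)\|d_r\|^2+2\eta^i p_r^i\langle d_r,m(A_r^i)-c_r^*\rangle$ plus the noise and second-order terms. Summing the first piece against $n_r^*$ and using $\eta^i p_r^i\ge\frac{c}{t_o+i}$ gives $(1-\frac{2c}{t_o+i})\Delta^{i-1}$. For the second piece I would apply Cauchy--Schwarz per cluster, then the weighted Cauchy--Schwarz $\sum_r n_r^*\|d_r\|\,\|m(A_r^i)-c_r^*\|\le\sqrt{\Delta^{i-1}}\,\sqrt{\sum_r n_r^*\|m(A_r^i)-c_r^*\|^2}$, and invoke the one-step batch contraction of Lemma~\ref{lm:stable_stat} (legitimate because $C^*$ is $(b_0,\alpha)$-stable and $\Delta^{i-1}\le b_0\phi^*$ inside the basin) to get $\sum_r n_r^*\|m(A_r^i)-c_r^*\|^2\le\alpha\Delta^{i-1}$; combined with $\eta^i p_r^i\le\frac{ca}{t_o+i}$ this bounds the cross sum by $\frac{2ca\sqrt\alpha}{t_o+i}\Delta^{i-1}$. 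Hence the deterministic factor is $1-\frac{2c(1-a\sqrt\alpha)}{t_o+i}$, and since the hypothesis identifies $c=c'\min_r p_r^i$, $a=a^i$ with $a\le 1/\sqrt\alpha$, the definition $\beta=2c'\min_{r,t}p_r^t(m)(1-\max_t a^t\sqrt\alpha)$ gives $2c(1-a\sqrt\alpha)\ge\beta$, so this factor is at most $1-\frac{\beta}{t_o+i}$.

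What remains are the stochastic terms $\sum_r n_r^*\,2\eta^i\langle d_r,\xi_r^i\rangle$ and $\sum_r n_r^*(\eta^i)^2\|\hat c_r^i-c_r^{i-1}\|^2$. Since $a_{\max}=c'/\min_{r,t}p_r^t(m)\ge c'$ we have $\eta^i\le\frac{a_{\max}}{t_o+i}$, which lets me package them as $\frac{2a_{\max}}{t_o+i}\sum_r n_r^*\langle c_r^{i-1}-c_r^*,\xi_r^i\rangle$ and $[\frac{a_{\max}}{t_o+i}]^2\sum_r n_r^*\|\hat c_r^i-c_r^*\|^2$ (re-expressing $\|\hat c_r^i-c_r^{i-1}\|^2$ through $\|\hat c_r^i-c_r^*\|^2$, which spawns only $O((t_o+i)^{-2})$ corrections). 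This is precisely the form Theorem~\ref{thm:mbkm_local} consumes: the first matches term~(2) of (A1) and vanishes under $E[\cdot\mid F_{i-1}]$, the second matches term~(3) of (A1) and is bounded by $[\frac{a_{\max}}{t_o+i}]^2B$. I expect the main obstacle to be the \emph{permutation bookkeeping}: Lemma~\ref{lm:stable_stat} is phrased through the $\Delta$-minimizing permutation of the batch iterate, whereas I fixed the $\Delta^{i-1}$-minimizer, and $\Delta^i$ a priori uses a third permutation. Justifying that one relabelling serves all three simultaneously relies on the small-distance regime and the uniqueness-of-minimizing-permutation property of $\Delta$ noted in the distance-function discussion, which is what licenses reading the batch contraction off term-by-term; a secondary nuisance is absorbing the $O((t_o+i)^{-2})$ corrections (including the extra $(\eta^i)^2\|d_r\|^2$ produced by the second-order rewriting) into the $B$-bucket and the zero-mean cross term, all of which are lower order and hence harmless.
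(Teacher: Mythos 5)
Your proposal is correct and follows essentially the same route as the paper's own proof: the same conditional-mean decomposition $E[\hat c_r^i\mid F_{i-1}]=(1-p_r^i)c_r^{i-1}+p_r^i\, m(A_r^i)$ defining $\xi_r^i$, the same per-cluster and weighted Cauchy--Schwarz steps, the same invocation of Lemma \ref{lm:stable_stat} to get $\sum_r n_r^*\|m(A_r^i)-c_r^*\|^2\le\alpha\Delta^{i-1}$, and the same bookkeeping converting $2c(1-a\sqrt{\alpha})$ into $\beta$ and $\eta^i$ into $a_{\max}/(t_o+i)$. The only cosmetic difference is that you expand the update around $c_r^{i-1}$, so your second-order term is $\|\hat c_r^i-c_r^{i-1}\|^2$ and must be converted, whereas the paper expands the convex combination $(1-\eta^i)(c_r^{i-1}-c_r^*)+\eta^i(\hat c_r^i-c_r^*)$ and obtains $\|\hat c_r^i-c_r^*\|^2$ directly; the residual $O((t_o+i)^{-2})\Delta^{i-1}$ corrections you worry about absorbing are present in the paper's expansion too, where they are silently dropped.
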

%%%
\begin{proof}
Let $\Delta^i_r:=n_r^*\|c_r^i-c_r^*\|^2$, so $\Delta^i = \sum_r\Delta^i_r$.
By the update rule of Algorithm \ref{alg:MBKM},
\begin{eqnarray*}
\Delta^i_r
=n_r^*\|(1-\eta^i)(c_r^{i-1}-c_r^*)+\eta^i(\hat{c}_r^i-c_r^*)\|^2\\
\le n_r^*\{(1-2\eta^i)\|c_r^{i-1}-c_r^*\|^2
+2\eta^i\langle c_r^{i-1}-c_r^*,\hat{c}_r^i-c_r^* \rangle\\
+(\eta^i)^2[\|c_r^{i-1}-c_r^*\|^2+\|\hat{c}_r^i-c_r^*\|^2]\}
\end{eqnarray*}
Let $\xi_r^i = \hat{c}_r^i-E[\hat{c}_r^i|F_{i-1}]$, where
$E[\hat{c}_r^i|F_{i-1}] = (1-p_r^i)c_r^{i-1}+p_r^im(A_r^{i})$,
and $p_r^i:=p_r^i(m)$.

Since
\begin{eqnarray*}
\langle c_r^{i-1}-c_r^*,\hat{c}_r^i-c_r^* \rangle
=\langle c_r^{i-1}-c_r^*,E[\hat{c}_r^i|F_{i-1}]+\xi_r^i-c_r^* \rangle\\
\le
(1-p_r^i)\|c_r^{i-1}-c_r^*\|^2 + p_r^i\|m(A_r^{i})-c_r^*\|\|c_r^{i-1}-c_r^*\| 
+ \langle c_r^{i-1}-c_r^*,\xi_r^i \rangle
\end{eqnarray*}
we have
\begin{eqnarray*}
\Delta_r^i 
\le
n_r^*\{
-2\eta^i[\|c_r^{i-1}-c_r^*\|^2-(1-p_r^i)\|c_r^{i-1}-c_r^*\|^2
- p_r^i\|c_r^{i-1}-c_r^*\|\|m(A_r^i)-c_r^*\|]\\
+\|c_r^{i-1}-c_r^*\|^2
+2\eta^i\langle \xi_r^i, c_r^{i-1}-c_r^*\rangle
+(\eta^i)^2[\|c_r^{i-1}-c_r^*\|^2+\|\hat{c}_r^i-c_r^*\|^2]
\}\\
\le
n_r^*\{
-\frac{2c}{t_o+i}\|c_r^{i-1}-c_r^*\|^2
+\frac{2ca}{t_o+i}\|c_r^{i-1}-c_r^*\|\|m(A_r^i)-c_r^*\|\\
+\|c_r^{i-1}-c_r^*\|^2+2\eta^i\langle \xi_r^i, c_r^{i-1}-c_r^*\rangle
+(\eta^i)^2[\|c_r^{i-1}-c_r^*\|^2+\|\hat{c}_r^i-c_r^*\|^2
\}
\end{eqnarray*}
Note
\begin{eqnarray*}
\sum_{r}n_r^*\|c_r^i-c_r^*\|\|m(A_r^i)-c_r^*\|
\le
\sqrt{
(\sum_r n_r^* \|c_r^{i-1}-c_r^*\|^2)
(\sum_r n_r^* \|m(A_r^i)-c_r^*\|^2)
}\\
=\sqrt{\Delta^{i-1}\Delta(m(A^i),C^*)}
\le
\sqrt{\alpha}\Delta^{i-1}
\end{eqnarray*}
where the first inequality is by Cauchy-Schwartz and 
the last inequality is by applying Lemma \ref{lm:stable_stat}.
Finally, summing over $\Delta_r^i$, we get
\begin{eqnarray*}
\Delta^i
=\sum_r\Delta^i_r
\le \Delta^{i-1}[1-\frac{2c}{t_o+i}(1-a\sqrt{\alpha})]\\
+ [\frac{ca}{(t_o+i)p_r^i}]^2\sum_r n_r^*\|\hat{c}_r^i-c_r^* \|^2
+ \frac{2ca}{(t_o+i)p_r^i}\sum_r n_r^*\langle c_r^{i-1}-c_r^*, \xi_r^i\rangle\\
\le
\Delta^{i-1}(1-\frac{\beta}{t_o+i})
+[\frac{a_{\max}}{t_o+i}]^2\sum_r n_r^*\|\hat{c}_r^i-c_r^* \|^2
+\frac{2a_{\max}}{t_o+i}\sum_r n_r^*\langle c_r^{i-1}-c_r^*, \xi_r^i\rangle\\
%\le
%\Delta^{i-1}(1-\frac{\beta}{t_o+i})
%+[\frac{a_{\max}}{t_o+i}]^2B
%+\frac{2a_{\max}}{t_o+i}\sum_{r}n_r^*\langle c_r^{i-1}-c_r^*,\xi_r^i\rangle
\end{eqnarray*}
The second inequality is by definition of $\beta:=2c^{\prime}\min_{r,t}p_r^t(m)(1-\max_{t}a^t\sqrt{\alpha}))$,
and the fact that $c\ge c^{\prime}\min_{r,t}p_r^t(m)$ and $a\le \max_{t}a^t$.
\end{proof}
%%%%%%%%%%%%%%%%%%
%%%%%%%%%%%%%%%% Lemma: conditional expectation
\begin{lm}
\label{lm:conditional}
Suppose the conditions in Theorem \ref{thm:mbkm_local} hold.
For any $\lambda>0$, 
$$
E_i\{\exp\{\lambda \Delta^i\}|F_{i-1}\}
\le \exp\left\{\lambda\{(1-\frac{\beta}{t_0+i})\Delta^{i-1}
+\frac{a_{\max}^2 B}{(t_0+i)^2}
+\frac{\lambda a_{\max}^2 B^2}{2(t_0+i)^2}\}
\right\}
$$
\end{lm}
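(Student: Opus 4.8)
The plan is to read off the bound directly from the one-step inequality already proved in Lemma \ref{lm:iter_wise}, which expresses $\Delta^i$ as a sum of three pieces: a contraction term $(1-\frac{\beta}{t_0+i})\Delta^{i-1}$ that is $F_{i-1}$-measurable, a nonnegative second-order term $V:=[\frac{a_{\max}}{t_0+i}]^2\sum_r n_r^*\|\hat{c}_r^i-c_r^*\|^2$, and a martingale-difference term $M:=\frac{2a_{\max}}{t_0+i}\sum_{r}n_r^*\langle c_r^{i-1}-c_r^*,\xi_r^i\rangle$. First I would record that $M$ is conditionally centered: since $n_r^*$, $c_r^*$ and $c_r^{i-1}$ are all $F_{i-1}$-measurable and $E[\xi_r^i|F_{i-1}]=0$, we get $E[M|F_{i-1}]=0$. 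The strategy is then to exponentiate the inequality, factor out the deterministic pieces, and separately account for the contribution of $V$ (which will produce the linear-in-$\lambda$ term) and of $M$ (which will produce the quadratic-in-$\lambda$ term).

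Concretely, because $\lambda>0$ and assumption (A1) supplies the almost-sure bound $\sum_r n_r^*\|\hat{c}_r^i-c_r^*\|^2\le B$, the term $V$ is deterministically at most $\frac{a_{\max}^2 B}{(t_0+i)^2}$, so $e^{\lambda V}\le\exp\{\lambda\frac{a_{\max}^2 B}{(t_0+i)^2}\}$. Exponentiating Lemma \ref{lm:iter_wise} and applying this gives $e^{\lambda\Delta^i}\le\exp\{\lambda(1-\frac{\beta}{t_0+i})\Delta^{i-1}\}\exp\{\lambda\frac{a_{\max}^2 B}{(t_0+i)^2}\}\,e^{\lambda M}$. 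Taking $E[\cdot|F_{i-1}]$ leaves the first two factors untouched (they are $F_{i-1}$-measurable) and reduces everything to bounding $E[e^{\lambda M}|F_{i-1}]$. For this I would invoke a Hoeffding-type MGF estimate for conditionally centered bounded variables, as used in \cite{balsubramani13}: since $E[M|F_{i-1}]=0$ and the summand $\sum_r n_r^*\langle c_r^{i-1}-c_r^*,\xi_r^i\rangle$ is controlled by $B$ via (A1), the variable $M$ is $\frac{a_{\max}B}{t_0+i}$-sub-Gaussian, whence $E[e^{\lambda M}|F_{i-1}]\le\exp\{\frac{\lambda^2 a_{\max}^2 B^2}{2(t_0+i)^2}\}$. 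Multiplying the three exponential factors yields exactly the claimed bound.

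The main obstacle I anticipate is this final step, the sub-Gaussian control of $E[e^{\lambda M}|F_{i-1}]$, where the variance proxy $\frac{a_{\max}^2 B^2}{(t_0+i)^2}$ and the constant $\frac{1}{2}$ must be matched precisely. This rests on exactly two facts furnished by (A1)—that $M$ is conditionally centered and that its summand is bounded by $B$ (the one-sided bound being the relevant one for $\lambda>0$)—which together are the hypotheses of the bounded-difference MGF lemma of \cite{balsubramani13}. The delicate point is that a crude estimate on the raw range of $M$ would lose a constant factor, so the clean separation achieved in the second step, leaving only the genuinely centered part $M$ inside the MGF, is what makes the constants line up. The remainder is the routine verification that (A1) supplies precisely the boundedness and centering the lemma requires.
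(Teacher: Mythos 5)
Your proposal is correct and takes essentially the same route as the paper's own proof: both start from the decomposition in Lemma \ref{lm:iter_wise}, bound the quadratic term deterministically by $B$ via (A1), and then control the conditionally centered martingale term with a Hoeffding-type MGF bound after factoring out the $F_{i-1}$-measurable pieces. The one caveat—which your proof shares with the paper's—is that matching the variance proxy and the constant $\frac{1}{2}$ exactly requires reading (A1) as a two-sided (range-$B$) bound on the noise term $\sum_r n_r^*\langle c_r^{i-1}-c_r^*,\xi_r^i\rangle$, whereas (A1) as stated gives only an upper bound.
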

\begin{proof}
By Lemma \ref{lm:iter_wise}, we have
\begin{eqnarray*}
E_i\{\exp(\lambda \Delta^i)|F_{i-1}\}
\le \exp \lambda [\Delta^{i-1}(1-\frac{\beta}{t_o+i})+\frac{a_{\max}^2B}{(t_o+i)^2}]\\
E_i\{\exp\lambda\frac{2a_{\max}}{t_o+i}\sum_{r}n_r^*\langle c_r^{i-1}-c_r^*,\xi_r^i\rangle|F_{i-1}\}
\end{eqnarray*}
Since $\frac{2\lambda a_{\max}}{i+t_0} \sum_{r}n_r^*\langle \xi_r^i, c_r^{i-1}-c_r^{*}\rangle 
\le \frac{2\lambda a_{\max}}{i+t_0} B$ and
$E_i\{\frac{2\lambda a_{\max}}{i+t_0} \sum_{r}n_r^*\langle \xi_r^i, c_r^{i-1}-c_r^{*}\rangle|F_{i-1}\} = 0$,
by Hoeffding's lemma
$$
E_i\left\{\exp\{\frac{2\lambda a_{\max}}{i+t_0}\sum_{r}n_r^*\langle \xi_r^i, c_r^{i-1}-c_r^{*}\rangle|F_{i-1}\}
\right\} 
\le \exp\{\frac{\lambda_{\max}^2 a_{\max}^2B^2}{2(i+t_0)^2}\}
$$
Combining this with the previous bound completes the proof.
\end{proof}
%%%%%%%%%%%%%% Lemma: expectation over Omega_i
\begin{lm}[adapted from \cite{balsubramani13}]
\label{lm:bal}
For any $\lambda >0$,
$E_i\{e^{\lambda \Delta^{i-1}}\}\le E_{i-1}\{e^{\lambda \Delta^{i-1}}\}$
\end{lm}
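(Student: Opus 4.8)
The plan is to exploit the nested structure of the events $\Omega_t$ together with the monotonicity of $x\mapsto e^{\lambda x}$. First I would unwind the definition $\Omega_t:=\{\Delta^\tau\le b_0\phi^*,\forall \tau<t\}$ to observe that
$$
\Omega_i=\Omega_{i-1}\cap\{\Delta^{i-1}\le b_0\phi^*\},
$$
so $\Omega_i$ is obtained from $\Omega_{i-1}$ by intersecting with the single extra constraint that $\Delta^{i-1}$ falls in its lower range. Writing $B:=\{\Delta^{i-1}\le b_0\phi^*\}$ and recalling that $E_i\{\cdot\}=E[\cdot\,|\,\Omega_{i-1}\cap B]$ while $E_{i-1}\{\cdot\}=E[\cdot\,|\,\Omega_{i-1}]$, the lemma reduces to the one-step statement that conditioning further on $B$ cannot raise the expectation of $e^{\lambda\Delta^{i-1}}$.

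Next I would decompose the conditional expectation over $\Omega_{i-1}$ along the partition $B,B^c$:
$$
E_{i-1}\{e^{\lambda\Delta^{i-1}}\}=p\,E[e^{\lambda\Delta^{i-1}}\,|\,\Omega_{i-1}\cap B]+(1-p)\,E[e^{\lambda\Delta^{i-1}}\,|\,\Omega_{i-1}\cap B^c],
$$
where $p:=Pr(B\,|\,\Omega_{i-1})\in(0,1]$; positivity of $Pr(\Omega_i)$ is guaranteed in the regime of Theorem \ref{thm:mbkm_local} (there $Pr(\Omega_t)\ge 1-\delta$ with $\delta\le\tfrac1e$), so all conditional expectations are well defined. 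The key monotonicity observation is that on $B$ one has $\Delta^{i-1}\le b_0\phi^*$, hence $e^{\lambda\Delta^{i-1}}\le e^{\lambda b_0\phi^*}$, whereas on $B^c$ one has $e^{\lambda\Delta^{i-1}}>e^{\lambda b_0\phi^*}$ since $x\mapsto e^{\lambda x}$ is strictly increasing for $\lambda>0$. Averaging, the conditional mean over $B$ is at most $e^{\lambda b_0\phi^*}$ and the conditional mean over $B^c$ is at least $e^{\lambda b_0\phi^*}$, giving $E[e^{\lambda\Delta^{i-1}}\,|\,\Omega_{i-1}\cap B]\le E[e^{\lambda\Delta^{i-1}}\,|\,\Omega_{i-1}\cap B^c]$.

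Finally I would combine these by subtracting $E_i\{e^{\lambda\Delta^{i-1}}\}=E[e^{\lambda\Delta^{i-1}}\,|\,\Omega_{i-1}\cap B]$ from the displayed identity:
$$
E_{i-1}\{e^{\lambda\Delta^{i-1}}\}-E_i\{e^{\lambda\Delta^{i-1}}\}=(1-p)\bigl(E[e^{\lambda\Delta^{i-1}}\,|\,\Omega_{i-1}\cap B^c]-E[e^{\lambda\Delta^{i-1}}\,|\,\Omega_{i-1}\cap B]\bigr)\ge 0,
$$
which is exactly the asserted inequality. I do not expect a serious analytic obstacle here: the content is purely the monotone-conditioning principle that restricting to the sub-event where a random variable is small cannot increase the mean of an increasing function of it. The only point demanding care is the bookkeeping that $\Omega_i$ equals $\Omega_{i-1}$ intersected with $\{\Delta^{i-1}\le b_0\phi^*\}$, together with the degenerate case $Pr(B^c\,|\,\Omega_{i-1})=0$, which simply yields equality; I would state both explicitly to justify the reduction to the one-step conditioning bound.
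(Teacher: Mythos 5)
Your proof is correct and is essentially the same argument as the paper's: both exploit the nesting $\Omega_i=\Omega_{i-1}\cap\{\Delta^{i-1}\le b_0\phi^*\}$, compare $e^{\lambda\Delta^{i-1}}$ pointwise against the threshold value on the two pieces of the partition, and conclude that conditioning on the smaller event cannot raise the expectation. Your write-up merely makes explicit the mixture decomposition and the degenerate case $Pr(B^c\,|\,\Omega_{i-1})=0$, which the paper leaves implicit.
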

\begin{proof}
By our partitioning of the sample space, $\Omega_{i-1}=\Omega_i\cup (\Omega_{i-1}\setminus \Omega_i)$,
and for any $\omega \in \Omega_i$ and  $\omega^{\prime} \in \Omega_{i-1}\setminus \Omega_i$,
$\Delta^{i-1}(\omega) \le \Delta^{0}<\Delta^{i-1}(\omega^{\prime})$. 
Taking expectation
over $\Omega_i$ and $\Omega_{i-1}$, we get
$
E_i\{e^{\lambda \Delta^{i-1}}\}\le E_{i-1}\{e^{\lambda \Delta^{i-1}}\}
$.
\end{proof}
%%%%%%%%%%%%%%% Proof of Proposition to Thm 3
\begin{prop}
\label{prop:high_prob}
Suppose the conditions in Theorem \ref{thm:mbkm_local} hold.
Fix any $0<\delta\le \frac{1}{e}$.
If
$$
t_0\ge 
\max\{
(\frac{16a_{\max}^2B}{\Delta^0})^{\frac{1}{\beta-1}},
[\frac{48a_{\max}^2B^2}{(\beta-1)(\Delta^0)^2}\ln\frac{1}{\delta}]^{\frac{2}{\beta-1}},
(\ln\frac{1}{\delta})^{\frac{2}{\beta-1}}
\}
$$ 
then
$P(\cup_{i\ge 1}\Omega_{i-1}\setminus \Omega_{i})\le \delta$
(here we used $\Delta^0$ instead of $\Delta^i$ and treat the starting time, the $i$-th iteration in Theorem \ref{thm:mbkm_local} as the zeroth iteration for cleaner presentation).
\end{prop}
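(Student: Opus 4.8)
The plan is to bound the ``ever escape'' event by a Chernoff-type (exponential Markov) argument built on the one-step moment generating function estimate of Lemma \ref{lm:conditional}, glued together across the nested conditioning by Lemma \ref{lm:bal}. First I would observe that $\Omega_{i-1}\setminus\Omega_i=\Omega_{i-1}\cap\{\Delta^{i-1}>b_0\phi^*\}$ is exactly the event that the process first leaves the radius $b_0\phi^*$ at step $i-1$, so $\cup_{i\ge1}\Omega_{i-1}\setminus\Omega_i$ is the event that $\Delta^t$ ever exceeds $b_0\phi^*$. Letting $\tau:=\inf\{i:\Delta^i>b_0\phi^*\}$, the goal is to introduce a family of Chernoff parameters $\{\lambda_i\}$ and a compensating sequence $\{d_i\}$ so that $Z_i:=\exp(\lambda_i\Delta^i-d_i)$ becomes a nonnegative supermartingale for the \emph{stopped} process $Z_{i\wedge\tau}$; then the supermartingale maximal inequality gives $Pr(\tau<\infty)\le E[Z_0]/\ell$, where $\ell:=\inf_i\exp(\lambda_i b_0\phi^*-d_i)$ is the least value $Z$ can take at the instant of escape (since $\Delta^\tau>b_0\phi^*$).

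To get the supermartingale property I would feed Lemma \ref{lm:conditional} into the one-step recursion. On the event $\Omega_i$ (which is $F_{i-1}$-measurable and on which $\Delta^{i-1}\le b_0\phi^*$, so the estimate applies), that lemma contributes the contraction factor $1-\frac{\beta}{t_0+i}$ on the coefficient of $\Delta^{i-1}$, together with the additive terms $\frac{\lambda_i a_{\max}^2 B}{(t_0+i)^2}$ and $\frac{\lambda_i^2 a_{\max}^2 B^2}{2(t_0+i)^2}$. Matching the $\Delta^{i-1}$-coefficients across one step then forces $\lambda_i(1-\frac{\beta}{t_0+i})\le\lambda_{i-1}$ --- and this is precisely where $\beta>1$ is used, since it lets $\lambda_i$ \emph{grow} with $i$ --- while the additive terms are absorbed by setting $d_i-d_{i-1}=\frac{\lambda_i a_{\max}^2 B}{(t_0+i)^2}+\frac{\lambda_i^2 a_{\max}^2 B^2}{2(t_0+i)^2}$. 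Lemma \ref{lm:bal} is exactly the device that lets the conditional bound on the smaller event $\Omega_i$ be turned into an expectation under $\Omega_{i-1}$, so the one-step inequality closes into a genuine recursion rather than leaking across the nested conditioning $\Omega_i\subset\Omega_{i-1}$.

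The heart of the argument is then the growth rate of $\lambda_i$. I would take $\lambda_i$ to increase polynomially in $t_0+i$ at the fastest rate the constraint $\lambda_i(1-\frac{\beta}{t_0+i})\le\lambda_{i-1}$ allows, and unroll $d_i=d_0+\sum_{\tau\le i}(\cdots)$. Using $b_0\phi^*\ge 2\Delta^0$ and $E[Z_0]=\exp(\lambda_0\Delta^0-d_0)$, the ratio $E[Z_0]/\ell$ collapses to $\exp(-\lambda_0(b_0\phi^*-\Delta^0)+(\text{accumulated drift}))$, in which the Chernoff gain scales like $\lambda_i b_0\phi^*$ while the accumulated drift splits into a linear-noise part and a variance-type part $\sum_{\tau\le i}\frac{\lambda_\tau^2 a_{\max}^2 B^2}{(t_0+\tau)^2}$. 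The three lower bounds on $t_0$ are precisely what make this ratio at most $\delta$: the first bound is equivalent to $\frac{a_{\max}^2 B}{t_0^{\beta-1}}\le\frac{1}{16}\Delta^0$ and tames the linear-noise contribution; the second (the binding one, carrying the larger exponent $\frac{2}{\beta-1}$ because it controls the squared parameter) forces the quadratic-noise contribution of order $\frac{a_{\max}^2 B^2\ln(1/\delta)}{(\beta-1)(\Delta^0)^2\, t_0^{(\beta-1)/2}}$ down to a constant; and the third guarantees enough residual Chernoff gain to beat $\ln\frac{1}{\delta}$. The fractional powers $\frac{1}{\beta-1}$ and $\frac{2}{\beta-1}$ are exactly the inverse growth rates at which $\lambda_i$ is permitted to increase.

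I expect the main obstacle to be precisely this balancing. The contraction ``wants'' $\lambda_i$ as large as possible to sharpen the Chernoff bound, but every increase of $\lambda_i$ inflates the accumulated quadratic noise $\sum_{\tau}\lambda_\tau^2/(t_0+\tau)^2$, which for $\beta>1$ sits right at the boundary of summability; a naive constant or too-fast choice of $\lambda_i$ fails, either giving a non-summable escape bound or a divergent variance. Calibrating the polynomial growth of $\lambda_i$ against the drift so that the gain $\lambda_i b_0\phi^*$ strictly dominates the accumulated noise \emph{uniformly over all escape times} $i$ (not merely on a bounded range) is the delicate step, and it is what pins down both the fractional powers of $t_0$ and the explicit constants $16$ and $48$.
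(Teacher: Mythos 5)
Your skeleton assembles the right ingredients (the one-step MGF bound of Lemma \ref{lm:conditional}, the nested-conditioning device of Lemma \ref{lm:bal}, a Chernoff/Markov step), but the glueing you propose has a calibration that fails, and it is precisely at the step you yourself flag as ``delicate.'' In your stopped-supermartingale scheme the drift compensator must satisfy $d_i-d_{i-1}=\frac{\lambda_i a_{\max}^2B}{(t_0+i)^2}+\frac{\lambda_i^2a_{\max}^2B^2}{2(t_0+i)^2}$, while the usable Chernoff gain at a potential escape time $i$ is only $\lambda_i b_0\phi^*$ (minus $\lambda_0\Delta^0$). If you let $\lambda_i$ grow ``at the fastest rate the constraint $\lambda_i(1-\frac{\beta}{t_0+i})\le\lambda_{i-1}$ allows,'' i.e.\ $\lambda_i\asymp(t_0+i)^{\beta}$, then the quadratic part of the compensator accumulates as $\sum_{t\le i}\lambda_t^2/(t_0+t)^2\asymp(t_0+i)^{2\beta-1}$, and since $2\beta-1>\beta$ exactly when $\beta>1$, the drift strictly dominates the gain for all large $i$. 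Consequently $\ell=\inf_i\exp\bigl(\lambda_i b_0\phi^*-d_i\bigr)$ degenerates and the maximal-inequality bound $E[Z_0]/\ell$ gives nothing; no choice of the constants $16$ and $48$ rescues this, so the powers $\frac{1}{\beta-1}$, $\frac{2}{\beta-1}$ cannot be ``pinned down'' from your calibration. (A repaired version of your route exists --- take $\lambda_i$ constant, or growing no faster than $(t_0+i)^{\gamma}$ with $\gamma\le 1$, so the quadratic variation stays dominated --- but that is a different calibration than the one you propose, and it produces $t_0$-conditions of a different form than the ones in the statement.)

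The paper avoids this by running the argument in the opposite direction: it does \emph{not} use a single maximal inequality, but a union bound over escape times. For each fixed target time $i$ it applies Markov's inequality to $\Delta^i$ alone, unrolling the MGF recursion \emph{backward} from $i$, so the Chernoff parameter shrinks going back in time, $\lambda^{(k)}=\prod_{t=1}^k(1-\frac{\beta}{t_0+i-t+1})\lambda$; the accumulated drift then scales like $(\lambda B+\frac{\lambda^2B^2}{2})\frac{a_{\max}^2}{(t_0+i)^{\beta-1}}$, i.e.\ it \emph{decays} in $i$ instead of exploding. The terminal parameter is taken to grow only logarithmically, $\lambda_i=\frac{2}{\Delta^0}\ln\frac{(i+1)^2}{\delta}$, which makes each escape probability at most $\delta/(i+1)^2$ and the sum over $i$ at most $\delta$; the three conditions on $t_0$ (with powers $\frac{1}{\beta-1}$ and $\frac{2}{\beta-1}$, via the technical Lemma \ref{lm:tech}) are exactly what keeps the decaying drift below $\Delta^0/2$ uniformly over $i$ despite the $\ln\frac{(i+1)^2}{\delta}$ and $(\ln\frac{(i+1)^2}{\delta})^2$ factors. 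The backward-shrinking (rather than forward-growing) propagation of the Chernoff parameter is the missing idea in your proposal.
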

%%%%
\begin{proof}
By Lemma \ref{lm:conditional},
\begin{eqnarray*}
E_i\{e^{\lambda \Delta^i}\}
\le 
E_i\{e^{\lambda\{(1-\frac{\beta}{t_{o}+i})\Delta^{i-1}}\}
\exp\{
\frac{\lambda a_{\max}^2 B}{(t_o+i)^2}
+\frac{\lambda^2 a_{\max}^2 B^2}{2(t_o+i)^2}\}\\
\le
E_{i-1}\{e^{\lambda^{(1)}\Delta_r^{i-1}}\}
\exp\{
\frac{\lambda a_{\max}^2 B}{(t_o+i)^2}
+\frac{\lambda^2 a_{\max}^2 B^2}{2(t_o+i)^2}\}
\end{eqnarray*} 
where $\lambda^{(1)} = \lambda(1-\frac{\beta}{t_{o}+i})$, and the second inequality
is by Lemma \ref{lm:bal}.
Similarly, the following recurrence relation holds for $k=0,\dots,i$:
\begin{eqnarray*}
E_{i-k}\{e^{\lambda^{(k)} \Delta^{i-k}}\}
\le
E_{i-(k+1)}\{e^{\lambda^{(k+1)}\Delta^{i-k-1}}\}
\exp\{
\frac{\lambda^{(k)} a_{\max}^2 B}{(t_o+i-k)^2}
+\frac{(\lambda^{(k)})^2 a_{\max}^2 B^2}{2(t_o+i-k)^2}
\}
\end{eqnarray*}
where $\lambda^{(0)}:=\lambda$, 
and for $k\ge 1$, $\lambda^{(k)}:=\Pi_{t=1}^{k}(1-\frac{2}{t_o+(i-t+1)})\lambda^{(0)}$.

Since (see, e.g., \cite{balsubramani13})
$\forall \beta>0, k\ge 1$, 
$$
\Pi_{t=1}^{k}(1-\frac{\beta}{t_o+(i-t+1)})\le (\frac{t_o+i-k+1}{t_o+i})^{\beta}
$$
we have, for $0\le k\le i$ (this inequality is also satisfied when $k=0$ by definition of $\lambda^{(0)}$),
$$
\frac{\lambda^{(k)}}{(t_0+i-k)^2}
\le (\frac{t_o+i-k+1}{(t_o+i)(t_o+i-k)})^2\lambda
\le 
(1+\frac{2}{t_o}+\frac{1}{t_o^2})\frac{\lambda}{(t_o+i)^{\beta}}
$$ 
Repeatedly applying the relation, we get
\begin{eqnarray*}
E_i\{e^{\lambda \Delta^i}\}
\le e^{\lambda^{(i)}\Delta^{0}}
\exp\{\sum_{k=0}^{i-1}(\frac{\lambda^{(k)} a_{\max}^2 B}{(t_o+i-k)^2}
+\frac{(\lambda^{(k)})^2 a_{\max}^2 B^2}{2(t_o+i-k)^2})\}\\
\le 
\exp\{
\lambda(\frac{t_o+1}{t_o+i})^{\beta}\Delta^{0}
+(\lambda B+\frac{\lambda^2B^2}{2})\frac{(1+\frac{2}{t_o}+\frac{1}{t_o^2})a_{\max}^2}{(t_o+i)^{\beta-1}}
\}
\end{eqnarray*}
Then by Markov's inequality, for any $\lambda_i>0$,
$$
Pr(\omega \in \Omega_{i-1}\setminus\Omega_i)
\le Pr_{i}(\Delta^i>2\Delta^0)
=Pr_{i}(e^{\lambda_i\Delta^i}>e^{\lambda_i2\Delta^0})
\le 
\frac{E_{i}e^{\lambda_i\Delta_r^i}}{e^{\lambda_i2\Delta_r^0}}
$$
Combining this with the upper bound on $E_{i}e^{\lambda_i\Delta^i}$, we get 
\begin{eqnarray*}
Pr(\omega \in \Omega_{i-1}\setminus\Omega_i)
\le 
\exp\left\{-\lambda_i
\{
\Delta^{0}[2-(\frac{t_o+1}{t_o+i})^{\beta}]
-(B+\frac{\lambda_i B^2}{2})\frac{(1+\frac{2}{t_o}+\frac{1}{t_o^2})a_{\max}^2}{(t_o+i)^{\beta-1}}
\}
\right\}\\
\le
\exp\left\{-\lambda_i
\{
\Delta^{0}
-(B+\frac{\lambda_i B^2}{2})\frac{4a_{\max}^2}{(t_o+i)^{\beta-1}}
\}
\right\}
\end{eqnarray*} 
since $i\ge 1$, $t_o\ge 1$. 
We choose $\lambda_i = \frac{1}{\Delta}\ln \frac{(i+1)^2}{\delta}$ with
$\Delta = \frac{\Delta^{0}}{2}$.
%%% Case 1
\paragraph{Case 1:}
$B>\frac{\lambda_i B^2}{2}$. We get
$
\Delta^{0}
-(B+\frac{\lambda_i B^2}{2})\frac{4a_{\max}^2}{(t_o+i)^{\beta-1}}
\ge \Delta
$, since $t_o \ge (\frac{16a_{\max}^2B}{\Delta^0})^{\frac{1}{\beta-1}}$.
%%% Case 2
\paragraph{Case 2:} $B\le \frac{\lambda_i B^2}{2}$. We get
\begin{eqnarray*}
\Delta^0
-(B+\frac{\lambda_i B^2}{2})\frac{4a_{\max}^2}{(t_o+i)^{\beta-1}}
\ge
\Delta^0
-\lambda_i B^2\frac{4a_{\max}^2}{(t_o+i)^{\beta-1}}\\
=
2\Delta-\frac{1}{\Delta}\ln \frac{(1+i)^2}{\delta}\frac{4a_{\max}^2B^2}{(t_o+i)^{\beta-1}}
\ge
2\Delta-\frac{1}{\Delta}\ln \frac{(t_o+i)^2}{\delta}\frac{4a_{\max}^2B^2}{(t_o+i)^{\beta-1}}
\end{eqnarray*}
Now we show 
$$
\frac{1}{\Delta}\ln \frac{(t_o+i)^2}{\delta}\frac{4a_{\max}^2B^2}{(t_o+i)^{\beta-1}}
\le \Delta
$$
Since 
$t_o+i\ge t_o
\ge
\max\{[\frac{48a_{\max}^2B^2}{(\beta-1)(\Delta^0)^2}\ln\frac{1}{\delta}]^{\frac{2}{\beta-1}},
(\ln\frac{1}{\delta})^{\frac{2}{\beta-1}}
\}
$,
we can apply Lemma \ref{lm:tech} with 
$C:=\max\{
\frac{16a_{\max}^2B^2}{(\Delta^0)^2},
\frac{\beta-1}{3}
\}$, 
$t := t_o+i \ge (\frac{3C}{\beta-1}\ln\frac{1}{\delta})^{\frac{2}{\beta-1}}$, and get 
$$
\frac{4a_{\max}^2B^2}{\Delta^2}\ln\frac{(t_o+i)^2}{\delta}
=(\ln \frac{(t_o+i)^2}{\delta})\frac{16a_{\max}^2B^2}{(\Delta_r^0)^2}
\le 2C\ln t +C\ln\frac{1}{\delta} <t^{\beta-1} = (t_o+i)^{\beta-1}
$$
That is,
$\frac{1}{\Delta}\ln \frac{(t_o+i)^2}{\delta}\frac{4a_{\max}^2B^2}{(t_o+i)^{\beta-1}}
\le \Delta
$.
Thus, for both cases, 
$
Pr(\omega \in \Omega_{i-1}\setminus\Omega_i)
\le e^{-\frac{1}{\Delta}(\ln\frac{(1+i)^2}{\delta})\Delta}
\le \frac{\delta}{(i+1)^2}
$.
Finally, we have
$
\sum_{i=1}^{\infty}Pr(\omega \in \Omega_{i-1}\setminus\Omega_i)
\le \delta.
$
\end{proof}
%%%%%%%%%%%%%%% Section: geometric connection
%%%%%%%%%%%%%%%%%%%%%%%%%%%%%%%%%%%%%%%%%%%%%%%%%%%%%%%%%
%%%%%%%%%%%%%%%%%%% Geometric section
\section{Appendix C: Proofs leading to Theorem \ref{thm:solution}}
\subsection{Existence of stable stationary point under geometric assumptions on the dataset}
%%%%%%%%%%%%% lm: geometric assumptions imply stability
\begin{lm}[Theorem 5.4 of \cite{kumar}]
\label{lm:kumar}
Suppose the dataset $X$ admits a stationary clustering $A^*$ satisfying $(B1)$ and $(B2)$.
If $\forall r\in [k], s\ne r$, $\Delta_r^t+\Delta_s^t\le \frac{\Delta_{rs}}{16}$. Then for any $s\ne r$,
$
|A_r^*\cap A_s^t| \le \frac{b^2}{f}
$, where $b\ge \max_{r,s}\frac{\Delta_r^t+\Delta_s^t}{\Delta_{rs}}$.
\end{lm}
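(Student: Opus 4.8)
The plan is to reprove this as a standard single-step ``misclassification-counting'' lemma for $k$-means, in the spirit of Kumar--Kannan \cite{kumar}; throughout I read $\Delta_r^t$ as the centroid displacement $\|c_r^t-c_r^*\|$, which is what makes the ratio $\frac{\Delta_r^t+\Delta_s^t}{\Delta_{rs}}$ dimensionless. Fix a pair $r\ne s$ and let $M:=A_r^*\cap A_s^t$ collect the points that belong to cluster $r$ under the stationary clustering but are reassigned to cluster $s$ at iteration $t$. The goal is to bound $|M|$ by showing that every point of $M$ must lie far from its own stationary centre $c_r^*$, and then charging these large squared distances against the cost budget $\phi_r^*\le\phi^{opt}$.

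First I would localize each misclassified point. If $x\in M$ then $\|x-c_s^t\|\le\|x-c_r^t\|$, so the triangle inequality applied to the displacements gives
\begin{equation*}
\|x-c_s^*\|-\|x-c_r^*\|\le \|c_s^t-c_s^*\|+\|c_r^t-c_r^*\|=\Delta_s^t+\Delta_r^t\le b\,\Delta_{rs}.
\end{equation*}
On the other hand, writing $\bar x$ for the projection of $x$ onto the line through $c_r^*,c_s^*$, with $p:=\|\bar x-c_r^*\|$, $q:=\|\bar x-c_s^*\|$ and $h:=\|x-\bar x\|$, Pythagoras gives $\|x-c_r^*\|^2=p^2+h^2$ and $\|x-c_s^*\|^2=q^2+h^2$, while the definition of the margin $\Delta_{rs}$ yields $q-p\ge\Delta_{rs}$ for $x\in A_r^*$. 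Combining these, $\sqrt{q^2+h^2}-\sqrt{p^2+h^2}\le b\Delta_{rs}$ together with $q-p\ge\Delta_{rs}$ can hold only if the perpendicular distance $h$ is large, of order $\Delta_{rs}/b$; in particular $\|x-c_r^*\|\ge h=\Omega(\Delta_{rs}/b)$.

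The counting step then follows by summing over $M$: since $\sum_{x\in M}\|x-c_r^*\|^2\le\phi_r^*\le\phi^{opt}$ and each term is $\Omega(\Delta_{rs}^2/b^2)$, we obtain $|M|=O\!\left(\frac{b^2\phi^{opt}}{\Delta_{rs}^2}\right)$. Finally I would substitute the separation assumption \textbf{(B1)}, which lower-bounds $\|c_r^{opt}-c_s^{opt}\|$ (hence $\Delta_{rs}$, through \textbf{(B2)}) by $f(\alpha)\sqrt{\phi^{opt}}(\tfrac{1}{\sqrt{n_r^{opt}}}+\tfrac{1}{\sqrt{n_s^{opt}}})$; this cancels the $\phi^{opt}$ factor and, after absorbing the fixed geometric constants supplied by \textbf{(B1)}--\textbf{(B2)}, collapses the estimate to the claimed $\frac{b^2}{f}$.

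The main obstacle is the geometry in the middle step: one must pass correctly between the \emph{actual} distance test $\|x-c_s^t\|\le\|x-c_r^t\|$ defining a misassignment and the \emph{along-the-line} margin $\Delta_{rs}$, which requires the projection identity and a careful case split according to whether $\bar x$ lies between the two centres or outside the segment. Tracking the numerical constants through this reduction so that they land on the precise form $\frac{b^2}{f}$ (rather than a looser multiple) is the delicate part; everything else is routine triangle-inequality and Cauchy--Schwarz estimation, and the hypothesis $\Delta_r^t+\Delta_s^t\le\frac{\Delta_{rs}}{16}$, i.e.\ $b\le\frac1{16}<1$, is exactly what forces $h$ to dominate so that misclassified points are driven far from $c_r^*$ and the charging argument closes.
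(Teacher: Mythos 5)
Your proposal is correct, and its skeleton is the same as the paper's: project each misclassified point onto the line through $c_r^*$ and $c_s^*$, use the margin together with the centroid displacements $\Delta_r^t,\Delta_s^t$ to force such a point far from $c_r^*$, then charge the accumulated squared distances against the cluster cost and invoke the separation assumptions. The execution of the geometric step is genuinely different, however. The paper starts from the halfspace form of the misassignment, $\langle x,\,c_s^t-c_r^t\rangle\ge\tfrac12\langle c_s^t+c_r^t,\,c_s^t-c_r^t\rangle$, expands $x=\tfrac12(c_r^*+c_s^*)+\lambda(c_r^*-c_s^*)+u$ with $u\perp c_r^*-c_s^*$, and after bookkeeping of cross terms obtains $\|x-c_r^*\|\ge\|u\|\ge\frac{\Delta_{rs}\|c_r^*-c_s^*\|}{64(\Delta_r^t+\Delta_s^t)}$; because this bound already carries a factor $\|c_r^*-c_s^*\|$, the paper's counting step closes with (B1) alone. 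You instead transfer the misassignment to the stationary centroids via the triangle inequality, $\|x-c_s^*\|-\|x-c_r^*\|\le\Delta_r^t+\Delta_s^t\le b\Delta_{rs}$, which is more elementary and avoids the inner-product manipulations; but your resulting lower bound involves only the margin, so at the end you must spend the quantitative part of (B2) (namely $\gamma\ge 8\sqrt2/\sqrt{f}$) to convert $\Delta_{rs}$ into center separation before (B1) applies --- legitimate, since (B2) is assumed. Two remarks. First, the step you flag as delicate closes cleanly and needs no case split on where $\bar x$ falls: writing $\sqrt{q^2+h^2}-\sqrt{p^2+h^2}=\frac{(q-p)(q+p)}{\sqrt{q^2+h^2}+\sqrt{p^2+h^2}}$ and bounding the denominator by $p+q+2h$, your two hypotheses give $(1-b)(p+q)\le 2bh$, hence $h\ge\frac{(1-b)\Delta_{rs}}{2b}$ (with $b\le\tfrac1{16}$ this is $\ge\frac{15}{32}\,\frac{\Delta_{rs}}{b}$), and charging then yields $\frac{|A_r^*\cap A_s^t|}{n_r^*}\le\frac{4b^2}{(1-b)^2\gamma^2 f^2}\le\frac{b^2}{f}$. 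Second, note that what your argument proves --- and what the paper's own proof proves, and what Proposition \ref{prop:geom} actually uses --- is this \emph{normalized} bound; the unnormalized inequality $|A_r^*\cap A_s^t|\le b^2/f$ displayed in the lemma statement is a slip (taken literally it would assert that no point at all is misclassified, since $b^2/f<1$), so your reading, which keeps the $n_r^*$ factor explicit until the end, is the right one.
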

The proof is almost verbatim to Theorem 5.4 of \cite{kumar}; we include it here for completeness.
%%%
\begin{proof}
%%%%%%%%%%%%%%% Lemma: adapted from KK
Let $x\in A^{*}_r\cap A^{t}_s$. Split $x$ into its projection on the line joining $c^{*}_r$ and $c^{*}_s$, and its orthogonal component.
%Since $X_i \in A_{t}^r\cap A_{t+1}^s \cap G_{t}$, then $X_i$ must be closest to $c_{t-1}^r$ and $c_{t}^s$, respectively, at two consecutive iterations.
This implies
\begin{eqnarray*}
x = \frac{1}{2}(c^{*}_r+c^{*}_s)+\lambda (c^{*}_r-c^{*}_s)+u
\end{eqnarray*}
with $u\perp c^{*}_r-c^{*}_s$. 
Note $\lambda$ measures the degree of departure of the projected point to the mid-point of $c^{*}_r$ and $c^{*}_s$. 
Thus, by our definition of the margin, we have
\begin{eqnarray}
\label{eqn:margin}
\|\bar{x}-\frac{1}{2}(c^{*}_r+c^{*}_s)\|=\|\lambda(c^{*}_r-c^{*}_s)\|
\ge \frac{1}{2}\Delta_{rs} 
\end{eqnarray}
Since the projection of $x$ on the line joining $c^t_r, c^t_s$ is closer to $s$, we have
\begin{eqnarray*}
x(c^t_s -c^t_r)\ge \frac{1}{2}(c^t_s-c^t_r)(c^t_s+c^t_r)
\end{eqnarray*}
So
\begin{eqnarray}
\frac{1}{2}(c^{*}_r+c^{*}_s)(c^t_s-c^t_r)
+\lambda(c^{*}_r-c^{*}_s)(c^t_s-c^t_r)\nonumber\\
+u(c^t_s-c^t_r)\ge \frac{1}{2}(c^t_s-c^t_r)(c^t_s+c^t_r)\label{eqn:goodpoints}
\end{eqnarray}
Since $u\perp c^{*}_r-c^{*}_s$, let $\Delta=\Delta^t_s+\Delta^t_r$. We have
$$
u(c^t_s-c^t_r) = u(c^t_s-c^{*}_s-(c^t_r-c^{*}_r))
\le \|u\|\Delta
$$
Rearranging Inequality \eqref{eqn:goodpoints}, we have
\begin{eqnarray*}
 \frac{1}{2}(c^{*}_r+c^{*}_s-c^t_s-c^t_r)(c^t_s-c^t_r)\nonumber\\
 +\lambda(c^{*}_r-c^{*}_s)(c^t_s-c^t_r)
+u(c^t_s-c^t_r)\ge 0\nonumber\\
\equiv
\frac{\Delta^2}{2}+\frac{\Delta}{2}\|c^{*}_r-c^{*}_s\|
-\lambda\|c^{*}_r-c^{*}_s\|^2\nonumber\\
+\lambda\Delta \|c^{*}_r-c^{*}_s\|+\|u\|\Delta\ge 0
\end{eqnarray*}
Therefore,
\begin{eqnarray*}
\|x - c^{*}_r\| = \|(\frac{1}{2}-\lambda)(c^{*}_s-c^{*}_r)+u\|\ge \|u\| \\
\ge \frac{\lambda}{\Delta}\|c^{*}_r-c^{*}_s\|^2-\frac{\Delta}{2}\\
-\frac{1}{2}\|c^{*}_r-c^{*}_s\|
-\lambda \|c^{*}_r-c^{*}_s\| \label{ineq:1}
\ge \frac{\Delta_{rs}\|c^*_r-c^*_s\|}{64\Delta}
\end{eqnarray*}
where the last inequality is by our assumption that $\Delta\le\frac{\Delta_{rs}}{16}$, 
and $\lambda \ge \frac{\Delta_{rs}}{2\|c^{*}_r-c^{*}_s\|}$ by \eqref{eqn:margin}.
Therefore, for all $s\ne r$
\begin{eqnarray*}
|A^{*}_r\cap A^{t}_s|\frac{\Delta^2_{rs}\|c_r^*-c_s^*\|^2}{f\Delta^2} 
\le \sum_{A^{*}_r\cap A^t_s}\|x-c^{*}_r\|^2
\end{eqnarray*}
So
$
|A^{*}_r\cap A^{t}_s|\le \sum_{A^{*}_r\cap A^t_s}\|x-c^{*}_r\|^2
\frac{f(\Delta^t_r+\Delta^t_s)^2}{\Delta_{rs}^2\|c_r^*-c_s^*\|^2}
\le\frac{f b^2}{f^2\phi^*(\frac{1}{n^*_r})}( \sum_{A^{*}_r\cap A^t_s}\|x-c^{*}_r\|^2)
$. That is,
$
\frac{|A^{*}_r\cap A^{t}_s|}{n^*_r}
\le \frac{b^2}{f\phi^*} \sum_{A^{*}_r\cap A^t_s}\|x-c^{*}_r\|^2
$.
Similarly, for all $s\ne r$,
$
\frac{|A^{*}_s\cap A^{t}_r|}{n^*_r}
\le \frac{b^2}{f\phi^*} \sum_{A^{*}_s\cap A^t_r}\|x-c^{*}_s\|^2
$
Summing over all $s\ne r$,
$\frac{|A_{r}\triangle A_r^*|}{n_r^*}
=\rho_{out}+\rho_{in}\le \frac{b^2}{f\phi^*}\phi^* = \frac{b^2}{f}$.
\end{proof}
%%%%%%%%%%%%%%% Lemma: bound k-means distance with centroidal distance
\begin{lm}
\label{lm:kmdist_cdist}
Fix a stationary point $C^*$ with $k$ centroids, and any other set of $k^{\prime}$-centroids, $C$, with $k^{\prime}\ge k$ so
that $C$ has exactly $k$ non-degenerate centroids.
We have
$$
\phi(C)-\phi^*\le \min_{\pi}\sum_{r}n_r^*\|c_{\pi(r)}-c_r^*\|^2 = \Delta(C,C^*)
$$
\end{lm}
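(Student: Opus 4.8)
The plan is to combine two standard facts: the optimality of the Voronoi assignment and the parallel-axis (bias--variance) decomposition of squared-distance cost, exploiting that $C^*$ is stationary so that each $c_r^*$ equals the mean $m(A_r^*)$.

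First I would fix an arbitrary permutation $\pi:[k]\to[k]$ matching the $k$ non-degenerate centroids of $C$ to the clusters of $A^*$. Since $\phi(C)$ is by definition the cost of $C$ under the Voronoi partition it induces, and the Voronoi assignment minimizes the total squared distance over \emph{all} assignments of the points of $X$ to the centroids of $C$, the clustering $A^*$ (read through $\pi$) is merely one such assignment. This gives the bound $\phi(C) \le \sum_{r} \sum_{x \in A_r^*} \|x - c_{\pi(r)}\|^2$. The extra $k'-k$ degenerate centroids are harmless here: assigning every point of $A_r^*$ to $c_{\pi(r)}$ is a legal assignment, so its cost dominates the Voronoi cost regardless of the degenerate centroids.

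Next I would apply the parallel-axis identity cluster by cluster. For the point set $A_r^*$, whose mean is $m(A_r^*) = c_r^*$ by stationarity of $C^*$, we have $\sum_{x\in A_r^*}\|x - c_{\pi(r)}\|^2 = \sum_{x\in A_r^*}\|x - c_r^*\|^2 + n_r^*\|c_{\pi(r)} - c_r^*\|^2 = \phi_r^* + n_r^* \|c_{\pi(r)} - c_r^*\|^2$, where the cross term vanishes precisely because $c_r^*$ is the mean. Summing over $r$ collapses the first terms into $\phi^*$ and yields $\phi(C) - \phi^* \le \sum_r n_r^* \|c_{\pi(r)} - c_r^*\|^2$. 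As this holds for every permutation $\pi$, taking the minimum over $\pi$ gives exactly $\phi(C) - \phi^* \le \Delta(C, C^*)$.

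The argument is essentially a two-line calculation, so the only delicate points are bookkeeping rather than mathematical depth. I would make sure that the minimizing permutation defining $\Delta(C,C^*)$ ranges only over the $k$ non-degenerate centroids (guaranteed by the hypothesis that $C$ has exactly $k$ of them), and that stationarity is invoked in exactly the right place, namely the identification $c_r^* = m(A_r^*)$ that annihilates the cross term in the parallel-axis expansion. I expect no genuine obstacle beyond stating the Voronoi-minimality step for the correct re-assignment; everything else is routine.
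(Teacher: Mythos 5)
Your proposal is correct and is essentially identical to the paper's own proof: the paper also bounds $\phi(C)$ by the cost of the assignment $A^*$ read through an arbitrary permutation $\pi$ (Voronoi optimality), then applies the centroidal property (Lemma \ref{centroidal}, your parallel-axis identity) per cluster using $c_r^*=m(A_r^*)$, and finally minimizes over $\pi$. The handling of degenerate centroids (they contribute nothing to the cost) also matches the paper.
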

\begin{proof}
Since degenerate centroids do not contribute to $k$-means cost, in the following we only consider
the sets of non-degenerate centroids $\{c_s, s\in [k]\}\subset C$ and $\{c_r^*, r\in [k]\}\subset C^*$.
We have for any permutation $\pi$,
\begin{eqnarray*}
\phi(C)-\phi^*
=\sum_s\sum_{x\in A_s}\|x-c_s\|^2-\sum_r\sum_{x\in A^*_r}\|x-c^*_r\|^2\\
\le
\sum_r\sum_{x\in A^*_r}\|x-c_{\pi(r)}\|^2-\sum_r\sum_{x\in A^*_r}\|x-c^*_r\|^2
= \sum_r n_r^*\|c_{\pi(r)}-c_r^*\|^2
\end{eqnarray*}
where the last inequality is by optimality of clustering assignment based on Voronoi diagram,
and the second inequality is by applying the centroidal property in Lemma \ref{centroidal} to each centroid in $C^*$.
Since the inequality holds for any $\pi$, it must holds for 
$\min_{\pi}\sum_{r}n_r^*\|c_{\pi(r)}-c_r^*\|^2$, which completes the proof.
\end{proof}
%%%%%%%%%%%%%%%%% Centroidal property
\begin{lm}[Centroidal property, Lemma 2.1 of \cite{kanungo}]
\label{centroidal}
For any point set $Y$ and any point $c$ in $\mathbb{R}^d$, 
$\phi(c,Y) = \phi(m(Y),Y)+|Y|\|m(Y)-c\|^2$.
\end{lm}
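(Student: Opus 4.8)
The plan is to prove this identity, which is the standard ``parallel axis'' or bias--variance decomposition for sums of squared Euclidean distances, by a direct expansion around the mean $m(Y)$. Writing $m := m(Y) = \frac{1}{|Y|}\sum_{x\in Y} x$ and recalling that $\phi(c,Y) = \sum_{x\in Y}\|x-c\|^2$, the key algebraic move is to insert $m$ into each summand via $x - c = (x - m) + (m - c)$ and expand the squared norm.

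Concretely, I would first expand, for each fixed $x\in Y$,
$$
\|x-c\|^2 = \|x-m\|^2 + 2\langle x-m,\, m-c\rangle + \|m-c\|^2,
$$
which is just the bilinearity of the inner product. Then I would sum this identity over all $x\in Y$, obtaining three separate sums: the first gives $\phi(m,Y) = \phi(m(Y),Y)$, the third gives $|Y|\,\|m-c\|^2$ since $m-c$ does not depend on $x$, and the middle gives $2\langle \sum_{x\in Y}(x-m),\, m-c\rangle$.

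The only thing that genuinely needs verification is that the cross term vanishes, and this is precisely where the definition of $m$ as the mean does the work: $\sum_{x\in Y}(x-m) = \sum_{x\in Y} x - |Y|\,m = |Y|\,m - |Y|\,m = 0$, so the middle sum is zero regardless of $c$. Collecting the surviving two terms yields exactly $\phi(c,Y) = \phi(m(Y),Y) + |Y|\,\|m(Y)-c\|^2$.

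There is no real obstacle here; the ``hard part'' is merely bookkeeping to confirm that the first-order term is annihilated by the defining property of the centroid, after which the statement drops out immediately. I would keep the write-up to the three displayed lines above, since any further detail would be routine.
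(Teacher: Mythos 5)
Your proof is correct and is the standard argument: the paper itself gives no proof of this lemma, deferring instead to Lemma 2.1 of \cite{kanungo}, whose proof is exactly the expansion you give --- write $x-c=(x-m)+(m-c)$, expand the squared norm, and observe that the cross term $2\langle\sum_{x\in Y}(x-m),\,m-c\rangle$ vanishes by the defining property of the mean. Nothing is missing, and your identification of the cross-term cancellation as the only substantive step is exactly right.
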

%%%%%%%%%%%%%%%%%%%%%%%%%%% Proposition: geometry
\begin{prop}
\label{prop:geom}
Assume (B1) (B2) hold for a stationary clustering $A^*$ with corresponding $k$ centroids denoted by $C^*$.
Then, for any $C$ such that 
$\Delta(C,C^*)\le b\phi^*$ for some $b\le \frac{\gamma^2 f(\alpha)^2}{16^2}$,
we have
$
\max_{r\in [k]}\frac{|A_r\triangle A_r^*|}{n_r^*}
\le
\frac{b}{\gamma f(\alpha)^3}
$.
And $C^*$ is a 
$
(\frac{\gamma^2 f(\alpha)^2}{16^2}, \alpha)
$ 
-stable stationary point.
\end{prop}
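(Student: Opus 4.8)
The plan is to reduce everything to the combinatorial displacement bound of Lemma \ref{lm:kumar} (Kumar's Theorem 5.4) and then verify that the resulting symmetric-difference estimate can be cast in the self-referential form demanded by Definition \ref{defn:stable}. Since (B1)(B2) are assumed for the stationary clustering $A^*$, I take stationarity of $C^*=m(A^*)$ as given and need only establish the two displayed conclusions.

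First I would translate the hypothesis $\Delta(C,C^*)\le b\phi^*$ into per-centroid control. Writing $\pi$ for the matching achieving $\Delta$ and relabeling, $\Delta(C,C^*)=\sum_r n_r^*\|c_{\pi(r)}-c_r^*\|^2\le b\phi^*$, so each displacement obeys $\|c_{\pi(r)}-c_r^*\|\le \sqrt{b\phi^*/n_r^*}$. Hence for any pair $r\ne s$,
$$
\|c_{\pi(r)}-c_r^*\|+\|c_{\pi(s)}-c_s^*\|\le \sqrt{b\phi^*}\Big(\tfrac{1}{\sqrt{n_r^*}}+\tfrac{1}{\sqrt{n_s^*}}\Big).
$$
I would then invoke (B1) to replace $\tfrac{1}{\sqrt{n_r^*}}+\tfrac{1}{\sqrt{n_s^*}}$ by $\|c_r^*-c_s^*\|/(f(\alpha)\sqrt{\phi^*})$ and (B2) to replace $\|c_r^*-c_s^*\|$ by $\Delta_{rs}/\gamma$, which collapses the cluster-size factors and yields $\tfrac{\|c_{\pi(r)}-c_r^*\|+\|c_{\pi(s)}-c_s^*\|}{\Delta_{rs}}\le \sqrt{b}/(\gamma f(\alpha))$. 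Because $b\le \gamma^2 f(\alpha)^2/16^2$, this ratio is at most $1/16$, so the precondition $\Delta_r^t+\Delta_s^t\le \Delta_{rs}/16$ of Lemma \ref{lm:kumar} holds with $\Delta_r^t=\|c_{\pi(r)}-c_r^*\|$. Applying that lemma with the constant $\sqrt{b}/(\gamma f(\alpha))$ gives $\max_r \tfrac{|A_{\pi(r)}\triangle A_r^*|}{n_r^*}\le \tfrac{(\sqrt b/(\gamma f(\alpha)))^2}{f(\alpha)}=\tfrac{b}{\gamma^2 f(\alpha)^3}$, the first assertion (of the stated order in $b$ and $f(\alpha)$).

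For the stability claim I would feed this estimate into Definition \ref{defn:stable} with $b_0=\gamma^2 f(\alpha)^2/16^2$. Fix a tolerance $b'\le b_0$ with $\Delta(C,C^*)\le b'\phi^*$; the first part (applied with its parameter set to $b'$) bounds the symmetric-difference ratio by $m:=b'/(\gamma^2 f(\alpha)^3)$, while Lemma \ref{lm:kmdist_cdist} bounds the cost ratio by $\phi(C)/\phi^*\le 1+b'$. The definition asks for some $b\le \alpha b'$ with $m\le b/\big(5b+4(1+\phi(C)/\phi^*)\big)$; since this expression is increasing in $b$, it suffices to take $b=\alpha b'$ and check $m\big(5\alpha b'+4(1+\phi(C)/\phi^*)\big)\le \alpha b'$. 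Cancelling $b'$ and substituting the two bounds reduces this to
$$
5\alpha b'+8+4b'\le \alpha\gamma^2 f(\alpha)^3 .
$$
Plugging $b'\le \gamma^2 f(\alpha)^2/16^2$ and $\gamma\le 1$, the left side is dominated by the cubic-in-$f(\alpha)$ right side exactly under the thresholds $f(\alpha)>64^2$ and $f(\alpha)>\tfrac{5\alpha+5}{256\alpha}$ built into (B1) (the first controls the constant term $8$, the second the $(5\alpha+4)b'$ term), which is the explicit numerical check I would carry out.

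The main obstacle I anticipate is this last step rather than the application of Kumar's lemma, which is essentially bookkeeping. Definition \ref{defn:stable} is stated in the awkward form $m\le b/(5b+4(1+\phi/\phi^*))$ in which the single parameter $b\le\alpha b'$ simultaneously encodes the admissible tolerance and the slope, so I must choose $b$ correctly and then show the induced polynomial inequality is forced by the particular constants in (B1). Threading $\gamma\le 1$ and the large-$f(\alpha)$ regime through that inequality, while keeping track of the benign slack between the $\gamma$-power I obtain and the one recorded in the statement, is where the delicate constant management lives.
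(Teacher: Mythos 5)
Your proof follows the paper's own argument essentially step for step: per-centroid displacement bounds from $\Delta(C,C^*)\le b\phi^*$, assumptions (B1)(B2) to verify the $\Delta_{rs}/16$ precondition of Lemma \ref{lm:kumar}, that lemma for the symmetric-difference bound $b/(\gamma^2 f(\alpha)^3)$, Lemma \ref{lm:kmdist_cdist} for $\phi(C)/\phi^*\le 1+b'$, and the same polynomial inequality checked against the (B1)/(B2) constants to verify Definition \ref{defn:stable} with the choice $b=\alpha b'$. Even the $\gamma^2$-versus-$\gamma$ slack you flag is present in the paper itself, whose proof likewise derives $b/(\gamma^2 f(\alpha)^3)$ despite the proposition stating $b/(\gamma f(\alpha)^3)$.
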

%%%%
\begin{proof}
The condition implies for all $r\in [k]$,
$
\|c_r-c_r^*\|\le \sqrt{\frac{b\phi^*}{n_r^*}}$. 
Then for all $r\ne s$,
\begin{eqnarray*}
\|c_r-c_r^*\|+\|c_s-c_s^*\|
\le\sqrt{b}\sqrt{\phi^*}(\frac{1}{\sqrt{n_r^*}}+\frac{1}{\sqrt{n_s^*}})
=\frac{\sqrt{b}}{\gamma f(\alpha)}\gamma f(\alpha)\sqrt{\phi^*}(\frac{1}{\sqrt{n_r^*}}+\frac{1}{\sqrt{n_s^*}})\\
\le
\frac{\sqrt{b}}{\gamma f(\alpha)}\Delta_{rs}
\le
\frac{1}{16}\Delta_{rs}
\end{eqnarray*} 
where the second inequality is by assumptions (B1) and (B2),
and the last inequality by our assumption on $b$.
Thus, we may apply Lemma \ref{lm:kumar} to get
$
\frac{|A_{r}\triangle A_r^*|}{n_r^*}\le \frac{b}{\gamma^2 f^3(\alpha)}
$ for all $r$, proving the first statement.
Now by Lemma \ref{lm:kmdist_cdist},
$\phi(C)\le (b+1)\phi^*$, so we get
\begin{eqnarray*}
\frac{\alpha b}{5\alpha b+4(1+\frac{\phi(C)}{\phi^*})}
\ge  
\frac{\alpha b}{5\alpha b+4(2+b)}\\
\ge
\frac{\alpha b}{5\alpha \gamma^2f^2(\alpha)/16^2+4(2+\gamma^2f^2(\alpha)/16^2)}
\ge \frac{b}{\gamma^2 f^3(\alpha)}
\ge \frac{|A_{r}\triangle A_r^*|}{n_r^*}
\end{eqnarray*} 
where the last inequality holds
since
$f(\alpha)\ge \frac{5\alpha+5}{16^2\alpha}
$ and $\gamma \ge \frac{8\sqrt{2}}{\sqrt{f}}$ by (B1) and (B2), respectively. 
This proves the second statement since $C^*$ is then $(\frac{\gamma^2 f(\alpha)^2}{16^2}, \alpha)$-stable 
by Definition \ref{defn:stable}.
\end{proof}
%%%%%%%%%%%%%%%%%%%%%%%%%%% Lemmas: seeding guarantee of buckshot
\subsection{Proofs regarding seeding guarantee}
%\textbf{Add some used notations here}
%%%%%%%%%%%%%%%%% Lemma: performance of seeding
\begin{lm}[Theorem 4 of \cite{tang_montel:aistats16}]
\label{lm:adapt_tang}
Assume (B1) and (B2) hold for a stationary clustering $A^*$ with $C^*:=m(A^*)$. If we obtain seeds from Algorithm \ref{alg:seeding}, then
$$
\Delta(C^0,C^*)\le \frac{1}{2}\frac{\gamma^2f(\alpha)^2}{16^2}\phi^*
$$
with probability at least
$
1-m_o\exp(-2(\frac{f(\alpha)}{4}-1)^2w_{\min}^2)-k \exp (-m_op_{\min})
$.
\end{lm}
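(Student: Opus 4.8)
The plan is to split the failure event into a \emph{coverage} failure and a \emph{corruption} failure, bound each, and show that when neither occurs the seeds $C^0=\{\nu_r^*\}$ produced by Algorithm~\ref{alg:seeding} lie inside the stated $\Delta$-ball around $C^*$. The two terms in the claimed failure probability will match these events: $k\exp(-m_o p_{\min})$ for coverage and $m_o\exp(-2(\frac{f(\alpha)}{4}-1)^2 w_{\min}^2)$ for corruption.

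First I would handle coverage. The probability that a fixed cluster $A_r^*$ receives no point in the uniform-with-replacement sample $\{\nu_i,i\in[m_o]\}$ is $(1-n_r^*/n)^{m_o}\le \exp(-m_o n_r^*/n)\le \exp(-m_o p_{\min})$, using $1-x\le e^{-x}$ and $n_r^*/n\ge p_{\min}$. A union bound over the $k$ clusters gives the term $k\exp(-m_o p_{\min})$ and guarantees that cutting the single-linkage dendrogram at $k$ components is never forced to split a genuine cluster for lack of representatives.

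The core is the corruption event. I would call a sampled point \emph{good} if it lands within a radius $\rho_r$ of its own cluster mean $c_r^*$, with $\rho_r$ a small multiple of $\sqrt{\phi^*/n_r^*}$, and establish two facts. \textbf{Per-point concentration:} projecting a single sampled point onto an inter-centroid direction $e_{rs}=(c_r^*-c_s^*)/\|c_r^*-c_s^*\|$ yields a bounded scalar whose mean is the projected cluster mean; applying Hoeffding's lemma to this one bounded variable (a one-term Chernoff bound) controls the deviation, with the threshold calibrated so that the per-point failure probability is $\exp(-2(\frac{f(\alpha)}{4}-1)^2 w_{\min}^2)$, and a union bound over the $m_o$ draws yields the corruption term. \textbf{Deterministic recovery:} conditioned on every cluster being represented and every sampled point being good, I would show that the within-cluster sample diameters (at most $2\rho_r$) are uniformly below the minimum inter-cluster sample distance. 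Here (B1) forces $\|c_r^*-c_s^*\|\ge f(\alpha)\sqrt{\phi^*}(1/\sqrt{n_r^*}+1/\sqrt{n_s^*})$ to dominate the combined spreads $\rho_r+\rho_s$, and (B2), $\Delta_{rs}\ge \gamma\|c_r^*-c_s^*\|$, ensures no good point sits near a bisector. Since single-linkage merges the globally closest pair of components first, once every intra-cluster distance is below every inter-cluster distance, stopping at $k$ components returns exactly the grouping of the sample induced by the true clusters.

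It remains to convert recovery into the quantitative bound. Under recovery, $\nu_r^*$ is the mean of good cluster-$r$ representatives, hence itself within $\rho_r$ of $c_r^*$, so using the identity permutation $\Delta(C^0,C^*)\le \sum_r n_r^*\|\nu_r^*-c_r^*\|^2\le \sum_r n_r^*\rho_r^2$; fixing the constant in $\rho_r$ so that $\sum_r n_r^*\rho_r^2\le \tfrac12\frac{\gamma^2 f(\alpha)^2}{16^2}\phi^*$ and union-bounding the two failure events closes the argument. I expect deterministic recovery to be the main obstacle: a single choice of radius $\rho_r$ must be large enough to make the concentration exponent match $(\frac{f(\alpha)}{4}-1)^2 w_{\min}^2$, yet small enough that both the intra-versus-inter edge comparison \emph{and} the target $\Delta$-ball hold with the exact constants $\tfrac14$, $16$, and $\gamma$ of (B1) and (B2). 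Threading one radius through all three inequalities is the delicate bookkeeping, and is precisely what is adapted from Theorem~4 of \cite{tang_montel:aistats16}.
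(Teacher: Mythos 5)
Your proposal and the paper's proof are not the same argument at all: the paper never re-proves the seeding guarantee. Its proof of Lemma~\ref{lm:adapt_tang} invokes Theorem~4 of \cite{tang_montel:aistats16} as a black box, instantiated with $\mu_r=c_r^*$ and $\nu_r=c_r^0$, which directly yields the per-centroid bound $\|c_r^0-c_r^*\|\le \frac{\sqrt{f(\alpha)}}{2}\sqrt{\phi_r^*/n_r^*}$ for all $r$ with exactly the stated probability; the only work done in the paper is to check that (B1) matches the cited paper's separability definition, to sum over $r$ to get $\sum_r n_r^*\|c_r^0-c_r^*\|^2\le \frac{f(\alpha)}{4}\phi^*$, and to verify $\frac{f(\alpha)}{4}\le \frac{1}{2}\frac{\gamma^2 f(\alpha)^2}{16^2}$, which is precisely (B2)'s constant requirement $\gamma\ge \frac{8\sqrt{2}}{\sqrt{f}}$. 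You instead attempt a from-scratch reconstruction of the cited theorem. The skeleton (coverage failure, corruption failure, strict-separation recovery by single-linkage, then averaging good representatives) is the right shape, and your coverage term $k\exp(-m_o p_{\min})$ is correctly derived.

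The genuine gap is the corruption bound, which is the entire quantitative content you would need to supply. You assert that a one-draw Hoeffding bound on the projection onto $e_{rs}$ can be ``calibrated'' so that the per-point failure probability equals $\exp(-2(\frac{f(\alpha)}{4}-1)^2 w_{\min}^2)$. For a single bounded draw, Hoeffding's exponent is $2t^2/\mathrm{range}^2$, where the range of the projected coordinate is a diameter-type quantity of the dataset; nothing in (B1)--(B2) relates that range to the cluster-weight quantity $w_{\min}$ (a dimensionless fraction), so your mechanism cannot produce the claimed exponent --- it is reverse-engineered from the target statement rather than derived. Two further holes compound this: each sampled point must be good simultaneously with respect to all $O(k^2)$ inter-centroid directions, but your union bound runs only over the $m_o$ draws (your ``good'' notion also silently switches between a radius-$\rho_r$ ball and per-direction projections, which are not equivalent); and you defer to unspecified ``bookkeeping'' the one computation the paper actually carries out, namely that $\sum_r n_r^*\rho_r^2=\frac{f(\alpha)}{4}\phi^*$ fits inside the target radius because $\gamma\ge\frac{8\sqrt{2}}{\sqrt{f}}$. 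As written the proposal does not establish the lemma; you would need either to import Theorem~4 of \cite{tang_montel:aistats16} as the paper does, or to supply the actual concentration mechanism behind its failure probability.
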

\begin{proof}
First note that assumption (B1) satisfies center-separability assumption in Definition 1 of \cite{tang_montel:aistats16}. Therefore, applying Theorem 4 of \cite{tang_montel:aistats16} with $\mu_r=c_r^*$ and $\nu_r=c_r^0$, we get
$\forall r\in [k]$,
$
\|c_r^0-c_r^*\|\le \frac{\sqrt{f(\alpha)}}{2}\sqrt{\frac{\phi_r^*}{n_r^*}}
$
with probability at least
$
1-m_o\exp(-2(\frac{f(\alpha)}{4}-1)^2w_{\min}^2)-k \exp (-m_op_{\min})
$.
Summing over all $r$, the previous event implies
$
\sum_{r}n_r^*\|c_r^0-c_r^*\|^2\le \frac{f(\alpha)}{4}\phi^*
\le \frac{1}{2}\frac{\gamma^2f(\alpha)^2}{16^2}\phi^*
$, where the second inequality is by $\gamma \ge \frac{8\sqrt{2}}{\sqrt{f}}$.
\end{proof}
%%%%%%%%%%%%%% Lemma: augmented
\begin{lm}
\label{lm:augmented}
Assume the conditions Lemma \ref{lm:adapt_tang} hold. For any $\xi>0$, if 
in addition, 
$$
f(\alpha)\ge 5\sqrt{\frac{1}{2w_{\min}}\ln (\frac{2}{\xi p_{\min}}\ln\frac{2k}{\xi})}
$$
%$\max \{k,\frac{\ln \frac{2k}{\delta}}{p_{\min}}\}<m<\ln\frac{\delta}{2}+2(\frac{f}{4}-1)^2w_{\min}^2$
If we obtain seeds from Algorithm \ref{alg:seeding} choosing 
$$
\frac{\ln \frac{2k}{\xi}}{p_{\min}}<m_o<\frac{\xi}{2}\exp\{2(\frac{f(\alpha)}{4}-1)^2w_{\min}^2\}
$$ 
Then
$\Delta(C^0,C^*)\le \frac{1}{2}\frac{\gamma^2f(\alpha)^2}{16^2}\phi^*$
with probability at least
$
1-\xi
$.
\end{lm}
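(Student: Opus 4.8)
The plan is to obtain Lemma \ref{lm:augmented} as a direct corollary of Lemma \ref{lm:adapt_tang} by choosing the seeding sample size $m_o$ so that the failure probability guaranteed there is at most $\xi$. Lemma \ref{lm:adapt_tang} already establishes that the target event $\{\Delta(C^0,C^*)\le \tfrac{1}{2}\tfrac{\gamma^2 f(\alpha)^2}{16^2}\phi^*\}$ fails with probability at most $m_o\exp(-2(\tfrac{f(\alpha)}{4}-1)^2 w_{\min}^2)+k\exp(-m_o p_{\min})$, so the entire task reduces to showing this sum is bounded by $\xi$. First I would split the budget symmetrically and require each of the two tail terms to be at most $\tfrac{\xi}{2}$; a union bound over the two failure modes then yields total failure probability at most $\xi$, i.e. success probability at least $1-\xi$, which is exactly the claimed conclusion (note the event itself is identical to that in Lemma \ref{lm:adapt_tang}, so no further geometric work is needed).

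The two half-budget requirements decouple cleanly into a lower and an upper bound on $m_o$. Demanding $k\exp(-m_o p_{\min})\le \tfrac{\xi}{2}$ is equivalent, after taking logarithms, to $m_o\ge \tfrac{\ln(2k/\xi)}{p_{\min}}$, which is precisely the prescribed lower bound; this term originates from the event that some target cluster receives no sampled point in the single-linkage stage. Symmetrically, demanding $m_o\exp(-2(\tfrac{f(\alpha)}{4}-1)^2 w_{\min}^2)\le \tfrac{\xi}{2}$ is equivalent to $m_o\le \tfrac{\xi}{2}\exp\{2(\tfrac{f(\alpha)}{4}-1)^2 w_{\min}^2\}$, the prescribed upper bound; this controls the concentration term governing how tightly the sample means within each component track the true cluster centroids. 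Any $m_o$ chosen strictly inside the stated open interval therefore drives both terms strictly below $\tfrac{\xi}{2}$, and the union bound closes the argument.

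The only genuine point to verify — and the main obstacle — is that this interval is non-empty, i.e. that $\tfrac{\ln(2k/\xi)}{p_{\min}}$ is strictly below $\tfrac{\xi}{2}\exp\{2(\tfrac{f(\alpha)}{4}-1)^2 w_{\min}^2\}$, since otherwise no admissible $m_o$ exists. This is exactly where the extra hypothesis on $f(\alpha)$ is needed. I would take logarithms of the non-emptiness inequality and rearrange it into the form $(\tfrac{f(\alpha)}{4}-1)^2 > \tfrac{1}{2w_{\min}^2}\ln\!\big(\tfrac{2}{\xi p_{\min}}\ln\tfrac{2k}{\xi}\big)$, and then check that the assumed bound $f(\alpha)\ge 5\sqrt{\tfrac{1}{2w_{\min}}\ln(\tfrac{2}{\xi p_{\min}}\ln\tfrac{2k}{\xi})}$ implies it, the constant $5$ and the quartering providing enough slack to absorb the $-1$ shift. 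This step is routine but slightly delicate because of the nested logarithms and the interplay of the powers of $w_{\min}$; once it is in place, the existence of a valid $m_o$ is guaranteed and the probability bound $1-\xi$ follows at once.
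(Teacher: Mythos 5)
Your proposal is correct and follows essentially the same route as the paper's own proof: bound each of the two failure terms from Lemma \ref{lm:adapt_tang} by $\xi/2$, read off the resulting lower and upper bounds on $m_o$, and use the extra hypothesis on $f(\alpha)$ to guarantee the admissible interval for $m_o$ is non-empty. Incidentally, your rearranged non-emptiness condition with the factor $\frac{1}{2w_{\min}^2}$ is the algebraically correct one; the paper writes $\frac{1}{2w_{\min}}$ at that step (and in the lemma statement), which is a minor inconsistency with the $w_{\min}^2$ appearing in the exponentials.
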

\begin{proof}
By Lemma \ref{lm:adapt_tang},
a sufficient condition for the success probability to be at least $1-\xi$ is:
$$
m_o\exp(-2(\frac{f(\alpha)}{4}-1)^2w^2_{\min})\le \frac{\xi}{2}
$$ and 
$$
k\exp(-m_o p_{\min})\le \frac{\xi}{2}
$$ 
This translates to requiring
$$
\frac{1}{p_{\min}}\ln \frac{2k}{\xi}\le m_o\le  \frac{\xi}{2}\exp(2(\frac{f(\alpha)}{4}-1)^2w^2_{\min})
$$
Note for this inequality to be possible, we also need
$
\frac{1}{p_{\min}}\ln \frac{2k}{\xi}\le  \frac{\xi}{2}\exp(2(\frac{f(\alpha)}{4}-1)^2w^2_{\min})
$, 
imposing a constraint on $f(\alpha)$. 
Taking logarithm on both sides and rearrange, we get
$$
(\frac{f(\alpha)}{4}-1)^2\ge \frac{1}{2w_{\min}}\ln (\frac{2}{\xi p_{\min}}\ln\frac{2k}{\xi})
$$
That is,
$
f(\alpha)\ge 5\sqrt{\frac{1}{2w_{\min}}\ln (\frac{2}{\xi p_{\min}}\ln\frac{2k}{\xi})}
$.
\end{proof}
%%%%%%%%%%%%%%%%%%%%%%%%%%%
%%%%%%%%%%%%%%%%%%% Theorem: convergence with good initialization
\begin{proof}[Proof of Theorem \ref{thm:solution}]
We first show clusterability of the dataset
implies stability of the optimal solution:
since $C^{opt}$, which is necessarily a stationary point, satisfies (B1)(B2),
$C^{opt}$ is a $(\frac{\gamma f(\alpha)^2}{16^2},\alpha)$-stable stationary point by Proposition \ref{prop:geom}.
Let $b_0:=\frac{\gamma f(\alpha)^2}{16^2}$,
and we denote event
$
F:=\{\Delta(C^0,C^{opt})\le \frac{1}{2}b_0\phi^{opt}\}
$.
Since 
$
f(\alpha)\ge 5\sqrt{\frac{1}{2w_{\min}}\ln (\frac{2}{\xi p_{\min}}\ln\frac{2k}{\xi})}
$, 
and 
$
\frac{\log\frac{2k}{\xi}}{p_{\min}}
<m_o
<\frac{\xi}{2}\exp\{2(\frac{f(\alpha)}{4}-1)^2w_{\min}^2\}
$, we can apply
Lemma \ref{lm:augmented} to get
$$
Pr\{F\}\ge 1-\xi
$$ 
Conditioning on $F$, we can invoke Theorem \ref{thm:mbkm_local},
with $c^{\prime},t_o$ sufficiently large;
both depends on $p_r^t(m)$ as defined in Theorem \ref{thm:mbkm_local}.
Since $C^{opt}$ is $(b_0,\alpha)$-stable, conditioning on $F$
$$
\max_r\frac{A_r^t\triangle A_r^{opt}}{n_r^{opt}}
\le 
\frac{b_0}{\gamma f(\alpha)^3}
\le
\frac{\gamma}{16^2 f(\alpha)}
$$
So
$$
\min_{r,t} p_r^t(1)|F \ge p_{\min} - \max_r\frac{A_r^t\triangle A_r^{opt}}{n_r^{opt}}
\ge p_{\min} - \frac{\gamma}{16^2 f(\alpha)}
$$
And 
$$
\min_{r,t} p_r^t(m)|F
\ge 1 - [1-(p_{\min} - \frac{\gamma}{16^2 f(\alpha)})]^m
$$
where $p_{\min} - \frac{\gamma}{16^2 f(\alpha)}>\sqrt{\alpha}$ by (B3).
To apply Theorem \ref{thm:mbkm_local}, it is sufficient to have
\begin{eqnarray*}
\beta := 2c^{\prime}\min_{r,t}p_r^t(m)(1-\max_{t}a^t\sqrt{\alpha}))
\ge 2c^{\prime}(\min_{r,t}p_r^t(m)-\sqrt{\alpha})\\
\ge
2c^{\prime}(1-\sqrt{\alpha}- [1-(p_{\min} - \frac{\gamma}{16^2 f(\alpha)})]^m)
>
2c^{\prime}(1-\sqrt{\alpha}- [1-\sqrt{\alpha}]^m)
>1
\end{eqnarray*}
by our requirement that $m>1$, and $c^{\prime}>\frac{1}{2[1-\sqrt{\alpha}-(1-\sqrt{\alpha})^m]}$.
\begin{eqnarray*}
t_0
\ge 
\max\left\{
\substack{
(\frac{16(c^{\prime})^2B}{(1 - [1-(p_{\min} - \frac{\gamma}{16^2 f(\alpha)})]^m)^2\Delta^0})^{\frac{1}{\beta-1}},\\
[\frac{48(c^{\prime})^2B^2}{(1 - [1-(p_{\min} - \frac{\gamma}{16^2 f(\alpha)})]^m)^2(\beta-1)(\Delta^0)^2}\ln\frac{1}{\delta}]
^{\frac{2}{\beta-1}},(\ln\frac{1}{\delta})^{\frac{2}{\beta-1}}
}\right\}\\
\ge 
\max\{
(\frac{16a_{\max}^2B}{\Delta^0})^{\frac{1}{\beta-1}},
[\frac{48a_{\max}^2B^2}{(\beta-1)(\Delta^0)^2}\ln\frac{1}{\delta}]^{\frac{2}{\beta-1}},
(\ln\frac{1}{\delta})^{\frac{2}{\beta-1}}
\}
\end{eqnarray*}
with
$
a_{\max}:=\frac{c^{\prime}}{\min_{r,t}p_r^t(m)}
$.
Both are satisfied by our requirement on $c^{\prime}$ and $t_o$, so applying Theorem \ref{thm:mbkm_local} we get
$\forall t\ge 1$,
$$
E\{\Delta^t|\Omega_t,F\}=E\{\Delta^t|\Omega_t\}=O(\frac{1}{t})
$$
where $\Omega_t:=\{\Delta^{t-1} \le b_0\phi^{opt}\}$
and 
$
Pr\{\Omega_t|F\}\ge 1-\delta
$.
So
$$
Pr\{\Omega_t,F\}
=Pr\{\Omega_t|F\}Pr\{F\}
\ge (1-\delta)(1-\xi)
$$
Finally, using Lemma \ref{lm:kmdist_cdist}, and letting $G_t:=\Omega_t\cap F$, we get
the desired result.
\end{proof}
%%%%%%%%%%%%%%%%%%%%%% Section: technical lemmas
\section{Appendix D: technical lemmas}
%%%%%%%%%%%%%%%%%%%%%%%%%%%
\begin{lm}\label{techlm:avgAlg}
Let $w_t, g_t$ denote vectors of dimension $\mathbb{R}^{d}$ at time $t$. If we choose $w_0 $ arbitrarily, and for $t=1\dots T$, we repeatdly apply the following update
$$
w_t = (1-\frac{1}{t})w_{t-1}+\frac{1}{t}g_t
$$
Then
$$
w_T = \frac{1}{T}\sum_{t=1}^{T}g_t
$$ 
\end{lm}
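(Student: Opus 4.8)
The plan is to prove this by straightforward induction on $T$, since the recursion is a weighted averaging update whose weights are designed precisely so that each $g_t$ ends up with equal weight $1/T$ in $w_T$. The only genuinely noteworthy feature is that the arbitrary initialization $w_0$ must disappear entirely, and I would point out that this happens already at the very first step.

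First I would establish the base case $T=1$. Plugging $t=1$ into the update gives $w_1=(1-\tfrac{1}{1})w_0+\tfrac{1}{1}g_1=0\cdot w_0+g_1=g_1$, which equals $\tfrac{1}{1}\sum_{t=1}^1 g_t$. This is the step that absorbs the arbitrary choice of $w_0$: the coefficient $(1-1/t)$ vanishes exactly when $t=1$, so no dependence on $w_0$ survives.

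Next I would carry out the inductive step. Assuming the claim holds at $T-1$, i.e. $w_{T-1}=\tfrac{1}{T-1}\sum_{t=1}^{T-1}g_t$, I substitute into the update rule:
\begin{eqnarray*}
w_T=\left(1-\frac{1}{T}\right)w_{T-1}+\frac{1}{T}g_T
=\frac{T-1}{T}\cdot\frac{1}{T-1}\sum_{t=1}^{T-1}g_t+\frac{1}{T}g_T
=\frac{1}{T}\sum_{t=1}^{T-1}g_t+\frac{1}{T}g_T
=\frac{1}{T}\sum_{t=1}^{T}g_t.
\end{eqnarray*}
The factor $\tfrac{T-1}{T}$ cancels the $\tfrac{1}{T-1}$ from the inductive hypothesis, which is the crux of why the weights equalize, and this completes the induction.

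There is essentially no hard part here; the lemma is a routine verification and the main thing to get right is the bookkeeping of the telescoping coefficients and the base case. An equivalent non-inductive route would be to unroll the recursion and observe that the coefficient of $g_j$ in $w_T$ is $\tfrac{1}{j}\prod_{t=j+1}^{T}(1-\tfrac{1}{t})=\tfrac{1}{j}\prod_{t=j+1}^{T}\tfrac{t-1}{t}=\tfrac{1}{j}\cdot\tfrac{j}{T}=\tfrac{1}{T}$ by telescoping, but I would favor the induction as cleaner and less error-prone.
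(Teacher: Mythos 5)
Your proof is correct and is essentially identical to the paper's own argument: induction on $T$, with the base case $T=1$ absorbing the arbitrary $w_0$ and the inductive step using the cancellation $\frac{T-1}{T}\cdot\frac{1}{T-1}=\frac{1}{T}$ (the paper indexes the step as $T\to T+1$ rather than $T-1\to T$, which is immaterial). The telescoping-product remark is a fine alternative but adds nothing beyond the paper's route.
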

\begin{proof}
We prove by induction on $T$. 
For $T=1$, $w_1 = (1-1)w_0+g_1 = \frac{1}{1}\sum_{t=1}^{1}g_t$. So the claim holds for $T=1$.

Suppose the claim holds for $T$, then for $T+1$, by the update rule
\begin{eqnarray*}
w_{T+1}=(1-\frac{1}{T+1})w_T + \frac{1}{T+1}g_{T+1}\\
=(1-\frac{1}{T+1}) \frac{1}{T}\sum_{t=1}^{T}g_t + \frac{1}{T+1}g_{T+1}\\
=\frac{T}{T+1}\frac{1}{T}\sum_{t=1}^{T}g_t+ \frac{1}{T+1}g_{T+1}\\
=\frac{1}{T+1}\sum_{t=1}^{T+1}g_t
\end{eqnarray*}
So the claim holds for any $T\ge 1$.
\end{proof}
%%%%%%%%%%%%%%%%%%%%%%%%%%
%%%% Lemma: technical-1
\begin{lm}[technical lemma]
\label{lm:tech}
For $\beta\in (1,2]$.
If $C\ge\frac{\beta-1}{3},\delta\le\frac{1}{e}$, and $t\ge (\frac{3C}{\beta-1}\ln\frac{1}{\delta})^{\frac{2}{\beta-1}}$, then
$
t^{\beta-1}-2C\ln t - C\ln\frac{1}{\delta}> 0
$. 
\end{lm}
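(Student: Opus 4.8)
The plan is to treat the left-hand side as a single function of $t$ and reduce the claim to checking it at the threshold, together with a monotonicity argument. Write $s := \beta - 1 \in (0,1]$ and $L := \ln\frac{1}{\delta} \ge 1$ (the latter because $\delta \le \frac1e$), and set $t_0 := (\frac{3C}{s}L)^{2/s}$, the hypothesized lower bound on $t$. Define $g(t) := t^{s} - 2C\ln t - CL$; the goal is to show $g(t) > 0$ for every $t \ge t_0$.

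First I would show that $g$ is nondecreasing on $[t_0,\infty)$. Differentiating, $g'(t) = \frac{1}{t}(s t^{s} - 2C)$, and since $s t^{s} - 2C$ is increasing in $t$ it suffices to check $s t_0^{s} \ge 2C$. By construction $t_0^{s} = (\frac{3C}{s}L)^{2} = \frac{9C^2L^2}{s^2}$, hence $s t_0^{s} = \frac{9C^2L^2}{s}$, and the hypotheses $C \ge \frac{s}{3}$ and $L\ge 1$ give $\frac{9C^2L^2}{s} \ge 3CL^2 \ge 3C \ge 2C$. Thus $g'(t)\ge 0$ on $[t_0,\infty)$, and it remains only to establish $g(t_0) > 0$.

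For the boundary value I would substitute $w := \frac{3C}{s}L$, noting $w \ge 1$ since $\frac{3C}{s}\ge 1$ and $L\ge 1$. Using $t_0^{s} = w^2$, $\ln t_0 = \frac{2}{s}\ln w$, and $C = \frac{sw}{3L}$, a short computation rewrites $g(t_0)$ as $w\bigl(w - \frac{4\ln w}{3L} - \frac{s}{3}\bigr)$. Since $\ln w \ge 0$, $L \ge 1$, and $s \le 1$, this is at least $w\,\psi(w)$ where $\psi(w) := w - \frac{4}{3}\ln w - \frac13$. Minimizing $\psi$ over $[1,\infty)$: $\psi'(w) = 1 - \frac{4}{3w}$ vanishes at $w = \frac43$, where $\psi(\frac43) = 1 - \frac43\ln\frac43 > 0$; hence $\psi > 0$ throughout, giving $g(t_0) > 0$. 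Combined with the monotonicity step, $g(t) \ge g(t_0) > 0$ for all $t \ge t_0$.

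The one delicate point is the boundary estimate. The tempting shortcut of bounding $\ln t \le \frac{2}{s}t^{s/2}$ is far too lossy here: it replaces a logarithm of $C$ and $L$ by a quantity polynomial in them, and the resulting inequality fails when $L$ is near $1$. The argument must therefore use the \emph{exact} value of $\ln t_0$ at the threshold rather than bounding it by a power of $t_0$, which is precisely what the substitution $w = \frac{3C}{s}L$ accomplishes. Everything else reduces to the routine single-variable calculus facts verified above.
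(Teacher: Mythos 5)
Your proof is correct and takes essentially the same route as the paper's: show the function is nondecreasing beyond the threshold (via the derivative condition $(\beta-1)t^{\beta-1}\ge 2C$ there), then verify strict positivity at the threshold point itself. The only difference is cosmetic — your substitution $w=\frac{3C}{\beta-1}\ln\frac{1}{\delta}$ and minimization of $\psi(w)=w-\frac{4}{3}\ln w-\frac{1}{3}$ replaces the paper's decomposition of $f(t_0)$ into two positive terms (using $x-\ln(2x)>0$), and your version is in fact slightly cleaner, as the paper's displayed ``equality'' for $f(t_0)$ actually only holds as an inequality via $\ln\frac{1}{\delta}\ge 1$, a slip your exact computation avoids.
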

\begin{proof}
Let $f(t):=t^{\beta-1}-2C\ln t - C\ln\frac{1}{\delta}$.
Taking derivative, we get
$f^{\prime}(t)=(\beta-1)t^{\beta-2}-\frac{2C}{t}\ge0$ when $t\ge (\frac{2C}{\beta-1})^{\frac{1}{\beta-1}}$. 
Since $\ln\frac{1}{\delta}\frac{3C}{\beta-1}\ge\frac{3C}{\beta-1}\ge1$,
$(\ln\frac{1}{\delta}\frac{3C}{\beta-1})^{\frac{2}{\beta-1}}\ge(\frac{2C}{\beta-1})^{\frac{1}{\beta-1}}$,
it suffices to show $f((\ln\frac{1}{\delta}\frac{3C}{\beta-1})^{\frac{2}{\beta-1}})> 0$ for our statement to hold.
$f((\ln\frac{1}{\delta}\frac{3C}{\beta-1})^{\frac{2}{\beta-1}})
=(\ln\frac{1}{\delta}\frac{3C}{\beta-1})^{2}
-2C\ln\{(\ln\frac{1}{\delta}\frac{3C}{\beta-1})^{\frac{2}{\beta-1}}\}-C\ln\frac{1}{\delta}
=(\ln\frac{1}{\delta})^{2}\frac{9C^2}{(\beta-1)^2}
-\frac{4C}{\beta-1}\ln(\ln\frac{1}{\delta}\frac{3C}{\beta-1})
-C\ln\frac{1}{\delta}
=\frac{4C}{\beta-1}[\frac{\frac{3}{2}C}{\beta-1}\ln\frac{1}{\delta}
-\ln(\frac{3C}{\beta-1}\ln\frac{1}{\delta})]
+C\ln\frac{1}{\delta}[\frac{3C}{(\beta-1)^2}-1]
>0
$,
where the first term is greater than zero because $x-\ln(2x)>0$ for $x>0$,
and the second term is greater than zero by our assumption on $C$.
\end{proof}
%%%%%%%%%%%%% Lemma: technical-2
\begin{lm}[Lemma D1 of \cite{balsubramani13}]
\label{lm:tech_2}
Consider a nonnegative sequence $(u_t: t\ge t_o)$, such that for some constants $a,b>0$
and for all $t>t_{o}\ge 0$,
$u_t\le (1-\frac{a}{t})u_{t-1}+\frac{b}{t^2}$.
Then, if $a>1$,
$$
u_t\le (\frac{t_o+1}{t+1})^a u_{t_o} + \frac{b}{a-1}(1+\frac{1}{t_o+1})^{a+1}\frac{1}{t+1}
$$
\end{lm}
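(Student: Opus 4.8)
The plan is to prove the bound by induction on $t$, since the target expression $(\frac{t_o+1}{t+1})^a u_{t_o} + \frac{b}{a-1}(1+\frac{1}{t_o+1})^{a+1}\frac{1}{t+1}$ is shaped precisely so that it reproduces itself under one application of the recurrence $u_t \le (1-\frac{a}{t})u_{t-1}+\frac{b}{t^2}$. Abbreviating the coefficient of the inhomogeneous term as $K:=\frac{b}{a-1}(1+\frac{1}{t_o+1})^{a+1}$, the base case $t=t_o$ holds trivially: the homogeneous factor is $1$ and the remaining term $\frac{K}{t_o+1}$ is nonnegative, so $u_{t_o}$ is at most the claimed right-hand side. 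For the inductive step I would assume the bound at $t-1$, substitute it into the recurrence, and argue that the homogeneous and inhomogeneous contributions are \emph{separately} absorbed into the level-$t$ bound.

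For the homogeneous part I need $(1-\frac{a}{t})(\frac{t_o+1}{t})^a \le (\frac{t_o+1}{t+1})^a$, which reduces to $1-\frac{a}{t}\le(\frac{t}{t+1})^a$; this follows from Bernoulli's inequality, $(1-\frac{1}{t+1})^a \ge 1-\frac{a}{t+1}\ge 1-\frac{a}{t}$ for $a\ge 1$. For the inhomogeneous part I need $(1-\frac{a}{t})\frac{K}{t}+\frac{b}{t^2}\le\frac{K}{t+1}$. Using $\frac{K}{t}-\frac{K}{t+1}=\frac{K}{t(t+1)}$ and clearing denominators, this is equivalent to $\frac{Kt}{t+1}\le Ka-b$; bounding $\frac{Kt}{t+1}\le K$ reduces it to $K(a-1)\ge b$, and here is exactly where the exponent $a+1$ earns its keep, since $K(a-1)=b(1+\frac{1}{t_o+1})^{a+1}\ge b$. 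Adding the two verified inequalities closes the induction.

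The main obstacle is really just identifying why the specific constant $(1+\frac{1}{t_o+1})^{a+1}$ (rather than a naive $(1+\frac{1}{t_o+1})^{a}$) is what lets the inductive step close, together with one genuine subtlety: for small $t$ with $1-\frac{a}{t}<0$ the substitution ``$u_{t-1}\le\text{bound}\Rightarrow(1-\frac{a}{t})u_{t-1}\le(1-\frac{a}{t})\text{bound}$'' reverses and the argument above does not apply verbatim. I would dispose of this either by assuming $t_o\ge a$ (so $1-\frac{a}{t}>0$ for all $t>t_o$, which holds in every application of the lemma in this paper, where $t_o$ is taken large), or, in the general case, by invoking nonnegativity of the sequence to get $u_t\le\frac{b}{t^2}$ directly and checking $\frac{b}{t^2}$ is dominated by the claimed right-hand side. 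As an alternative derivation that makes the $\Theta(1/t)$ decay transparent, one can instead unroll the recurrence into $\bigl[\prod_{\tau=t_o+1}^{t}(1-\frac{a}{\tau})\bigr]u_{t_o}+\sum_{\tau=t_o+1}^{t}\bigl[\prod_{s=\tau+1}^{t}(1-\frac{a}{s})\bigr]\frac{b}{\tau^2}$, bound each product by $(\frac{\tau+1}{t+1})^a$ using $1-x\le e^{-x}$ and $\sum_s\frac{1}{s}\ge\ln\frac{t+1}{\tau+1}$, and estimate the residual sum $\sum_\tau\tau^{a-2}$ by the integral $\frac{(t+1)^{a-1}}{a-1}$; the $(1+\frac{1}{t_o+1})$-type factor then reappears from the comparison $(\tau+1)^a\le\tau^a(1+\frac{1}{t_o+1})^a$.
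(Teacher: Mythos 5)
Your induction is correct, but note that the paper does not prove this lemma at all: it is imported verbatim as Lemma D1 of \cite{balsubramani13}, so there is no internal proof to compare against. Your argument is therefore a useful self-contained replacement. The inductive step checks out: the homogeneous part needs $1-\frac{a}{t}\le\bigl(\frac{t}{t+1}\bigr)^a$, which is exactly Bernoulli, and the inhomogeneous part reduces, as you compute, to $K(a-1)\ge b$, which holds since $K(a-1)=b\bigl(1+\frac{1}{t_o+1}\bigr)^{a+1}\ge b$. One correction of emphasis: for that step any $K\ge\frac{b}{a-1}$ would do, so the specific factor $\bigl(1+\frac{1}{t_o+1}\bigr)^{a+1}$ does not ``earn its keep'' there; where it genuinely matters is the degenerate case $t_o<t<a$ that you flag at the end. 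There, after using nonnegativity to get $u_t\le\frac{b}{t^2}$, one must verify $\frac{b}{t^2}\le\frac{K}{t+1}$, i.e.\ $(a-1)\frac{t+1}{t^2}\le\bigl(1+\frac{1}{t_o+1}\bigr)^{a+1}$; since $\frac{t+1}{t^2}$ is decreasing, the worst case is $t=t_o+1=:s$, where the inequality reads $(a-1)s^{a-1}\le(s+1)^a$, and this follows from Bernoulli again via $(s+1)^a=s^a\bigl(1+\frac{1}{s}\bigr)^a\ge s^a\cdot\frac{a}{s}=as^{a-1}$. With that check spelled out, your proof is complete and handles all $t_o\ge 0$, which is needed here since the paper invokes the lemma with the convention that the local phase restarts at a small index. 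Your alternative unrolling sketch should be treated more cautiously: the bound $1-x\le e^{-x}$ does not control products containing negative factors ($x>1$), so it too would need the same degenerate-case split; the induction is the cleaner route.
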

%%%%%%%%%%
%\begin{figure}[t]
%  \centering
%  \includegraphics[width = 0.8\textwidth, angle=-90]{seed_compare}
%  \caption{Comparison of seeding performance}
%\end{figure}
%%%%%%%%%%
%\begin{table}
%\caption{Comparison of seeding time}
%\label{sample-table}
%\vskip 0.15in
%\begin{center}
%\begin{small}
%\begin{sc}
%\begin{tabular}{llll}
%\hline
% k, dataset   & buckshot       & kmpp           & random     \\
%\hline
% (3,rcv1)     & 0.54(0.03)     & 7.00(0.18)     & 0.53(0.02) \\
% (3,mnist)    & 0.04(0.01)     & 0.23(0.12)     & 0.05(0.02) \\
% (3,gauss)    & 0.41(0.02)     & 0.07(0.02)     & 0.44(0.01) \\
% (10,rcv1)    & 0.85(0.05)     & 31.18(0.29)    & 0.53(0.04) \\
% (10,mnist)   & 0.04(0.00)     & 0.61(0.15)     & 0.05(0.02) \\
% (10,gauss)   & 0.43(0.00)     & 0.20(0.01)     & 0.41(0.01) \\
% (50,rcv1)    & 19.27(0.41)    & 166.73(2.21)   & 0.48(0.03) \\
% (50,mnist)   & 0.43(0.05)     & 2.76(0.15)     & 0.03(0.01) \\
% (50,gauss)   & 0.73(0.04)     & 1.25(0.08)     & 0.46(0.00) \\
% (100,rcv1)   & 264.66(324.54) & 440.51(159.77) & 0.53(0.03) \\
% (100,mnist)  & 2.82(0.08)     & 5.39(0.04)     & 0.03(0.00) \\
% (100,gauss)  & 2.02(0.06)     & 1.87(0.03)     & 0.41(0.03) \\
% (200,rcv1)   & 586.85(19.65)  & 893.33(351.92) & 0.61(0.02) \\
% (200,mnist)  & 23.04(0.67)    & 10.58(0.03)    & 0.03(0.01) \\
% (200,gauss)  & 11.14(0.19)    & 4.04(0.14)     & 0.43(0.01) \\
%\hline
%\end{tabular}
%\end{sc}
%\end{small}
%\end{center}
%\vskip -0.1in
%\end{table}
\end{document}